\setlist[enumerate]{leftmargin=.5in}
\setlist[itemize]{leftmargin=.5in}
\crefname{hypothesis}{Hypothesis}{Hypotheses}
\title{Weighted Spectral Filters  for  Kernel Interpolation on Spheres: Estimates of Prediction Accuracy for Noisy Data
\thanks{The corresponding author is S. B. Lin (\email{sblin1983@gmail.com}).
		\funding{This work was partially
			supported by the  National Key R\&D Program of China (No.2020YFA0713900) and the
			National Natural Science Foundation of China [Grant Nos. 62276209].  }}}
\author{Xiaotong Liu, Jinxin Wang, Di Wang and Shao-Bo Lin\thanks{X. Liu, J. Wang, D. Wang and S. B. Lin are with the Center for Intelligent Decision-Making and Machine Learning, School of Management, Xi'an Jiaotong University, Xi'an 710049, China.}}
\DeclareMathOperator{\diag}{diag}
\begin{document}
\begin{sloppypar} 
	
\maketitle

\begin{abstract}
Spherical radial-basis-based kernel interpolation abounds in image sciences  including geophysical image reconstruction, climate trends description   and image rendering due to its excellent spatial localization  property and perfect approximation performance. However, in dealing with noisy data, kernel interpolation frequently behaves not so well due to the large condition number of the kernel matrix and instability of the interpolation process. In this paper, we introduce a weighted spectral filter approach to reduce the condition number of the kernel matrix and then stabilize kernel interpolation. The main building blocks of the proposed method are the well developed  spherical positive quadrature rules and  high-pass spectral filters. Using a recently developed integral operator approach for spherical data analysis, we theoretically demonstrate that the proposed  weighted spectral filter approach succeeds in breaking through the bottleneck of kernel interpolation, especially in fitting  noisy data.  We provide optimal approximation rates of the new method to show that our approach does not compromise the predicting accuracy. Furthermore, we conduct both toy simulations and two real-world data experiments with synthetically added noise in geophysical image reconstruction and climate image processing to verify our theoretical assertions and show the feasibility of the weighted spectral filter approach.
\end{abstract}

\begin{keywords}
Spectral filter,  kernel interpolation, approximation, sphere
\end{keywords}

\begin{MSCcodes}
68T05, 94A12, 41A35




\end{MSCcodes}

\section{Introduction}
Spherical data, typically appearing in the form of    circular or directional data,  abound in geophysics \cite{freeden1998constructive}, planetary science \cite{wieczorek1998potential}, meteorology \cite{wang2015absolute},  motion prediction \cite{lang2019numerical},  signal recovery \cite{mcewen2011novel} and image processing \cite{steinke2010nonparametric}.    Spherical data analysis not only enables practitioners to gain deeper insights into the underlying patterns  of the data, but also guides them to make more robust and accurate predictions. For example, in geophysics \cite{freeden1998constructive}, data are collected from satellites,
 and spherical data analysis is carried out  to study the Earth's magnetic field;  in meteorology \cite{wang2015absolute}, data are collected via observing the Earth's atmosphere, and data analysis aims to create models of weather patterns, climate change, and other atmospheric phenomena; in image processing \cite{tsai2010modeling}, data are captured using specialized cameras on a spherical surface, and analyzing spherical data is  to create immersive virtual reality and panoramic images.

In many settings of image processing, including geophysical image reconstruction \cite{alken2021international}, climate change trends  \cite{li1999multiscale} and image rendering \cite{tsai2006all},    data formulated as input-output pairs are collected over the sphere  and  data analysis is conducted  to design efficient algorithms to find an estimator  that provides accurate predictions for new query points. Radial-basis-based kernel interpolation \cite{hubbert2015spherical} (or kernel interpolation for short) is widely used for this purpose, mainly due to its flexibility, accuracy and efficiency.
The flexibility means that kernel interpolation (KI) succeeds in interpolating data on any surface that can be parameterized by a set of spherical coordinates, making it a powerful  tool for a wide range of applications, from modeling the Earth's atmosphere to creating 3D models of planets and other celestial bodies \cite{fornberg2015solving,flyer2014radial,tsai2010modeling}. The accuracy shows that KI  frequently provides accurate predictions when dealing with clean data \cite{narcowich2002scattered,narcowich2007direct,hangelbroek2010kernel,hangelbroek2011kernel}, as it is capable of representing complex functions by using a small number of basis functions. The efficiency illustrates that, for a given kernel,  KI is attained by solving    linear least squares, which is much more user-friendly than   deep learning  \cite{feng2021generalization} and dictionary learning \cite{sun2013dictionary} for spherical data. However, like any other interpolation methods, KI suffers from the instability issue in the sense that its approximation performance changes  drastically with respect to the noise of outputs  and geometric distributions of inputs. A direct consequence    is that KI frequently requires quasi-uniformed sampling \cite{narcowich1998stability,levesley1999norm} and is infeasible  to tackle  spherical data involving  large noise  caused by  measurements \cite{hesse2017radial}, modelling \cite{tsai2006all} or privacy considerations  \cite{nissim2007smooth}.  


\begin{figure}[H]
    \centering
    \includegraphics[scale=0.24]{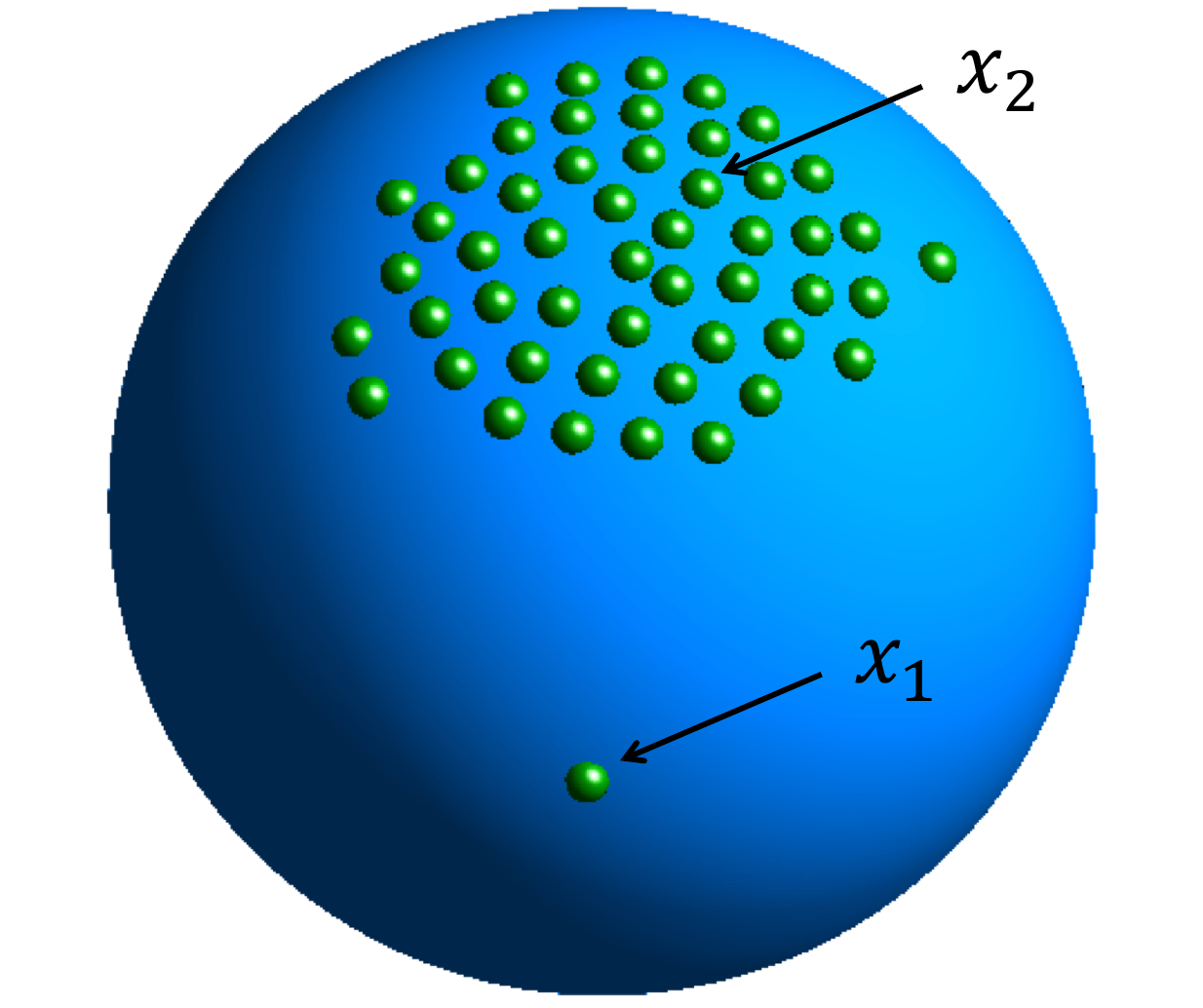}
    \caption{Different roles of the samples  in KI on the sphere}
    \label{fig: x1x2} 
    \vspace{0.1in}
    \end{figure}
\vspace{-0.2in}

Tikhonov regularization \cite{wahba1981spline,hesse2017radial,Feng2021radial} is a preferable approach to conquer the mentioned bottlenecks of KI. However,  theoretical results established in \cite{hesse2017radial} require the noise to be extremely small, while the analysis in \cite{Feng2021radial}
is only suitable to large noise. Furthermore, it is well known \cite{gerfo2008spectral} that Tikhonov regularization suffers from the saturation phenomenon in terms that its approximation rate is saturated on a certain level of the smoothness of the target function and cannot be improved further even when the target function is extremely smooth. 
The aim of this paper  is to propose a   spectral filter approach to enhance the stability of KI without compromising the prediction accuracy. There are roughly three challenges for this purpose, although spectral filter approach has already been  adopted in \cite{gerfo2008spectral,dicker2017kernel,guo2017learning} to tackle  random samples in machine learning. From the methodological perspective, 
deterministic sampling of spherical data places particular emphasis on the locations of samples.  For example, $x_1$ and $x_2$ in  
  \cref{fig: x1x2}  act differently  since $x_2$ can be easily replaced by other points near it while $x_1$ is unique in a large range. From the theoretical consideration,    deterministic sampling of spherical data excludes  statistical analysis tools such as the large number law and concentration inequalities  \cite{dicker2017kernel,guo2017learning}, and thus requires novel analysis tools like the sampling inequality adopted in \cite{hesse2017radial}. From the
  parameter selection side,  as the fitting and stability of the proposed spectral filter methods are determined by  the  filter  parameter,  developing adaptive  parameter selection strategy requires detailed quantitative analysis of the above two terms and novel empirical-to-population techniques, just as concentration inequalities did for random samples.

Our basic idea to settle the above challenges   is to use spherical quadrature rules \cite{mhaskar2001spherical,brown2005approximation}  to mimic concentration inequalities  of   random samples, and then to design  a high-pass filter to exclude  small eigenvalues of the kernel matrix to enhance  the stability of KI without sacrificing its prediction accuracy. In particular, we use   quadrature weights in a spherical quadrature rule  to embody different roles of scattered data  and  utilize some high-pass filters to balance the stability and fitting accuracy in the interpolation process, and then propose a novel fitting scheme called as the weighted spectral filter (WSF) approach. Combining the recently developed integral operator approach for spherical data analysis \cite{Feng2021radial,lin2021subsampling,lin2023dis} and the classical bias-variance analysis for spectral filter algorithms \cite{dicker2017kernel,guo2017learning}, we succeed in deriving an optimal approximation rate of WSF to show that it stabilizes the interpolation process of KI without compromising its prediction accuracy, provided the filter parameter  is appropriately tuned. Furthermore, we provide a novel cross-validation strategy  based on spherical quadrature rules 
  to derive a theoretically optimal filter parameter in the sense that   WSF equipped with the selected parameter  achieves the same optimal approximation rate. Finally, we conduct a series of numerical experiments, including several toy simulations and two real data trials with synthetically added noise, to illustrate the effectiveness of WSF and verify our theoretical assertions. Our main contributions can be concluded as follows:

$\bullet$ Methodology novelty: We propose a type of weighted spectral filters to equip KI to improve its stability. Besides Tikhonov regularization, we find that Landweber iteration and spectral cut-off are also effective and efficient for this purpose. Furthermore, we propose a corresponding cross-validation strategy to select filter parameters for deterministic samples. 

$\bullet$ Theoretical novelty: Large number law and  concentration inequalities are crucial   in analyzing spectral filter algorithm for random samples. Based on spherical positive quadrature rules and integral operator approaches, we develop a similar analysis tool as concentration inequalities for deterministic samples and succeed in deriving optimal approximation rates for WSF. The derived approximation rates adapt to different noise in the sense that they are optimal for different magnitudes  of  the noise. 

$\bullet$ Application guidance: Noise of the data amplifies the limitation of KI.  Practitioners have to turn to other approaches such as deep learning \cite{feng2021generalization} and dictionary  learning \cite{sun2013dictionary} to analyze noisy data on spheres. Our study shows that  adding  high-pass filters with suitable filter parameters to KI is capable of  circumventing the limitation of KI.   Our approaches are verified via both solid theoretical analysis and comprehensive numerical experiments. 

 The rest of the paper is organized as follows. In the next section, we introduce WSF as well as some basic properties of KI. In \cref{Sec:Theoretical Verification}, we provide theoretical verifications of WSF and present the parameter selection strategy.  
In \cref{Sec.Numerical}, we employ both toy simulations and real data trials with synthetically added noise  to show the effectiveness of WSF in fitting noisy data. In \cref{Sec.Proofs}, we prove our main results.

\section{Weighted Spectral Filters for Kernel Interpolation}\label{Sec.Stability}
Let $\mathbb S^d$ be the unit sphere embedded into $\mathbb R^{d+1}$,  the $d+1$-dimensional Euclidean space. 
Assume that the  data  $D=\{(x_i,y_i)\}_{i=1}^{|D|}$ are collected with  $\Lambda=\{x_i\}_{i=1}^{|D|}\subset\mathbb S^d$  and  
\begin{equation}\label{Model1:fixed}
    y_{i}=f^*(x_{i})+\varepsilon_{i},  \qquad\forall\
     i=1,\dots,|D|,
\end{equation}
where $|D|$ denotes the cardinality of $D$, $\{\varepsilon_{i}\}_{i=1}^{|D|}$ is a set of zero-mean i.i.d. random noise satisfying $|\varepsilon_i|\leq M$ for some $M\geq 0$, and $f^*$ is a function to model the relation between the input $x$ and real-valued output $y$.  
The  purpose is to derive an estimator $f_D$ based on  $D$ to approximate  $f^*$ well.   Up till now, there have been numerous  methods proposed for this purpose, typical examples including  spherical polynomials \cite{fasshauer1998scattered,potts2004approximation,filbir2008polynomial}, spherical filtered hyperinterpolation
\cite{sloan2012filtered,lin2021distributed,an2021lasso}, and KI \cite{hubbert2015spherical,hesse2017radial,Feng2021radial}. Due to its 
 flexibility, accuracy and efficiency as discussed in the previous section, we focus on KI in this paper, though we believe that it would be interesting to modify spherical polynomials and spherical filtered hyperinterpolation to fit spherical data whose outputs formed as  \eqref{Model1:fixed}.

\subsection{Kernel interpolation on the sphere}\label{SubSec: KI}
If a function $\phi\in L^2[-1,1]$ satisfies $
\phi(u)=\sum_{k=0}^\infty
\hat{\phi}_k\frac{Z(d,k)}{\Omega_d} P_k^{d+1}(u)
$
with
$$
   \hat{\phi}_k:= \Omega_{d-1} \int_{-1}^1P_k^{d+1}(u)\phi(u)(1-u^2)^{\frac{d-2}2}du\geq0,
$$
then it  is said to be a spherical basis function (SBF) \cite{narcowich2002scattered},
where  $\Omega_d=\frac{2\pi^{\frac{d+1}{2}}}{\Gamma(\frac{d+1}{2})}$ is the volume of $\mathbb S^d$,
$P_k^{d+1}$ is the  Gegenbauer   polynomial of order $\frac{d-1}2$ (also called as the  Legendre polynomial \cite{narcowich2007direct}) and degree $k$ normalized so that
 $P_k^{d+1}(1)=1$ and 
$$
   Z(d,k):=\left\{\begin{array}{ll}
  \frac{2k+d-1}{k+d-1}{\binom{k+d-1}{k}}, & \mbox{if}\ k\geq 1, \\
 1, & \mbox{if}\ k=0 
 \end{array}
 \right.
$$
denotes the dimension of  $\mathbb H_k^d$, the  space of 
spherical harmonics of degree $k$ \cite{muller2006spherical}.  If in addition  $\sum_{k=0}^\infty\hat{\phi}_k {Z(d,k)} <\infty$ and $\hat{\phi}_k>0$   for all $k$, then $\phi$ is said to be  a (strictly) positive definite kernel. It is well known that each SBF  $\phi$ corresponds to a   native space    $\mathcal N_\phi$, defined by \cite{narcowich2002scattered,narcowich2007direct}
$$
\mathcal N_\phi:=\left\{f(x)=\sum_{k=0}^\infty\sum_{\ell=1}^{Z(d,k)}\hat{f}_{k,\ell}Y_{k,\ell}(x): \|f\|_\phi^2:=
\sum_{k=0}^\infty
\hat{\phi}_k^{-1}\sum_{\ell=1}^{Z(d,k)}\hat{f}_{k,\ell}^2
<\infty\right\}, 
$$ 
where $\{Y_{k,\ell}\}_{\ell=1}^{Z(d,k)}$ is  an arbitrary  orthonormal basis of $\mathbb H_k^d$ under the $L^2(\mathbb S^d)$ metric, 
$
\hat{f}_{k,\ell}:=\int_{\mathbb
	S^d}f(x)Y_{k,\ell}(x)d\omega(x)
$
is the Fourier coefficient of $f$, 
and $d\omega$ is  the Lebesgue measure on $\mathbb S^d$ normalized so that $\int_{\mathbb S^d}d\omega(x)=1$.
Recalling \cite{freeden1998constructive,michel2012lectures} that the Sobolev space $H^\gamma$ for $\gamma>0$ is defined as the space of functions in $L^2(\mathbb S^d)$ for which the norm 
$$
      \|f\|_{H^\gamma}:=\left(\sum_{k=0}^\infty(k+1)^{2\gamma}\sum_{\ell=1}^{Z(d,k)}|\hat{f}_{k,\ell}|^2\right)^\frac{1}{2}
$$
is finite. Therefore, for $\gamma>d/2$, it is easy to check that the norm $\|\cdot\|_\phi$ and $\|\cdot\|_{H^\gamma}$ are equivalent when $\hat{\phi}_k\sim(k+1)^{-2\gamma}$,  and then $\mathcal N_\phi$ can be identified with $H^\gamma$.  

KI  (or minimal norm kernel interpolation) defined by
\begin{equation}\label{minimal-norm-interpolation}
f_{D}:=  {\arg\min}_{f\in\mathcal N_\phi}\|f\|_\phi,\qquad
s.t.\quad 
f(x_i)=y_{i},\quad (x_{i},y_{i})\in D,
\end{equation} 
can be analytically solved as  \cite{hubbert2015spherical}
\begin{equation}\label{KI}
f_{D}=\sum_{i=1}^{|D|}a_i\phi_{x_i}, \qquad \mbox{with}\
(a_1,\dots,a_{|D|})^T=:{\bf a}_{D}=\Phi_D^{-1}{\bf y}_D 
\end{equation} 
with  $\Phi_D=\{\phi(x_i\cdot x_j)\}_{i,j=1}^{|D|}$, ${\bf y}_D:=(y_1,\dots,y_{|D|})^T$ and $\phi_{x_i}:=\phi(x_i\cdot)$. 

The approximation performance of $f_D$ depends heavily on the geometric distribution of $\Lambda$, which is measured by the  mesh norm  $
h_{\Lambda}:=\max_{x\in\mathbb S^d}\min_{x_{i}\in \Lambda}\mbox{dist}( x,x_i) 
$ and separation radius $q_{\Lambda}:=\frac12\min_{i\neq i'}\mbox{dist}( x_{i}, x_{i'})$,  where $\mbox{dist}( x,x' )$
is the geodesic distance between   $x$ and $x'$ on $\mathbb S^d$. Denote by $\tau_{\Lambda}:=\frac{h_{\Lambda}}{q_{\Lambda}}$ the mesh ratio which measures the uniformness of $\Lambda$. If there is a $\theta\geq 1$ satisfying $\tau_{\Lambda}\leq \theta$, $\Lambda$ is then said to be $\theta$-quasi-uniform. If $\varepsilon_i=0$ in model \eqref{Model1:fixed} and $f^*\in\mathcal N_\phi$ with $\hat{\phi}_k\sim k^{-2\gamma}$ for $\gamma>d/2$, the approximation error of KI is well  studied 
in the literature \cite{narcowich2002scattered,le2006continuous,narcowich2007direct,hangelbroek2010kernel,hangelbroek2011kernel,hubbert2015spherical}, showing that 
\begin{equation}\label{error-estimate-KI}
\|f_D-f^*\|_{L^2(\mathbb S^d)}\leq ch_\Lambda^{\gamma},
\end{equation}
where  $c$ is a constant depending only on $d$, $\gamma$, and $\|f^*\|_{\phi}$. 
The above estimate verifies the excellent performance of KI in  fitting noise-free data on the sphere.
The problem is, however,  that \eqref{KI} requires to compute the inversion of the kernel matrix $\Phi_D$  whose condition number is generally  large, 
making KI   difficult to fit noisy data.  In fact, 
  Shaback \cite{schaback1995error} established the following interesting inequality:
\begin{equation}\label{uncertainty}
  \sigma_{|D|}(\Phi_D) \leq  \| \phi_x - \mathcal{P}_\Lambda (\phi_x)\|^2_\phi, \qquad x \in \mathbb S^d,
\end{equation}
where $\sigma_{\ell}(U)$ denotes the $\ell$-th  largest   eigenvalue of the matrix $U$  and $\mathcal{P}_\Lambda$ denotes the projector from $\mathcal{N}_\phi$ to its subspace $S_\Lambda:=\mbox{span}
\{\phi_{x_j}\}^{|D|}_{j=1}.$  Recalling \cite{narcowich2002scattered,le2006continuous,narcowich2007direct,hangelbroek2010kernel,hangelbroek2011kernel,hubbert2015spherical} that the power function  $\| \phi_x - \mathcal{P}_\Lambda (\phi_x)\|^2_\phi\stackrel{|D|\rightarrow\infty}{\longrightarrow}0$, we have  $\sigma_{|D|}(\Phi_D)\stackrel{|D|\rightarrow\infty}{\longrightarrow}0$. This demonstrates  that $f_D$ cannot approximate $f^*$ when $|D|$ tends infinity, just as    \cref{fig: drawbackofKI_tdesign} and the following lemma provided in \cite{lin2023dis}  purport to show. 

\vspace{-0.1in}
\begin{figure}[H]
	\centering
 	\subfigure{\includegraphics[scale=0.35]{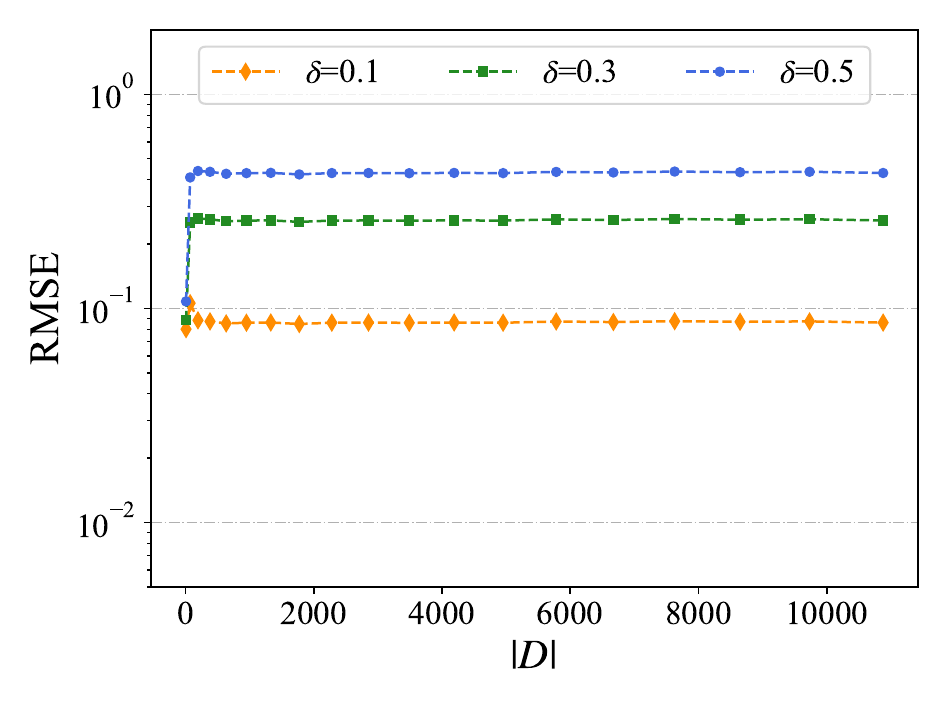}
		\label{subfig: KI_s1_tdesign_RMSE}} 
	\subfigure{\includegraphics[scale=0.37]{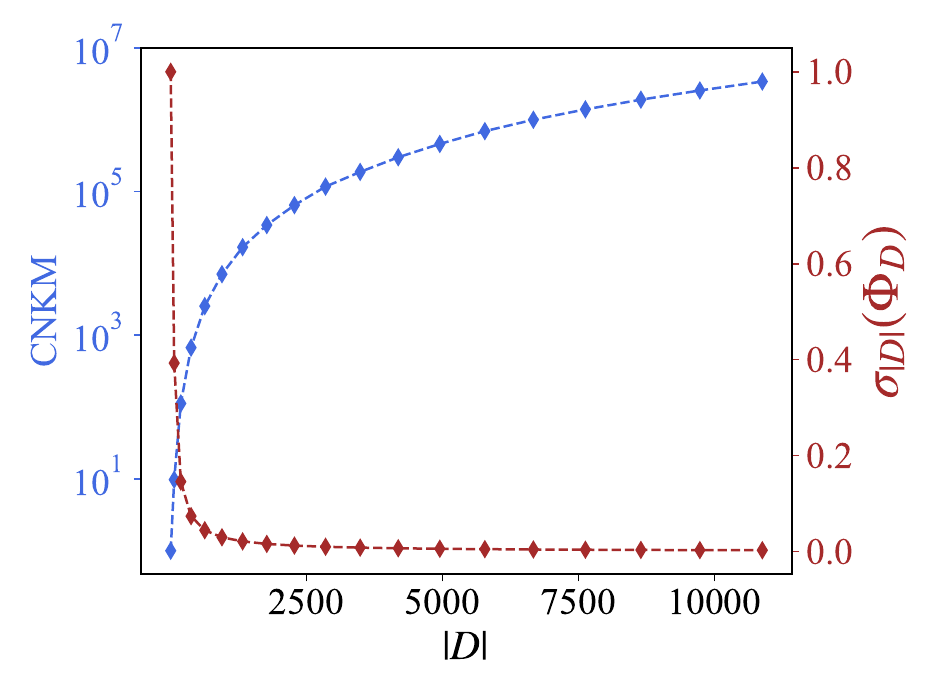}
		\label{subfig: KI_s1_tdesign_CNKM}} 
  \vspace{-0.15in}
	\caption{Relations between training sample size and RMSE (rooted mean square error) of KI/CNKM (condition number of kernel matrix)/$\sigma_{|D|}(\Phi_D)$. The training sample inputs $\{x_i\}_{i=1}^{|D|}$ are generated by Womersley's symmetric spherical $t$-design on $\mathbb S^2$, and the corresponding outputs are generated by the model \eqref{Model1:fixed}, where $f^*$ is defined by \eqref{Model_toysimulation} and $\varepsilon_i$ are independent truncated Gaussian noise $\mathcal N(0,\delta^2)$. We ran simulations 5 times and recorded the average RMSE using the kernel given as in \eqref{Wendland_function}.}
	\label{fig: drawbackofKI_tdesign}
	\vspace{-0.2in}
\end{figure}

\begin{lemma}\label{Lemma:inconsistence}
Let $\Lambda$ be $\theta$-quasi-uniform for some $\theta>1$. Suppose that $\hat \phi_k\sim k^{-2\gamma}$  with $\gamma> d/2$. Let  $\{\varepsilon_i^*\}_{i=1}^{|D|}$ be a set of random variables whose supports are contained in $[-M, -\tau M] \cup [\tau M, M] $, where $0< \tau <1$ is an absolute constant.
Then for all $f \in \mathcal N_\varphi$ with $f|_\Lambda =0,$ there holds almost surely the following inequality:
\begin{equation}
     \|f_{D}-f\|_\phi\geq \tilde{C},
\end{equation}
where $f_D$ is defined by \eqref{minimal-norm-interpolation} with $y_i=f(x_i)+\varepsilon^*_i$ and
$\tilde{C}$ is a positive constant depending only on $\theta, \tau,\gamma,M,$ and $d$.
\end{lemma}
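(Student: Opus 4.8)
\emph{Proof plan.} The plan is to exploit the reproducing-kernel structure of the native space $\mathcal N_\phi$, for which $\phi$ itself is the reproducing kernel, i.e.\ $\langle g,\phi_x\rangle_\phi=g(x)$ for every $g\in\mathcal N_\phi$ and $x\in\mathbb S^d$. The starting observation is that, since $f|_\Lambda=0$, the outputs in the lemma reduce to $y_i=f(x_i)+\varepsilon_i^*=\varepsilon_i^*$, so $f_D$ is the minimal-norm interpolant of the pure-noise data $\{(x_i,\varepsilon_i^*)\}_{i=1}^{|D|}$. Writing $g:=f_D-f$ and combining $f_D(x_i)=y_i=\varepsilon_i^*$ with $f(x_i)=0$, I obtain the exact interpolation identity $g(x_i)=\varepsilon_i^*$ at every sampling point; this is the only place where the hypotheses on the data enter.

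Second, I would convert this pointwise information into a lower bound on $\|g\|_\phi$ through a single application of the reproducing property and Cauchy--Schwarz. For any fixed index $i$,
\[
|\varepsilon_i^*|=|g(x_i)|=|\langle g,\phi_{x_i}\rangle_\phi|\le \|g\|_\phi\,\|\phi_{x_i}\|_\phi=\|g\|_\phi\sqrt{\phi(1)},
\]
because $\|\phi_{x_i}\|_\phi^2=\langle\phi_{x_i},\phi_{x_i}\rangle_\phi=\phi(x_i\cdot x_i)=\phi(1)$. Invoking the support assumption $|\varepsilon_i^*|\ge\tau M$ then yields $\|f_D-f\|_\phi\ge \tau M/\sqrt{\phi(1)}=:\tilde C$. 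The normalizing constant $\phi(1)=\Omega_d^{-1}\sum_{k\ge0}\hat\phi_kZ(d,k)$ is finite precisely because $\hat\phi_k\sim k^{-2\gamma}$, $Z(d,k)\sim k^{d-1}$ and $\gamma>d/2$, so $\tilde C$ is a strictly positive constant depending only on $\tau,M,\gamma$ and $d$. Since $|\varepsilon_i^*|\ge\tau M$ holds for every realization under the support hypothesis, the inequality is in fact deterministic and therefore holds almost surely, and $f_D$ is well defined because $\phi$ is strictly positive definite.

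As an alternative that more transparently explains the presence of $\theta$ in the statement, I would instead bound $\|f_D\|_\phi$ directly. The hypothesis $f|_\Lambda=0$ gives $\langle f,\phi_{x_j}\rangle_\phi=f(x_j)=0$, so $f\perp S_\Lambda$ in the native metric, whereas $f_D\in S_\Lambda$ by \eqref{KI}; Pythagoras then gives $\|f_D-f\|_\phi^2=\|f_D\|_\phi^2+\|f\|_\phi^2\ge\|f_D\|_\phi^2=(\varepsilon^*)^T\Phi_D^{-1}\varepsilon^*$, where $\varepsilon^*=(\varepsilon_1^*,\dots,\varepsilon_{|D|}^*)^T$. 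Bounding the quadratic form below by $\|\varepsilon^*\|^2/\sigma_1(\Phi_D)$, the top eigenvalue above by $\sigma_1(\Phi_D)\le\mathrm{trace}(\Phi_D)=|D|\phi(1)$, and using $\|\varepsilon^*\|^2\ge|D|\tau^2M^2$, recovers the same $\tilde C$; a sharper, $\theta$-dependent control of $\sigma_1(\Phi_D)$ through the quasi-uniformity of $\Lambda$ and the decay of $\phi$ would only improve the constant.

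The argument is short, and its only genuinely quantitative input is the finiteness of $\phi(1)$, so I do not expect a serious obstacle. The one point to verify carefully is the reproducing identity $\|\phi_x\|_\phi^2=\phi(1)$ together with the convergence of $\sum_k\hat\phi_kZ(d,k)$ under $\gamma>d/2$, since it is exactly the boundedness of this series, independent of $|D|$, that keeps the lower bound from degenerating as $|D|\to\infty$ and thereby encodes the claimed inconsistency of $f_D$.
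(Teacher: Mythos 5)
Your proof is correct, and it is worth noting at the outset that the paper itself does not prove this lemma at all: it is imported verbatim from \cite{lin2023dis}, so the only comparison available is against that external argument. Your primary route is sound and is actually simpler than what is typically done. Since $f|_\Lambda=0$, the interpolant $f_D$ (which exists because $\hat\phi_k\sim k^{-2\gamma}>0$ makes $\phi$ strictly positive definite and $q_\Lambda>0$ makes the nodes distinct) satisfies $f_D(x_i)=\varepsilon_i^*$, so $g:=f_D-f\in\mathcal N_\phi$ (using $\mathcal N_\varphi\subseteq\mathcal N_\phi$ for $\alpha\geq 1$) obeys $\tau M\leq|g(x_i)|\leq\|g\|_\phi\,\|\phi_{x_i}\|_\phi$, and $\sup_x\|\phi_x\|_\phi<\infty$ precisely because $\sum_k\hat\phi_kZ(d,k)<\infty$ under $\gamma>d/2$; the support hypothesis makes the bound hold for every realization, hence almost surely. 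One small caveat: the paper's conventions are internally mixed (the measure $d\omega$ is normalized to mass one while the SBF expansion carries $\Omega_d$), so the reproducing identity may read $g(x)=\Omega_d\langle g,\phi_x\rangle_\phi$ rather than $g(x)=\langle g,\phi_x\rangle_\phi$; this only rescales your $\tilde C$ by a $d$-dependent factor and does not affect the argument. Your alternative route---orthogonality $f\perp S_\Lambda$, Pythagoras, $\|f_D\|_\phi^2=(\varepsilon^*)^T\Phi_D^{-1}\varepsilon^*\geq\|\varepsilon^*\|_2^2/\sigma_1(\Phi_D)$, and the trace bound $\sigma_1(\Phi_D)\leq|D|\phi(1)$---is essentially the standard eigenvalue-based proof of such inconsistency statements and is what the quasi-uniformity hypothesis is usually invoked for (to sharpen the bound on $\sigma_1(\Phi_D)$). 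A notable feature of your argument is that it uses neither $\theta$-quasi-uniformity nor the precise decay rate of $\hat\phi_k$ beyond summability, so your constant $\tilde C=\tau M/\sqrt{\phi(1)}$ depends on fewer parameters than the lemma permits; this is a strengthening of the statement, not a gap.
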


The exhibited phenomenon  
contradicts one's intuition that more data   lead to better prediction \cite{gyorfi2002distribution} and illustrates the  limitation of KI in  fitting noisy data. Furthermore, if the scatter data in $\Lambda$ is not quasi-uniform, i.e., $q_\Lambda\ll h_\Lambda$, then the stability of KI becomes even worse  \cite{narcowich1998stability,levesley1999norm}, just as \cref{fig: drawbackofKI_rotation} purports to show. 

  \vspace{-0.15in}
\begin{figure}[H]
	\centering
	\subfigure{\includegraphics[scale=0.36]{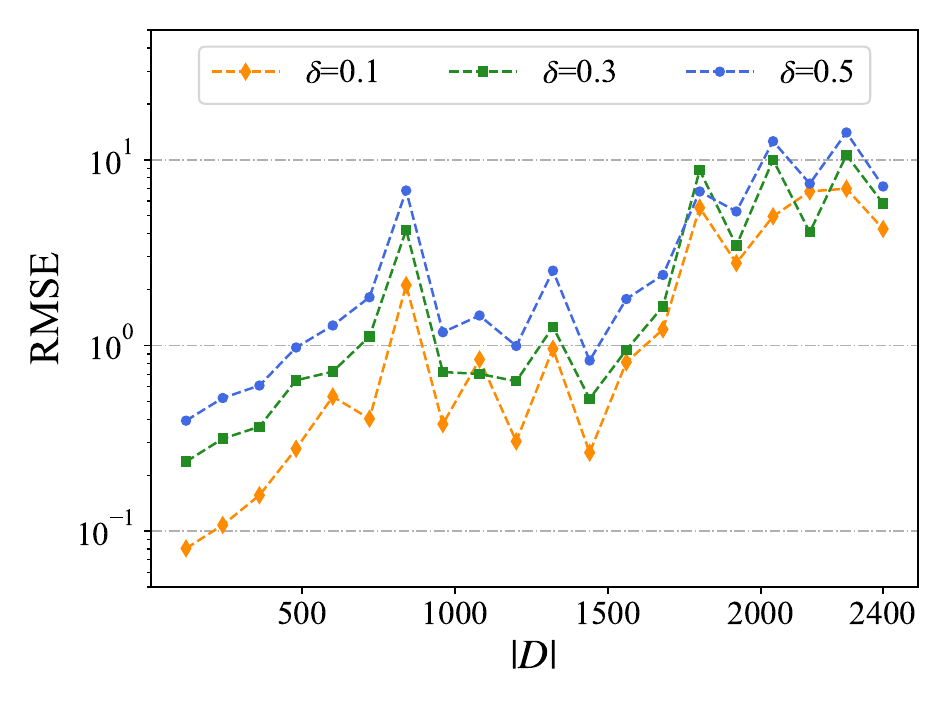}
		\label{subfig: KI_s1_rotation_RMSE}} 
	\subfigure{\includegraphics[scale=0.37]{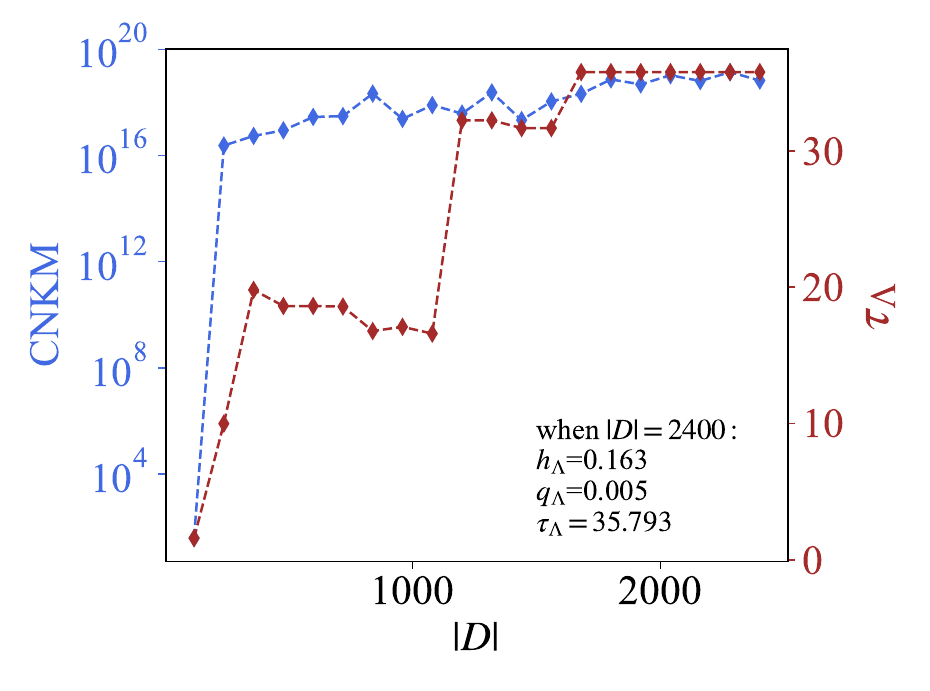}
		\label{subfig: KI_s1_rotation_CNKM}} 
  \vspace{-0.15in}
 	\caption{Relations between training sample size and RMSE of KI/CNKM/$\tau_{\Lambda}$. The training sample inputs $\{x_i\}_{i=1}^{|D|}$ are generated by rotating $t$-designs  on $\mathbb S^2$ whose details can be found in \cref{Sec.Numerical}, and the corresponding outputs are generated by the same way as in  \cref{fig: drawbackofKI_tdesign}. We ran simulations 5 times and recorded the average RMSE.}
	\label{fig: drawbackofKI_rotation}
\end{figure}


\subsection{Weighted spectral filters for KI}
To circumvent the limitation of KI,  we aim to introduce a novel weighted spectral filter (WSF) approach to stabilize the interpolation process. There are two main building blocks of  WSF: weighting and filtering.
Let us first introduce the weighting scheme. For $s\in\mathbb N$,  $\mathcal B_{\Lambda,s}:=\{(w_{i,s},  x_{i}): w_{i,s}> 0
\hbox{~and~}   x_{i}\in \Lambda\}$ is said to be  a positive spherical quadrature rule  of order $s$  \cite{mhaskar2001spherical,brown2005approximation}, if
\begin{equation}\label{eq:quadrature}
\int_{\mathbb S^d}P(x)d\omega(x)=\sum_{x_{i}\in\Lambda} w_{i,s} P(  x_{i}), \qquad \forall P\in \Pi_{s}^d,
\end{equation}
where $\Pi_s^d$ denotes the set of spherical polynomials of degree at most $s$. The existence and construction of the positive spherical quadrature rules   have been extensively studied in the literature \cite{mhaskar2001spherical,brown2005approximation,le2008localized}. 
The following
lemma derived in \cite{mhaskar2001spherical,brown2005approximation}   presents a relation between $s$ and $h_{\Lambda}$ to permit a spherical positive quadrature formula. 

\begin{lemma}\label{Lemma:fixed quadrature}
	If $\Lambda$ is the set of scattered data on the sphere with mesh norm $h_{\Lambda}$, then for any $s_0\leq s\leq c'_1h_\Lambda^{-1}$
	there exists a positive quadrature  rule $\mathcal B_{\Lambda,s}=\{(w_{i,s},  x_{i}): w_{i,s}> 0
	\hbox{~and~}   x_i\in \Lambda\}$ with $\sum_{i=1}^{|\Lambda|}w_{i,s}\leq 1$. If in addition, $\Lambda$ is $\theta$-quasi-uniform, then $w_{i,s}\leq c_2'|\Lambda|^{-1}$. Here, $s_0$ and $c'_1$ are constants depending only on $d$ and $c_2'$ is a constant depending only on $d$ and $\theta$.  
\end{lemma}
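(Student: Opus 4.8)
The plan is to reduce the existence of a positive quadrature rule to a Marcinkiewicz--Zygmund (MZ) norm equivalence for spherical polynomials, following the route of \cite{mhaskar2001spherical,brown2005approximation}. First I would fix a Voronoi-type partition $\{R_i\}$ of $\mathbb S^d$ subordinate to $\Lambda$, so that $x_i\in R_i$ and each cell has diameter $\lesssim h_\Lambda$, and write $\mu_i:=\omega(R_i)$ for its normalized measure. The crucial analytic input is a Bernstein-type estimate: a spherical polynomial of degree $s$ oscillates by at most a fixed fraction of its own size over any cap of radius $\lesssim 1/s$. Combining this with the covering of $\mathbb S^d$ at scale $h_\Lambda$ yields the MZ inequality $A\|P\|_{L^1(\mathbb S^d)}\le \sum_i \mu_i|P(x_i)|\le B\|P\|_{L^1(\mathbb S^d)}$ for all $P\in\Pi_s^d$, valid precisely when $sh_\Lambda$ lies below a dimension-dependent threshold. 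This is exactly what fixes the constant $c_1'$, while the lower bound $s_0$ is the technical requirement that the degree be large enough for the localization estimates to take effect.

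With the MZ equivalence in hand, I would construct the weights by a perturbation argument. The \emph{approximate} rule $P\mapsto\sum_i\mu_i P(x_i)$ reproduces $\int_{\mathbb S^d}P\,d\omega$ up to an error controlled by the MZ gap, so one solves for an exact rule on the finite-dimensional space $\Pi_s^d$ via a Neumann-series / fixed-point iteration whose contraction constant is governed by $B/A$. Taking $sh_\Lambda$ small enough forces the correction to $\mu_i$ to be a small relative perturbation, so that the resulting weights $w_{i,s}$ remain strictly positive, with $w_{i,s}\sim\mu_i$; exactness on $\Pi_s^d$ is then automatic by construction.

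The normalization $\sum_i w_{i,s}\le 1$ is essentially free: since the constant function $1\in\Pi_s^d$ and $d\omega$ is normalized so that $\int_{\mathbb S^d}d\omega=1$, exactness gives $\sum_i w_{i,s}=\int_{\mathbb S^d}1\,d\omega=1$, which together with positivity yields the claim (indeed with equality). For the second assertion, I would invoke $\theta$-quasi-uniformity to force the Voronoi measures to be two-sided comparable, $\mu_i\sim h_\Lambda^d$ with constants depending on $\theta$, together with the counting estimate $|\Lambda|\sim h_\Lambda^{-d}$. Since the perturbation step preserves $w_{i,s}\sim\mu_i$, this gives $w_{i,s}\le c_2'|\Lambda|^{-1}$ with $c_2'$ depending only on $d$ and $\theta$.

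The main obstacle I anticipate is the second step: guaranteeing \emph{positivity} of the corrected weights rather than merely exactness. This is precisely where the quantitative threshold on $sh_\Lambda$ enters, because positivity survives only when the MZ constants $A,B$ are close enough that the correction operator is a genuine contraction on $\Pi_s^d$. Tracking the dependence of the localization constants on $d$ through the Bernstein estimate and the partition---and on $\theta$ in the quasi-uniform case---is the technical heart of the argument; the remaining bookkeeping (normalization and weight bounds) follows routinely once this is secured.
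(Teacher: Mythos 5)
The paper does not actually prove \cref{Lemma:fixed quadrature}; it quotes it from \cite{mhaskar2001spherical,brown2005approximation}, so your attempt must be compared with the arguments in those sources. Your outline --- a Voronoi decomposition at scale $h_\Lambda$, a Bernstein-type localization estimate yielding the $L^1$ Marcinkiewicz--Zygmund equivalence on $\Pi_s^d$ when $sh_\Lambda$ is below a dimensional threshold, exact weights obtained as a small relative perturbation of the Voronoi measures $\mu_i$, the normalization $\sum_i w_{i,s}=1$ from exactness on constants (the paper's $\omega$ is normalized to be a probability measure, so this is consistent with the stated bound $\le 1$), and $w_{i,s}\sim\mu_i\sim h_\Lambda^d\sim|\Lambda|^{-1}$ under $\theta$-quasi-uniformity --- is precisely the route of those references, and it is correct in outline.

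There is, however, one substantive caveat at the step you yourself flag as the technical heart. A Neumann-series/fixed-point correction on $\Pi_s^d$, if the contraction is measured in $L^2$ or $L^\infty$, loses a Nikolskii factor: the error functional $\ell(P)=\int_{\mathbb S^d}P\,d\omega-\sum_i\mu_iP(x_i)$ has size $\varepsilon\sim sh_\Lambda$ against $\|P\|_{L^1}$, but converting the correcting polynomial back into pointwise bounds at the nodes costs a factor $s^{d/2}$, so positivity would only survive under $h_\Lambda\lesssim s^{-1-d/2}$ rather than the stated $h_\Lambda\le c_1's^{-1}$. The literature avoids this in one of two ways. In \cite{mhaskar2001spherical} there is no iteration at all: one applies Hahn--Banach to $\ell$ on the sampled copy of $\Pi_s^d$ equipped with the weighted $\ell^1$ norm $\sum_i\mu_i|P(x_i)|$ (this is exactly where the lower MZ constant $A$ enters); since the dual of weighted $\ell^1$ is weighted $\ell^\infty$, the extension is represented by corrections $\delta_i$ with $|\delta_i|\le \varepsilon A^{-1}\mu_i$, and a single correction gives exactness while preserving positivity as soon as $\varepsilon<A$. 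Alternatively, the constructive route of \cite{le2008localized}, which the paper recommends for computing the rule, does make a Neumann series work --- but only after replacing the reproducing kernel of $\Pi_s^d$ by a filtered, well-localized kernel, so that the discretization error becomes a genuine contraction in the $L^\infty$ operator norm at the scale $sh_\Lambda$. Either repair closes the gap; as literally sketched, a contraction ``governed by $B/A$'' does not deliver the stated threshold.
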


Let $W_{D,s}$ be the diagonal matrix with diagonal elements $\{w_{1,s},\dots,w_{|D|,s}\}$.  
It is easy to check that the coefficient vector ${\bf a}_D$ defined in \eqref{KI} satisfies 
\begin{equation}\label{WKI}
       {\bf a}_{D}=W_{D,s}^{1/2}(W_{D,s}^{1/2}\Phi_DW_{D,s}^{1/2})^{-1}W_{D,s}^{1/2} {\bf y}_D, 
\end{equation}
showing that involving $W_{D,s}$ in the matrix-inversion does not affect the approximation performance of KI.  The weighting scheme focuses on replacing $\Phi_D$ in \eqref{KI} by  
\begin{equation}\label{def.psi}
    \Psi_{D,s}:=W_{D,s}^{1/2}\Phi_DW_{D,s}^{1/2}
\end{equation}  
in \eqref{WKI} to embody the different roles of scattered data. The reason for adopting the quadrature weights is to use quadrature formulas \eqref{eq:quadrature} to mimic the concentration inequalities \cite{dicker2017kernel} for theoretical analysis.   

We then introduce the filtering scheme, which is developed  to  approximate    $\Psi_{D,s}^{-1}$ in a stable way. Spectral filter approach \cite{gerfo2008spectral} that designs a specific  high-pass filter on the spectrum of the kernel matrix has been widely used in machine learning for random samples. Though it is  highly non-trivial in theoretical analysis to borrow the idea of spectral approach from \cite{gerfo2008spectral} directly to the noisy scattered data fitting model \eqref{Model1:fixed}, the numerical implementation is relatively easy.   The following definition of  spectral filters can be found in \cite{gerfo2008spectral,guo2017learning}.
\begin{definition}\label{Def:high-pass} 
Let $\kappa$ be the maximal eigenvalue of the matrix $\Psi_{D,s}$.
If for any $\lambda \in (0, \kappa]$, $g_\lambda:[0,\kappa]\rightarrow\mathbb R$ satisfies 
\begin{equation}\label{condition1}
\sup_{0<\sigma\le\kappa}|g_\lambda(\sigma)|\le \frac{b}{\lambda},
\qquad \sup_{0<\sigma\le \kappa}|g_\lambda(\sigma)\sigma|\le b,
\end{equation}
and
\begin{equation}\label{condition2}
\sup_{0<\sigma\le \kappa}|1-g_\lambda(\sigma)\sigma|\sigma^v\le
\tilde{C}_v\lambda^v, \qquad \forall\ 0\leq v\leq
{\nu}_g, 
\end{equation}
with  $b>0$ an absolute constant, $\nu_g>0$, and $\tilde{C}_v>0$ a constant depending only on
$v\in[0,\nu_g],$
$g_\lambda$ is then said to be a high-pass filter for KI with qualification $\nu_g>0$.
\end{definition}

Definition \ref{Def:high-pass} shows that the filter $g_\lambda$, compared with the classical inversion operator $g(t)=\frac1t$, excludes small values  and thus   performs as a high-pass filter. In particular, \eqref{condition1} and \eqref{condition2} quantify the high-pass filter property  of $g_\lambda$ via tuning the filter parameter $\lambda$. 
With the help of the weighting and filtering schemes, we are in a position to present WSF in Algorithm \ref{alg:WSF}.

\begin{algorithm}[t]
\begin{algorithmic}\caption{WSF}
\label{alg:WSF}
\STATE {\bf Inputs}: $D=\{(x_i,y_i)\}_{i=1}^{|D|}$ with $\Lambda=\{x_i\}_{i=1}^{|D|}\subset\mathbb S^d$,  filter $g_\lambda$ with filter parameter $\lambda \in (0, \kappa]$, and kernel $\phi$.

\STATE {\bf Weighting}: Given the set of scattered data $\Lambda$, determine a spherical positive quadrature rule  
\begin{equation}\label{quadrature-rule}
    \mathcal B_{\Lambda,s}=\{(w_{i,s},  x_{i}): w_{i,s}> 0
\hbox{~and~}   x_{i}\in \Lambda\}
\end{equation} 
for $s\in\mathbb N$ as large as possible.

\STATE {\bf Filtering}: Generate the kernel matrix $\Phi_D$ and the weighted kernel matrix $\Psi_{D,s}$. Define 
\begin{equation}\label{spectral-algorithm}
    f_{D,\lambda}=\sum_{i=1}^{|D|}a_i\phi_{x_i} \qquad \mbox{with}\ (a_1,\dots,a_{|D|})^T:=W_{D,s}^{1/2}g_\lambda(\Psi_{D,s})W_{D,s}^{1/2}{\bf y}_D,
\end{equation}
where 
  $g_\lambda(\Psi_{D,s})$ is defined by spectral calculus.

\STATE {\bf Output}: $f_{D,\lambda}$.    
\end{algorithmic}
\end{algorithm}

In computing the quadrature rule in \eqref{quadrature-rule}, it is suggested to use the approach developed in \cite{le2008localized}. As shown in  \cref{Lemma:fixed quadrature}, the degree of quadrature rule depends only $h_\Lambda$, i.e., $s\sim h_\Lambda^{-1}$.   
Though \eqref{spectral-algorithm} presents a uniform framework of  WSF, how to choose the high-pass filter should be specified.  We then present three typical realizations of WSF in the following.
 
\begin{itemize}
	\item \textbf{Tikhonov regularization}: Given a regularization parameter $\mu>0$, define $g_\mu(v)=\frac1{v+\mu}$. The Tikhonov regularization is  defined by 
	\begin{equation}\label{Tikhnov-regularization}
	f_{D,\mu}=\sum_{i=1}^{|D|}a_i\phi_{x_i} \qquad \mbox{with}\
	(a_1,\dots,a_{|D|})^T=W_{D,s}^{1/2}(\Psi_{D,s}+\mu I)^{-1}W_{D,s}^{1/2}{\bf y}_D.
	\end{equation}
	\item \textbf{Landweber iteration}: For a positive definite matrix $A$, if $\lim_{u\rightarrow\infty} A^u=0$ and $I-A$ is strictly positive definite, then the Neumann series implies
	$  
	(I-A)^{-1}=\sum_{k=0}^\infty A^k.
	$ 
	Let  $\tau$ be an arbitrary positive number satisfying $0<\tau\leq \frac1{\kappa}.$  Therefore, we have
	$
	(\Psi_{D,s})^{-1}=\tau\sum_{k=0}^\infty(I-\tau\Psi_{D,s})^k.
	$
	Given a parameter  $l\in \mathbb N$, define the filter  $g_{1/l}(v) =\sum_{k=0}^l(1-v)^k$. The Landweber iteration is given by
	\begin{equation}\label{Landweber}
	f_{D,l}=\sum_{i=1}^{|D|}a_i\phi_{x_i} \qquad \mbox{with}\
	(a_1,\dots,a_{|D|})^T=\tau\sum_{k=0}^lW_{D,s}^{1/2}(I-\tau\Psi_{D,s})^k W_{D,s}^{1/2}{\bf y}_D.
	\end{equation}
	
	\item \textbf{Spectral cut-off}:  
  Perform SVD of the kernel matrix $\Psi_{D,s}=Q\Sigma Q^T$, where 
 $\Sigma=\diag(\sigma_1,\dots,\sigma_{|D|})$ with $\sigma_1\geq \sigma_2\geq\dots\geq\sigma_{|D|}\geq 0$ being a non-increasing array of the eigenvalues of $\Psi_{D,s}$. 
  Given a parameter $\nu>0$, write $\Sigma_\nu=\diag(\sigma_1,\dots,\sigma_i,0,\dots,0)$   where $\sigma_{j}<\nu$ for $j>i$ and $\sigma_j\geq \nu$ for $j\leq i$.  
	Define the filter $g_\nu(v)=\left\{\begin{array}{cc}
	      \frac{1}{v} & \mbox{if}\quad v\geq\nu,\\
 	   0  & \mbox{otherwise}.
	\end{array}\right.$
We then obtain the spectral cut-off estimate by
	\begin{equation}\label{cut-off}
	f_{D,\nu}=\sum_{i=1}^{|D|}a_i\phi_{x_i} \qquad \mbox{with}\
	(a_1,\dots,a_{|D|})^T= W_{D,s}^{1/2}Q\Sigma_\nu^{-1} Q^T W_{D,s}^{1/2}{\bf y}_D.
	\end{equation} 
\end{itemize}

It is obvious that setting  $\mu=0$, $\nu=0$ and $l\rightarrow\infty$ makes \eqref{Tikhnov-regularization}, \eqref{cut-off} and \eqref{Landweber} boil down to KI \eqref{KI}. The filters of Tikhonov regularization, Landweber iteration and spectral cut-off satisfy \eqref{condition1} and \eqref{condition2} for suitable $\nu_g$. In particular, $\nu_g=\infty$ for Landweber iteration and spectral cut-off while $\nu_g=1$ for Tikhonov regularization. Besides these three examples, there are several other filters  corresponding to different learning schemes \cite{gerfo2008spectral} such as the well known $\nu$-method  and iterative Tikhonov. 


\subsection{Parameter selection}
As WSF requires a suitable filter parameter $\lambda$, it is highly desired  to design   feasible and efficient  strategies to adaptively select $\lambda$. In this subsection, we focus on modifying the popular cross-validation in machine learning \cite{gyorfi2002distribution,caponnetto2010cross} for random samples  to be a   weighted cross-validation to determine $\lambda$, described  
 as follows:

$\diamond$ {\it Step 1: Data division.} Divide the data set $D$ into two data subsets: $D^{tr}=\{(x_i^{tr},y_i^{tr})\}$ (the training set) and $D^{val}=\{(x_i^{val},y_i^{val})\}$ (the validation set).

$\diamond$ {\it Step 2: Training.} Given a set of parameters $\Xi_L:=\{\lambda_\ell\}_{\ell=1}^L$, run  WSF \eqref{spectral-algorithm} on $D^{tr}$ with $\lambda=\lambda_1,\dots,\lambda_L$  and obtain a set of estimators $\{f_{D^{tr},\lambda_\ell}\}_{\ell=1}^L$.

$\diamond$ {\it Step 3: Computing quadrature rule on the validation set.} 
Determine a spherical positive quadrature rule
$$
        \mathcal B_{\Lambda^{val},s}=\left\{(w^{val}_{i,s},x_i^{val}):w_{i,s}^{val}>0\ \mbox{and}\  x_i^{val}\in\Lambda^{val}\right\}.
$$

$\diamond$ {\it Step 4: Parameter selection.} Select the   parameter $\hat{\lambda}$ via the validation set $D^{val}$ by
\begin{equation}\label{CV-for-parameter}
\hat{\lambda}={\arg\min}_{1\leq\ell\leq L} \sum_{i=1}
	^{|D^{val}|} w_{i,s}^{val}\left(f_{{D^{tr}},\lambda_\ell}(x_i^{val})-y_i^{val}\right)^2.
\end{equation}

It should be mentioned that there are roughly two differences  between the proposed weighted cross-validation and the classical one \cite{caponnetto2010cross} for random samples. The first one is that we remove the truncation operator in    cross-validation for random samples \cite{caponnetto2010cross}.
Generally speaking,
the truncation step aims to  project an  output function $f: \mathbb S^d \to \mathbb R$ onto
the interval $[-M_y, M_y]$ with $M_y=\max_{i=1,\dots,|D|}{|y_i|}$ by utilizing the projection operator  
$
\pi_{M} f(x)=\mbox{sgn}{f(x)}\min\{M,|f(x)|\}. 
$
For any  $|f^*(x)|\leq M_y, \forall x\in\mathbb S^d$, it is obvious 
$$
\|\pi_{M_y}f-f^*\|_{L^2(\mathbb S^d)}\leq \|f-f^*\|_{L^2(\mathbb S^d)}, \qquad\forall f\in L^2(\mathbb S^d).
$$
The problem is, however, $\pi_{M_y} f$ generally does not belong to $\mathcal N_\phi$, making a contradiction in the sense that one wants to find an estimator in $\mathcal N_\phi$ while the truncation operator  kicks the final estimator out. The other difference, as the key novelty in the proposed method, is to select an optimal parameter via weighted least squares over the validation set like \eqref{CV-for-parameter}. Since we are concerned with deterministic samples,   new tools like spherical positive quadrature rules 
 are needed to build the relation between the (weighted) empirical risk and its population.



In this way, we obtain an  adaptive version of WSF, called as Ada-WSF, $f_{D^{tr},\hat{\lambda}}$ to successfully settle the parameter selection problem. It should be highlighted that different from priori parameter selection strategies presented in \cite[Theorem 4.1]{hesse2017radial}, our proposed weighted cross-validation is  posterior and can be implemented without accessing any a-priori information for the smoothness of  target functions $f^*$ and distributions of   noise $\varepsilon$.

\section{Theoretical Verifications}\label{Sec:Theoretical Verification} In this section, we aim at deriving optimal approximation rates of WSF and compare our results with some related work.

\subsection{Approximation rate analysis}
Before presenting the approximation rates, we first introduce another two SBFs, $\psi$ and $\varphi$, satisfying 
\begin{eqnarray}\label{kernel-relation} 
	\hat{\varphi}_k = \hat{\phi}_k^{\alpha},\quad 
        \hat{\psi}_k = 
	\hat{\phi}_k^{\beta},\qquad  0\leq \beta \leq 1, \alpha\geq 1.
\end{eqnarray} 
Throughout this section, we assume $f^*\in\mathcal N_\varphi$ and the error analysis  is carried out in the metric of $\mathcal N_\psi$.  To demonstrate the power of  spectral filters, we need the following definition of stability error, fitting error and approximation error of WSF.

\begin{definition}\label{def.stability}
Let
\begin{equation}\label{spectral-noise-free}
f^\diamond_{D,\lambda}=\sum_{i=1}^{|D|}a_i\phi_{x_i} \qquad \mbox{with}\ (a_1,\dots,a_{|D|})^T=W^{1/2}_{D,s}g_\lambda(\Psi_{D,s})W^{1/2}_{D,s}{\bf f}^* 
\end{equation}
 be the noise-free version of $f_{D,\lambda}$, where  ${\bf f}^*=(f^*(x_1),\dots,f^*(x_{|D|}))^T$. The stability error, defined by $\|f^\diamond_{D,\lambda}-f_{D,\lambda}\|_\psi$, measures the role of noise in WSF. The fitting error,
defined by $\|f^\diamond_{D,\lambda}-f^*\|_\psi$, quantifies the fitting performance of WSF for tackling noise-free data. The approximation error, defined by $\|f_{D,\lambda}-f^*\|_\psi$, reflects the approximation performance of WSF in handling the noisy data. 
\end{definition}

It follows from the  triangle inequality that
\begin{equation}\label{error-dec-1-spectral}
\|f_{D,\lambda}-f^*\|_\psi\leq
\overbrace{\|f_{D,\lambda}- f^\diamond_{D,\lambda}\|_\psi}^{\mbox{Stability}}+\overbrace{\| f^\diamond_{D,\lambda}-f^*\|_\psi}^{\mbox{Fitting}},
\end{equation}
showing that the approximation error of WSF can be bounded by the sum of the stability error and fitting error. In the following theorem, we  present theoretical guarantees on WSF in demonstrating its stability and fitting errors, respectively. 

\begin{theorem}\label{Theorem:rate-spectral}
	Let $\tilde{\delta}\in(0,1)$, $f_{D,\lambda}$ be defined by \eqref{spectral-algorithm},
 $\hat{\phi}_k\sim k^{-2\gamma}$ with $2\gamma>d$ for any $k\in\mathbb N$ and     $\psi$, $\varphi$ satisfy 
	\eqref{kernel-relation} 
	with $0\leq \beta\leq 1$ and $\alpha\geq 1$.
If $g_\lambda$ satisfies \eqref{condition1} and \eqref{condition2} with qualification $\nu_g>0$,
	$f^*\in \mathcal N_\varphi$, $\{\varepsilon_i\}_{i=1}^{|D|}$ are zero-mean i.i.d. random values satisfying $|\varepsilon_i|\leq M$ for some $M>0$, $s_0\leq s\leq c_1'h_\Lambda^{-1}$, and
\begin{equation}\label{lambda-restriction}
	h^\gamma_\Lambda\lambda^{-1/2}(q_\Lambda^{-d}h_\Lambda^d+1)
	\leq C_1,
\end{equation}
then
	with confidence $1-\tilde{\delta}$, there holds
\begin{eqnarray}\label{Error-est-noiseless}
	\|f_{D,\lambda}-f^*\|_\psi 
	&\leq &
	C_2  \big( \overbrace{M  \lambda^{-\frac{d+2\gamma\beta}{4\gamma}} |D|^{-1/2}\log\frac{3}{\tilde{\delta}}}^{\mbox{Stability}} 
	 + 
	\overbrace{
		\lambda^{\min\{\frac{1-\beta}2,v_g\}}s^{-\gamma}\mathcal I_{\{\alpha>3\}}+\lambda^{\min\{\frac{\alpha-\beta}{2},v_g\}}}^{\mbox{Fitting}} \big),
\end{eqnarray}
where $s_0,C_1,C_2$ are constants depending only on $d,\beta,\gamma$, $\|f^*\|_\varphi$ and $\alpha$ and $\mathcal I_{\mathcal A}$ denotes the indicator of the event $\mathcal A$.   
  
\end{theorem}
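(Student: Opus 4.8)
The plan is to recast everything in the language of integral operators and to run the classical bias--variance split for spectral filters, with the one structural change that the probabilistic control available for random designs is replaced by the deterministic exactness of the quadrature rule $\mathcal B_{\Lambda,s}$. First I would introduce the weighted sampling operator $S_{D,s}:\mathcal N_\phi\to\mathbb R^{|D|}$, $(S_{D,s}f)_i=\sqrt{w_{i,s}}f(x_i)$, whose adjoint is $S_{D,s}^*\mathbf c=\sum_i\sqrt{w_{i,s}}c_i\phi_{x_i}$, so that $S_{D,s}S_{D,s}^*=\Psi_{D,s}$ from \eqref{def.psi} and $\mathcal L_{D,s}:=S_{D,s}^*S_{D,s}$ is the empirical operator $\mathcal L_{D,s}f=\sum_i w_{i,s}f(x_i)\phi_{x_i}$. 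Writing $\mathbf y_D=\mathbf f^*+\boldsymbol\varepsilon$ and using the push-through identity $S_{D,s}^*g_\lambda(S_{D,s}S_{D,s}^*)=g_\lambda(S_{D,s}^*S_{D,s})S_{D,s}^*$, the estimators \eqref{spectral-algorithm} and \eqref{spectral-noise-free} become $f_{D,\lambda}=g_\lambda(\mathcal L_{D,s})\big(\mathcal L_{D,s}f^*+\sum_i w_{i,s}\varepsilon_i\phi_{x_i}\big)$ and $f^\diamond_{D,\lambda}=g_\lambda(\mathcal L_{D,s})\mathcal L_{D,s}f^*$. The decomposition \eqref{error-dec-1-spectral} then reduces the theorem to a fitting (bias) term and a stability (noise) term, and every remaining difficulty is concentrated in comparing $\mathcal L_{D,s}$ with the population operator $\mathcal L_\phi f=\int_{\mathbb S^d}\phi(x\cdot y)f(y)\,d\omega(y)$, whose eigenvalues are exactly $\hat\phi_k\sim k^{-2\gamma}$ with multiplicities $Z(d,k)\sim k^{d-1}$.

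For the fitting term I would write $f^\diamond_{D,\lambda}-f^*=-(I-g_\lambda(\mathcal L_{D,s})\mathcal L_{D,s})f^*$ and invoke the source condition $f^*\in\mathcal N_\varphi$ from \eqref{kernel-relation}, which reads $f^*=\mathcal L_\phi^{(\alpha-1)/2}v$ with $\|v\|_\phi=\|f^*\|_\varphi$. Converting the $\psi$-metric into the $\mathcal N_\phi$-metric replaces $\|\cdot\|_\psi$ by the operator weight $\mathcal L_\phi^{(1-\beta)/2}$. If $\mathcal L_{D,s}$ is replaced by $\mathcal L_\phi$ the operators commute, and the qualification bound \eqref{condition2} with exponent $(\alpha-\beta)/2$ yields the ideal-bias rate $\lambda^{\min\{(\alpha-\beta)/2,\nu_g\}}$. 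The residual caused by the noncommutativity $\mathcal L_{D,s}\neq\mathcal L_\phi$ is controlled by the quadrature error of $\mathcal B_{\Lambda,s}$: since \eqref{eq:quadrature} is exact on $\Pi_s^d$ and $s\sim h_\Lambda^{-1}$ by \cref{Lemma:fixed quadrature}, the smoothness $\hat\phi_k\sim k^{-2\gamma}$ turns this into an $s^{-\gamma}$ factor and produces the term $\lambda^{\min\{(1-\beta)/2,\nu_g\}}s^{-\gamma}$, while the indicator $\mathcal I_{\{\alpha>3\}}$ merely records that this residual is dominated by the ideal bias unless $f^*$ is sufficiently smooth.

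For the stability term I would use $f_{D,\lambda}-f^\diamond_{D,\lambda}=g_\lambda(\mathcal L_{D,s})g_D^\varepsilon$ with $g_D^\varepsilon=\sum_i w_{i,s}\varepsilon_i\phi_{x_i}$. Since the $\varepsilon_i$ are independent, zero-mean and bounded by $M$, the covariance of $g_D^\varepsilon$ in $\mathcal N_\phi$ obeys $\mathbb E[g_D^\varepsilon\otimes g_D^\varepsilon]=M^2\sum_i w_{i,s}^2\,\phi_{x_i}\otimes\phi_{x_i}\preceq M^2(\max_i w_{i,s})\mathcal L_{D,s}\preceq c_2'M^2|D|^{-1}\mathcal L_{D,s}$, the last weight bound coming from \cref{Lemma:fixed quadrature}. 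Taking the expectation of the squared $\psi$-norm and again converting the metric gives $\mathbb E\|f_{D,\lambda}-f^\diamond_{D,\lambda}\|_\psi^2\lesssim M^2|D|^{-1}\mathrm{Tr}\big(\mathcal L_\phi^{1-\beta}g_\lambda(\mathcal L_{D,s})^2\mathcal L_{D,s}\big)$. Replacing $\mathcal L_{D,s}$ by $\mathcal L_\phi$ and using \eqref{condition1} together with $\hat\phi_k\sim k^{-2\gamma}$, $Z(d,k)\sim k^{d-1}$, the trace is the effective dimension $\mathrm{Tr}(\mathcal L_\phi^{2-\beta}g_\lambda(\mathcal L_\phi)^2)\sim\lambda^{-\frac{d+2\gamma\beta}{2\gamma}}$, so the root-mean-square stability is $M|D|^{-1/2}\lambda^{-\frac{d+2\gamma\beta}{4\gamma}}$. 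Upgrading this expectation estimate to the high-probability bound carrying $\log\frac{3}{\tilde\delta}$ is then done by a vector-valued Bernstein inequality in $\mathcal N_\psi$, which here plays exactly the role concentration inequalities play for random designs.

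The main obstacle, and the step I expect to absorb most of the work, is the purely deterministic operator comparison $\mathcal L_{D,s}\approx\mathcal L_\phi$ needed to justify every ``replace $\mathcal L_{D,s}$ by $\mathcal L_\phi$'' above, made precise as $\|(\mathcal L_\phi+\lambda)^{-1/2}(\mathcal L_\phi-\mathcal L_{D,s})(\mathcal L_\phi+\lambda)^{-1/2}\|<1$, which in turn gives the spectral equivalence of $(\mathcal L_\phi+\lambda)$ and $(\mathcal L_{D,s}+\lambda)$. Lacking a law of large numbers, I would derive this estimate solely from the positive quadrature rule: the exactness \eqref{eq:quadrature} on $\Pi_s^d$ handles the low-frequency block, while the separation radius $q_\Lambda$ and mesh norm $h_\Lambda$ govern the point count and the high-frequency tail, the whole budget being precisely the hypothesis \eqref{lambda-restriction}, namely $h_\Lambda^\gamma\lambda^{-1/2}(q_\Lambda^{-d}h_\Lambda^d+1)\le C_1$, which pins the perturbation below a fixed threshold. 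Making this quantitative when $\Lambda$ is not quasi-uniform (so that the factor $q_\Lambda^{-d}h_\Lambda^d$ genuinely enters) is the delicate point; once it is secured, both error terms follow from the filter axioms \eqref{condition1}--\eqref{condition2} and the spectral asymptotics of $\mathcal L_\phi$ as sketched above.
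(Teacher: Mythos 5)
Your overall architecture coincides with the paper's: the same operator representation obtained from the push-through identity \eqref{exchange-adjoint}, the same stability/fitting split \eqref{error-dec-1-spectral}, the same treatment of the fitting term (source condition from \cref{Lemma:Relations-among-Native}, filter qualification \eqref{condition2}, and a quadrature-driven commutator correction of size $s^{-\gamma}$ that only activates when $\alpha>3$), and the same identification of the deterministic comparison $L_{\phi,D,W_{D,s}}\approx L_\phi$ under \eqref{lambda-restriction} as the central technical tool (the paper imports exactly this as \cref{Lemma:operator-difference} from \cite{lin2023dis}). So the skeleton is sound and matches the paper.

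The genuine gap is in your stability argument. You bound the noise covariance by $M^2(\max_i w_{i,s})\mathcal L_{D,s}\preceq c_2'M^2|D|^{-1}\mathcal L_{D,s}$, but the weight bound $w_{i,s}\le c_2'|D|^{-1}$ in \cref{Lemma:fixed quadrature} is available only when $\Lambda$ is $\theta$-quasi-uniform --- an assumption that \cref{Theorem:rate-spectral} deliberately does not make: condition \eqref{lambda-restriction} carries the factor $q_\Lambda^{-d}h_\Lambda^d$ precisely so that non-quasi-uniform designs remain admissible, and \cref{Corollary:approximation-spectral} exploits this. Without quasi-uniformity, a single isolated point can carry a weight of constant order, its noise then contributes variance of constant order, and your $|D|^{-1/2}$ scaling fails; so as written your stability half proves the theorem only under an added quasi-uniformity hypothesis. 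The paper's proof never touches individual weights at this stage: it factors the stability error into purely deterministic operator-norm pieces, namely $\|(L_\phi+\lambda I)^{\frac{1-\beta}{2}}(L_{\phi,D,W_{D,s}}+\lambda I)^{-\frac{1-\beta}{2}}\|$ and $\|(L_{\phi,D,W_{D,s}}+\lambda I)^{-1/2}(L_\phi+\lambda I)^{1/2}\|$ (controlled by \eqref{product-2} and the Cordes inequality \eqref{Cordes}) and $\|(L_{\phi,D,W_{D,s}}+\lambda I)^{\frac{2-\beta}{2}}g_\lambda(L_{\phi,D,W_{D,s}})\|\le 2b\lambda^{-\beta/2}$ (from \eqref{condition1}), so that all randomness is concentrated in the single scalar quantity $\|(L_\phi+\lambda I)^{-1/2}(L_{\phi,D,W_{D,s}}f^*-S^T_{D,W_{D,s}}{\bf y}_{D,W_{D,s}})\|_\phi$, which is then handled by the imported concentration estimate \cref{lemma:value-difference-random} of \cite{Feng2021radial}. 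A second, smaller soft spot: your step ``replace $\mathcal L_{D,s}$ by $\mathcal L_\phi$'' inside the trace $\mathrm{Tr}\big(\mathcal L_\phi^{1-\beta}g_\lambda(\mathcal L_{D,s})^2\mathcal L_{D,s}\big)$ requires a trace-class comparison of the empirical and population operators, which the operator-norm estimate of \cref{Lemma:operator-difference} does not supply; the paper's factorization avoids ever comparing traces. Both issues are repairable by restructuring your stability bound along the paper's operator-norm factorization, but in their present form they leave the stability estimate short of the theorem's stated generality.
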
  

It should be mentioned that $0\leq \beta\leq 1$ is natural, since $f_{D,\lambda}$ should belong to $\mathcal N_\psi$. 
The condition $\alpha\geq 1$ requires $f^*\in\mathcal N_\varphi\subseteq\mathcal N_\phi$. Different from Tikhonov regularization \cite{Feng2021radial} that specifies $1\leq \alpha\leq 3$, implying the  saturation phenomenon \cite{gerfo2008spectral} for Tikhonov regularization \cite{gerfo2008spectral}, our result holds for any $\alpha \geq 1$ and shows the excellent performance for spectral filters with large qualifications.

It can also  be  found in the above theorem that the stability error  depends on the magnitude of noise $M$, the size of data $|D|$, and the filter parameter $\lambda$. In particular, it decreases with respect to $\lambda$, implying that larger $\lambda$ leads to smaller stability error. However, the fitting error increases with $\lambda$, demonstrating that the stability error and fitting error are conflicting quantities in WSF. The best approximation is obtained when the stability error is comparable with the fitting error. This makes three restrictions on the selection of $\lambda$ in  \cref{Theorem:rate-spectral}. The first one is \eqref{lambda-restriction}, which presents a lower bound of $\lambda$, implying the necessity of introducing the high-pass filter function  in deriving stable estimator. The second one is  that $M  \lambda^{-\frac{d+2\gamma\beta}{4\gamma}} |D|^{-1/2}$ should be small, showing that the stability can be enhanced by adopting large $\lambda$. The third one, concerning the fitting performance, requires that  $\lambda^{\min\{\nu_g,\frac{\alpha-\beta}2\}}$ should   be small. 

 We then present several corollaries to illustrate the tightness of the derived approximation rates in  \cref{Theorem:rate-spectral}
and show that WSF enhances the stability of KI  without compromising its approximation performance.  
In the first corollary, we show that WSF possesses  the optimal approximation rates with achievable  $\lambda$, if the magnitude of noise $M$ is a constant.


\begin{corollary}\label{Corollary:approximation-spectral}
Let $\tilde{\delta}\in(0,1)$, $\hat{\phi}_k\sim k^{-2\gamma}$ with $2\gamma>d$ for any $k\in\mathbb N$ and     $\varphi$ be another   SBF  satisfying \eqref{kernel-relation} with   $1\leq \alpha\leq 3$.
If $f^*\in \mathcal N_\varphi$, $\{\varepsilon_i\}_{i=1}^{|D|}$ are zero-mean i.i.d.  random values satisfying $|\varepsilon_i|\leq M$ for some $M>0$,  $g_\lambda$ is the spectral filter satisfying \eqref{condition1} and \eqref{condition2} with qualification $\nu_g\geq 1$ and $\lambda\sim|D|^{-\frac{2\gamma}{2\alpha\gamma+d}}$, 
	and   
\begin{equation}\label{geo-distr-restriction}
	h^\gamma_{\Lambda}\tau_{\Lambda}^d\leq c_2'|D|^{-\frac{\gamma}{2\gamma\alpha+d}},
	\end{equation}  
 then with confidence $1-\tilde{\delta}$, there holds
\begin{eqnarray}\label{final-app-error-spectral}
	\|f_{D,\lambda}-f^*\|_{L^2(\mathbb S^d)} \leq
	C_3|D|^{-\frac{\gamma\alpha}{2\gamma\alpha+d}}\log\frac3{\tilde{\delta}},
	\end{eqnarray}
	where $C_3, c_2'$ are constants independent of  $|D|$  or $\tilde{\delta}$.
\end{corollary}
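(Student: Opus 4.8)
The plan is to obtain \cref{Corollary:approximation-spectral} as a direct specialization of \cref{Theorem:rate-spectral}, choosing the free parameter $\beta$ so that the $\mathcal N_\psi$-metric reduces to the $L^2(\mathbb S^d)$-metric and then arranging the competing terms in \eqref{Error-est-noiseless} to balance at the claimed rate. The first step is to set $\beta=0$. Since $\hat\psi_k=\hat\phi_k^0\equiv 1$, the defining norm of $\mathcal N_\psi$ collapses to $\|f\|_\psi^2=\sum_{k}\sum_{\ell}\hat f_{k,\ell}^2=\|f\|_{L^2(\mathbb S^d)}^2$, so controlling $\|f_{D,\lambda}-f^*\|_\psi$ is exactly controlling $\|f_{D,\lambda}-f^*\|_{L^2(\mathbb S^d)}$. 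With $\beta=0$ fixed, every hypothesis of \cref{Theorem:rate-spectral} is already granted in the corollary ($\hat\phi_k\sim k^{-2\gamma}$, $2\gamma>d$, $f^*\in\mathcal N_\varphi$, zero-mean i.i.d.\ noise bounded by $M$, qualification $\nu_g\ge 1$), so what remains is to certify the structural constraint \eqref{lambda-restriction} and then to track exponents.

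Second, I would verify \eqref{lambda-restriction} using the geometric assumption \eqref{geo-distr-restriction}. Writing the mesh ratio $\tau_\Lambda=h_\Lambda/q_\Lambda$, one has $q_\Lambda^{-d}h_\Lambda^d=\tau_\Lambda^d$, so the left-hand side of \eqref{lambda-restriction} equals $h_\Lambda^\gamma\lambda^{-1/2}(\tau_\Lambda^d+1)\le 2 h_\Lambda^\gamma\tau_\Lambda^d\lambda^{-1/2}$, using $\tau_\Lambda\ge 1$. With $\lambda\sim|D|^{-\frac{2\gamma}{2\alpha\gamma+d}}$ we have $\lambda^{-1/2}\sim|D|^{\frac{\gamma}{2\alpha\gamma+d}}$, and \eqref{geo-distr-restriction}, namely $h_\Lambda^\gamma\tau_\Lambda^d\le c_2'|D|^{-\frac{\gamma}{2\alpha\gamma+d}}$, exactly cancels this growth, leaving a quantity bounded by an absolute constant; choosing $C_1$ appropriately certifies \eqref{lambda-restriction}. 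This is the step where the exponent in \eqref{geo-distr-restriction} is seen to be dictated precisely by the lower bound that the high-pass filter must respect. A valid positive quadrature rule exists for some admissible $s$ by \cref{Lemma:fixed quadrature}, and its degree is irrelevant to the final bound because $1\le\alpha\le 3$ forces $\mathcal I_{\{\alpha>3\}}=0$, eliminating the $\lambda^{\min\{(1-\beta)/2,\nu_g\}}s^{-\gamma}$ fitting term.

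Third, I would substitute the chosen $\lambda$ into the two surviving terms of \eqref{Error-est-noiseless}. For the stability term, $\lambda^{-\frac{d}{4\gamma}}|D|^{-1/2}\sim|D|^{\frac{d}{2(2\alpha\gamma+d)}-\frac12}=|D|^{-\frac{\gamma\alpha}{2\alpha\gamma+d}}$. For the fitting term $\lambda^{\min\{\alpha/2,\nu_g\}}$, provided the qualification is large enough that the minimum is attained at $\alpha/2$, we get $\lambda^{\alpha/2}\sim|D|^{-\frac{\gamma\alpha}{2\alpha\gamma+d}}$. Both contributions therefore carry the single exponent $-\frac{\gamma\alpha}{2\alpha\gamma+d}$, and since $M$ is a constant it is absorbed into $C_3$, yielding \eqref{final-app-error-spectral} with confidence $1-\tilde\delta$.

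Because \cref{Theorem:rate-spectral} already carries the full analytic weight, the corollary is essentially bookkeeping; the genuine checkpoints are the identification $\beta=0\Rightarrow\mathcal N_\psi=L^2(\mathbb S^d)$, the verification that \eqref{geo-distr-restriction} is precisely calibrated to imply \eqref{lambda-restriction}, and the confirmation that the stability and fitting exponents coincide. The one point requiring care is ensuring the qualification dominates $\alpha/2$ so that the fitting exponent is not capped at $\nu_g$; in the smooth regime $\alpha\le 3$ this is where the relation between $\nu_g$ and the target smoothness $\alpha$ becomes the effective constraint, and it is the closest thing to an obstacle in an otherwise routine derivation.
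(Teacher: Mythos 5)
Your route --- specializing \cref{Theorem:rate-spectral} with $\beta=0$ so that $\|\cdot\|_\psi$ becomes exactly $\|\cdot\|_{L^2(\mathbb S^d)}$, verifying \eqref{lambda-restriction} from \eqref{geo-distr-restriction} via the identity $q_\Lambda^{-d}h_\Lambda^d=\tau_\Lambda^d$ and $\tau_\Lambda\ge 1$, noting that $1\le\alpha\le 3$ makes $\mathcal I_{\{\alpha>3\}}=0$ so the quadrature degree $s$ drops out, and then checking that both surviving terms of \eqref{Error-est-noiseless} carry the exponent $-\frac{\gamma\alpha}{2\gamma\alpha+d}$ under $\lambda\sim|D|^{-\frac{2\gamma}{2\gamma\alpha+d}}$ --- is precisely the derivation the paper intends (the paper supplies no separate proof of this corollary, only of \cref{Theorem:rate-spectral}), and those computations are all correct.

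However, the point you relegate to a closing caveat is a genuine gap, not merely ``the closest thing to an obstacle.'' With $\beta=0$ the fitting term in \eqref{Error-est-noiseless} is $\lambda^{\min\{\alpha/2,\,\nu_g\}}$, while the corollary assumes only $\nu_g\ge 1$ yet allows $\alpha$ up to $3$. For $\alpha\in(2,3]$ and a filter with $\nu_g\in[1,\alpha/2)$ --- e.g.\ Tikhonov regularization, whose qualification is exactly $1$ by the paper's own discussion --- the fitting term is capped at $\lambda^{\nu_g}$, and with the prescribed $\lambda$ this yields only $|D|^{-\frac{2\gamma\nu_g}{2\gamma\alpha+d}}$, strictly worse than the claimed $|D|^{-\frac{\gamma\alpha}{2\gamma\alpha+d}}$; this is the saturation phenomenon, and it cannot be argued away within \cref{Theorem:rate-spectral} (nor repaired by taking $\beta>0$, which degrades both the stability exponent and the conversion to the $L^2$ norm). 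So your argument actually proves the corollary only under the additional hypothesis $\nu_g\ge\alpha/2$, equivalently only for $\alpha\le 2\nu_g$: this is automatic for Landweber iteration and spectral cut-off ($\nu_g=\infty$), and for all filters when $1\le\alpha\le 2$, but it does not follow from the stated assumption $\nu_g\ge 1$ on the full range $1\le\alpha\le 3$. To be fair, this defect sits in the corollary's statement rather than in your reasoning --- your honest bookkeeping is what exposes it --- but a complete proof must either add $\nu_g\ge\alpha/2$ to the hypotheses or shrink the admissible range of $\alpha$, and your writeup should say so explicitly rather than conditioning on it in passing.
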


It can be found in \cite{lin2021subsampling} that the derived approximation error is optimal in the sense that there is a distribution $\rho^*$ of noise satisfying the condition of the above corollary,  a bad function $f^*_{bad}\in\mathcal N_{\varphi}$  with $\|f_{bad}^*\|_\varphi\leq U$  for some     $U>0$  and a constant $C_4$ independent of $|D|$ and $\tilde{\delta}$  such that
\begin{eqnarray}\label{lower-bound}
P_{\rho^*}\left[\|f_{D,\lambda}-f^*_{bad}\|_{L^2(\mathbb S^d)}\geq C_4|D|^{-\frac{\gamma\alpha}{2\gamma\alpha+d}}\right]
\geq
\frac{1}{4},
\end{eqnarray}
where $P_{\rho^*}$ denotes the probability with respect to the distribution $\rho^*$. The lower bound \eqref{lower-bound}, together with the upper bound \eqref{final-app-error-spectral}, shows that  the derived approximation rates cannot be essentially improved, and $|D|^{-\frac{\gamma\alpha}{2\gamma\alpha+d}}$ is the optimal approximation rate for noisy data under the aforementioned assumptions. Furthermore, \eqref{geo-distr-restriction} shows that if the noise is large, it is not necessary to force the samples set to be $\theta$-quasi-uniform, in which 
$h^\gamma_\Lambda\tau_\Lambda^d\leq c_2'|D|^{-\frac{\gamma}d}< c_3'|D|^{-\frac{\gamma}{2\gamma\alpha+d}}$ for some $c_3'>0$.
If we impose $\Lambda$ to be $\theta$-quasi-uniform, the following corollary shows that with appropriately selected spectral filters,  optimal approximation rates hold for any $\alpha\geq 1$.
\begin{corollary}\label{Corollary:approximation-spectral-large}
Under the condition of  \cref{Corollary:approximation-spectral}, if the qualification of $g_\lambda$ satisfies $\nu_g\geq \alpha$ and $\Lambda$ is $\theta$-quasi-uniform for some $\theta>1$, then 
\begin{eqnarray}\label{final-app-error-spectral-large}
	\|f_{D,\lambda}-f^*\|_{L^2(\mathbb S^d)} \leq
	C_3|D|^{-\frac{\gamma\alpha}{2\gamma\alpha+d}}\log\frac3{\tilde{\delta}},\qquad \forall \alpha\geq 1.
 \end{eqnarray}
\end{corollary}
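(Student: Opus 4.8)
The plan is to derive the corollary directly from \cref{Theorem:rate-spectral} by specializing the free parameters and exploiting the two new hypotheses, namely the high qualification $\nu_g\geq\alpha$ and the $\theta$-quasi-uniformity of $\Lambda$. First I would observe that the $L^2(\mathbb S^d)$ norm is exactly the $\mathcal N_\psi$ norm corresponding to $\beta=0$, since then $\hat\psi_k=\hat\phi_k^0=1$ for all $k$ and Parseval's identity gives $\|\cdot\|_\psi=\|\cdot\|_{L^2(\mathbb S^d)}$. Substituting $\beta=0$ into \eqref{Error-est-noiseless} collapses the stability exponent to $-\frac{d}{4\gamma}$ and the two fitting exponents to $\min\{\frac12,\nu_g\}$ and $\min\{\frac{\alpha}{2},\nu_g\}$. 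Because $\nu_g\geq\alpha\geq1$, each minimum is attained by its first argument, yielding
\[
\|f_{D,\lambda}-f^*\|_{L^2(\mathbb S^d)}\le C_2\Big(M\lambda^{-\frac{d}{4\gamma}}|D|^{-1/2}\log\frac{3}{\tilde\delta}+\lambda^{1/2}s^{-\gamma}\mathcal I_{\{\alpha>3\}}+\lambda^{\alpha/2}\Big).
\]
This is precisely where $\nu_g\geq\alpha$ is essential: it lifts the saturation cap that would otherwise freeze the main fitting exponent at $\nu_g$ for large $\alpha$, and is exactly the improvement that \cref{Corollary:approximation-spectral} could not achieve with $\nu_g\geq1$ alone.

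Next I would discharge the remaining hypotheses of \cref{Theorem:rate-spectral}. For a $\theta$-quasi-uniform $\Lambda$ the standard counting estimate gives $h_\Lambda\sim q_\Lambda\sim|D|^{-1/d}$, so that $\tau_\Lambda^d=(h_\Lambda/q_\Lambda)^d\le\theta^d$ is a constant and, by \cref{Lemma:fixed quadrature}, one may take the quadrature order as large as $s\sim h_\Lambda^{-1}\sim|D|^{1/d}$, whence $s^{-\gamma}\sim|D|^{-\gamma/d}$. With these estimates and $\lambda\sim|D|^{-\frac{2\gamma}{2\alpha\gamma+d}}$ I would verify \eqref{lambda-restriction}: the bracket satisfies $q_\Lambda^{-d}h_\Lambda^d+1\le\theta^d+1$, while $h_\Lambda^\gamma\lambda^{-1/2}\sim|D|^{-\gamma/d+\gamma/(2\alpha\gamma+d)}\to0$ since $d<2\alpha\gamma+d$, so the product is bounded. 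The geometric restriction \eqref{geo-distr-restriction} follows in the same way from $h_\Lambda^\gamma\tau_\Lambda^d\lesssim|D|^{-\gamma/d}\le c_2'|D|^{-\gamma/(2\alpha\gamma+d)}$.

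Finally I would balance the three surviving terms against the target rate $|D|^{-\frac{\alpha\gamma}{2\alpha\gamma+d}}$. A direct computation with $\lambda\sim|D|^{-\frac{2\gamma}{2\alpha\gamma+d}}$ shows that the stability term equals $M|D|^{-\frac{\alpha\gamma}{2\alpha\gamma+d}}$ and the main fitting term $\lambda^{\alpha/2}$ equals $|D|^{-\frac{\alpha\gamma}{2\alpha\gamma+d}}$, both matching the target exactly; since $\log\frac{3}{\tilde\delta}\ge1$, this common logarithmic factor bounds the whole sum. The remaining quadrature term $\lambda^{1/2}s^{-\gamma}$, present only when $\alpha>3$, equals $|D|^{-\frac{\gamma}{2\alpha\gamma+d}-\frac{\gamma}{d}}$, and it is dominated by the target provided $\frac{\gamma}{2\alpha\gamma+d}+\frac{\gamma}{d}\ge\frac{\alpha\gamma}{2\alpha\gamma+d}$, which after clearing denominators reduces to $\alpha(2\gamma-d)+2d\ge0$ — true because $2\gamma>d$ and $\alpha\ge1$. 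Summing the three bounds delivers \eqref{final-app-error-spectral-large}.

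I expect the main obstacle to be the quadrature/fitting term $\lambda^{1/2}s^{-\gamma}\mathcal I_{\{\alpha>3\}}$: it is switched off in \cref{Corollary:approximation-spectral} by the restriction $\alpha\le3$ but becomes active in the present regime of arbitrarily large $\alpha$. Controlling it is exactly where $\theta$-quasi-uniformity is indispensable, since only quasi-uniformity lets us tie the attainable quadrature order $s$, and hence the approximation accuracy $s^{-\gamma}$, to the sample size $|D|$; without it this term cannot be converted into a usable power of $|D|$. The algebraic verification $\alpha(2\gamma-d)+2d\ge0$ is the crux that keeps this term below the optimal rate for every $\alpha\ge1$.
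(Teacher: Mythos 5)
Your proposal is correct and follows what is essentially the paper's own (implicit) derivation: specialize \cref{Theorem:rate-spectral} with $\beta=0$ so that $\|\cdot\|_\psi=\|\cdot\|_{L^2(\mathbb S^d)}$, use $\nu_g\geq\alpha$ to unlock the exponents $\lambda^{1/2}$ and $\lambda^{\alpha/2}$, invoke quasi-uniformity to get $h_\Lambda\sim q_\Lambda\sim|D|^{-1/d}$ and hence $s^{-\gamma}\sim|D|^{-\gamma/d}$, verify \eqref{lambda-restriction} and \eqref{geo-distr-restriction}, and balance the three terms at $\lambda\sim|D|^{-\frac{2\gamma}{2\alpha\gamma+d}}$. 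Your algebraic check that the extra term $\lambda^{1/2}s^{-\gamma}\mathcal I_{\{\alpha>3\}}$ is dominated by the target rate (reducing to $\alpha(2\gamma-d)+2d\geq0$) is exactly the point that makes the result hold for all $\alpha\geq1$.
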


Recalling that the qualification of Landweber iteration and spectral cut-off are infinite, \cref{Corollary:approximation-spectral-large} presents the advantage for these two types of filters in circumventing the saturation of Tikhonov regularization.
In the next corollary, we present that the derived bound \eqref{Error-est-noiseless} is rate optimal when the noise is small and $\Lambda$ is $\theta$-quasi-uniformly distributed, provided  $\lambda$ is appropriately tuned.
\begin{corollary}\label{Corollary:approximation-noise-free-1}
Let $\tilde{\delta}\in(0,1)$. Under the condition of  \cref{Corollary:approximation-spectral},  
	if  $M\leq   |D|^{-\frac{\gamma}{d}}$, $\Lambda$ is $\theta$-quasi-uniform, and $\lambda\sim |D|^{-\frac{2\gamma}{d}}$, then with confidence $1-\tilde{\delta}$,  there holds
\begin{eqnarray}\label{app-noise-free-in-coro}
	\|f_{D,\lambda}-f^*\|_{L^2(\mathbb S^d)} \leq
	C_5|D|^{-\frac{\gamma\alpha}{d}}\log\frac3{\tilde{\delta}}
\end{eqnarray}
	where $C_5$ is a constant independent of  $|D|$, $M$ or $\tilde{\delta}$.
\end{corollary}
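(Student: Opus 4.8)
The plan is to obtain the corollary directly from \cref{Theorem:rate-spectral} by choosing the free parameter $\beta$ so that the $\mathcal N_\psi$-metric collapses onto the $L^2(\mathbb S^d)$-metric. Taking $\beta=0$ in \eqref{kernel-relation} gives $\hat\psi_k=\hat\phi_k^0\equiv 1$, whence $\|f\|_\psi=\|f\|_{L^2(\mathbb S^d)}$ for every $f$; all other hypotheses of the theorem ($\hat\phi_k\sim k^{-2\gamma}$ with $2\gamma>d$, $f^*\in\mathcal N_\varphi$ with $1\le\alpha\le 3$, zero-mean i.i.d.\ noise with $|\varepsilon_i|\le M$, and a filter obeying \eqref{condition1}--\eqref{condition2} with qualification $\nu_g\ge 1$) are inherited verbatim from \cref{Corollary:approximation-spectral}. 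It therefore remains to verify that the prescribed $\lambda\sim|D|^{-2\gamma/d}$ together with a quadrature order $s\sim h_\Lambda^{-1}$ is admissible in \cref{Theorem:rate-spectral}, and then to read off the right-hand side of \eqref{Error-est-noiseless}.

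First I would exploit the $\theta$-quasi-uniformity of $\Lambda$, which yields $\tau_\Lambda\le\theta$ and $h_\Lambda\sim q_\Lambda\sim|D|^{-1/d}$, so that the geometric factor satisfies $q_\Lambda^{-d}h_\Lambda^d+1\le\theta^d+1=O(1)$. Substituting $h_\Lambda^\gamma\sim|D|^{-\gamma/d}$ and $\lambda^{-1/2}\sim|D|^{\gamma/d}$ into the admissibility condition \eqref{lambda-restriction} gives $h_\Lambda^\gamma\lambda^{-1/2}(q_\Lambda^{-d}h_\Lambda^d+1)\sim|D|^{-\gamma/d}|D|^{\gamma/d}\,O(1)=O(1)\le C_1$, so \eqref{lambda-restriction} holds. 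By \cref{Lemma:fixed quadrature}, quasi-uniformity simultaneously licenses a positive quadrature rule up to order $s\sim h_\Lambda^{-1}\sim|D|^{1/d}$ with weights $w_{i,s}\le c_2'|D|^{-1}$, which legitimises the weighting step of WSF for this choice of $s$.

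With admissibility in hand I would substitute $\beta=0$, $\lambda\sim|D|^{-2\gamma/d}$ and $s\sim|D|^{1/d}$ into \eqref{Error-est-noiseless}. Since $\alpha\le 3$ the indicator $\mathcal I_{\{\alpha>3\}}$ vanishes, removing the first fitting term; and since $\nu_g\ge\alpha/2$ the surviving fitting term is $\lambda^{\min\{\alpha/2,\nu_g\}}=\lambda^{\alpha/2}\sim|D|^{-\gamma\alpha/d}$. For the stability term, $\beta=0$ forces $\lambda^{-d/(4\gamma)}\sim|D|^{1/2}$, so $M\lambda^{-d/(4\gamma)}|D|^{-1/2}\log(3/\tilde\delta)$ is of order $M\log(3/\tilde\delta)$, and the noise-magnitude hypothesis on $M$ is what controls this contribution relative to the fitting rate $|D|^{-\gamma\alpha/d}$. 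Summing the two bounds then produces $\|f_{D,\lambda}-f^*\|_{L^2(\mathbb S^d)}\le C_5|D|^{-\gamma\alpha/d}\log(3/\tilde\delta)$ with confidence $1-\tilde\delta$, as claimed. The exponent bookkeeping is routine; the two substantive points are the tightness check of \eqref{lambda-restriction}, which is feasible only because quasi-uniformity makes the product exactly of order one, and the stability--fitting balance. The latter is the main obstacle: once $\lambda$ is fixed at $|D|^{-2\gamma/d}$ the noise term collapses to order $M$, so the argument hinges on confirming that the prescribed smallness of the noise keeps this term subordinate to the clean-data fitting rate $|D|^{-\gamma\alpha/d}$ throughout the admissible range of $\alpha$ and the qualification $\nu_g$.
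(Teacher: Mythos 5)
Your overall route is the natural one --- and, as far as one can tell, the intended one: the paper never writes out a proof of this corollary, so the only available derivation is exactly the specialization of \cref{Theorem:rate-spectral} that you attempt, with $\beta=0$ so that $\|\cdot\|_\psi=\|\cdot\|_{L^2(\mathbb S^d)}$, with quasi-uniformity giving $h_\Lambda\sim q_\Lambda\sim|D|^{-1/d}$ so that \eqref{lambda-restriction} holds for $\lambda\sim|D|^{-2\gamma/d}$, and with the indicator term killed by $\alpha\le 3$. All of that part of your argument is correct.

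However, the two steps you defer are precisely the ones that do not close, so the proposal has genuine gaps. First, you write ``since $\nu_g\ge\alpha/2$,'' but the hypotheses inherited from \cref{Corollary:approximation-spectral} only guarantee $\nu_g\ge 1$ while $\alpha$ may be as large as $3$; for Tikhonov regularization ($\nu_g=1$) and $\alpha\in(2,3]$ the fitting term is $\lambda^{\min\{\alpha/2,\nu_g\}}=\lambda\sim|D|^{-2\gamma/d}$, which is strictly larger than the target rate $|D|^{-\gamma\alpha/d}$. Second, the stability--fitting balance that you yourself call ``the main obstacle'' is never verified, and under the stated hypotheses it cannot be: with $\beta=0$ and $\lambda\sim|D|^{-2\gamma/d}$ the stability term in \eqref{Error-est-noiseless} is of order $M\lambda^{-d/(4\gamma)}|D|^{-1/2}\sim M$, and the assumption $M\le|D|^{-\gamma/d}$ only bounds this by $|D|^{-\gamma/d}$, which dominates the claimed rate $|D|^{-\gamma\alpha/d}$ for every $\alpha>1$. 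Summing the two terms therefore yields $|D|^{-\gamma/d}\log\frac{3}{\tilde\delta}$, not $|D|^{-\gamma\alpha/d}\log\frac{3}{\tilde\delta}$. To make the argument close one must strengthen the hypotheses to $\nu_g\ge\alpha/2$ and $M\lesssim|D|^{-\gamma\alpha/d}$ (both hold automatically in the genuinely noise-free case $M=0$ that the paper emphasizes after the corollary, and for $\alpha=1$); as written, your proof establishes the stated bound only in those regimes, and asserting the general conclusion by ``summing the two bounds'' is a gap rather than a proof.
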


 \cref{Corollary:approximation-noise-free-1} presents the approximation capability of WSF in handling noise-free data, i.e., $M=0$.
It is well known \cite{narcowich2002scattered,narcowich2007direct} that for noise-free data, the derived approximation rate $|D|^{-\frac{\gamma\alpha}{d}}$ cannot be essentially improved, showing that WSF does not degrade the approximation performance of KI. Comparing \eqref{final-app-error-spectral} with \eqref{app-noise-free-in-coro}, we highlight that the worse rate in \eqref{final-app-error-spectral} is due to the noise rather than the fitting scheme.

The above results verify the excellent performance of WSF with suitable filter parameters. 
 In our next theorem, we present an approximation error estimate for Ada-WSF, $f_{D^{tr},\hat{\lambda}}$.

\begin{theorem}\label{Theorem:parameter}
	Let $\tilde{\delta}\in(0,1)$, $\hat{\phi}_k\sim k^{-2\gamma}$ with $2\gamma>d$ for any $k\in\mathbb N$ and     $\varphi$ be another   SBF  satisfying \eqref{kernel-relation} with   $\alpha\geq 1$.
	If $f^*\in \mathcal N_\varphi$, $\{\varepsilon_i\}_{i=1}^{|D|}$ are zero-mean i.i.d. random values satisfying $|\varepsilon_i|\leq M$ for some $M>0$,  $\Lambda$ is $\theta$-quasi-uniform,
	  $|D^{tr}| = |D^{val}| = |D|/2$, and $\hat{\lambda}$ is defined by \eqref{CV-for-parameter}, then 
	with confidence at least $1-\tilde{\delta}$, there holds
\begin{eqnarray*} 
      &&\|f_{D^{tr},\hat{\lambda}}-f^*\|^2_{L^2(\mathbb S^d)}
     \leq  
  \min_{\lambda\in\Xi_L}\|f_{D^{tr},\lambda}-f^*\|_{L^2(\mathbb S^d)}
   + 
   C_6|D|^{-\frac{\gamma}{d}}\max_{\lambda\in\Xi_L}\left(\lambda^{-\frac{d+2\gamma}{2\gamma}} |D|^{-1}+ \lambda^{\alpha-1}\right)\\
   &+&
   C_6 \frac1{|D|}\log^3\frac{3L}{\tilde{\delta}}\max_{\lambda\in\Xi_L}\left(\lambda^{-\frac{d+2\gamma}{4\gamma}} |D|^{-1/2} 
	+
		\lambda^{\frac{\alpha-1}2}+\lambda^{-\frac{d+2\gamma}{2\gamma}} |D|^{-1}+ \lambda^{\alpha-1}\right),
\end{eqnarray*}
where $C_6$ is a constant depending only on $M,d,\phi,\gamma$ and $\|f^*\|_\varphi$. 
\end{theorem}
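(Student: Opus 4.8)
The plan is to prove a weighted cross-validation oracle inequality: I will show that the weighted validation functional minimized in \eqref{CV-for-parameter} is, uniformly over the finite grid $\Xi_L$, a faithful surrogate for the population $L^2$ risk, so that the data-chosen $\hat\lambda$ performs almost as well as the best grid parameter. Writing $\varepsilon_i^{val}$ for the validation noise and expanding $y_i^{val}=f^*(x_i^{val})+\varepsilon_i^{val}$, I decompose the weighted empirical risk of any candidate $f=f_{D^{tr},\lambda}$ as
\begin{equation*}
\sum_i w_{i,s}^{val}\big(f(x_i^{val})-y_i^{val}\big)^2
= \underbrace{\sum_i w_{i,s}^{val}\big(f-f^*\big)^2(x_i^{val})}_{S(f)}
-2\underbrace{\sum_i w_{i,s}^{val}(f-f^*)(x_i^{val})\,\varepsilon_i^{val}}_{T(f)}
+ \underbrace{\sum_i w_{i,s}^{val}(\varepsilon_i^{val})^2}_{N},
\end{equation*}
where $N$ is independent of $f$ and hence irrelevant to the $\arg\min$. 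The key structural advantage is that $D^{tr}$ and $D^{val}$ are disjoint: conditioning on $D^{tr}$ freezes every estimator $f_{D^{tr},\lambda}$ while leaving $\{\varepsilon_i^{val}\}$ independent and zero-mean, which is exactly what allows the random part $T$ to be handled by concentration and the deterministic part $S$ by quadrature.

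First I would obtain uniform native-space control of the candidates. Applying \cref{Theorem:rate-spectral} on $D^{tr}$ in the case $\beta=1$ (so that $\|\cdot\|_\psi=\|\cdot\|_\phi$) and taking a union bound over the $L$ parameters, I get, with confidence $1-\tilde\delta$, for every $\lambda\in\Xi_L$ (up to the lower-order $s^{-\gamma}\mathcal I_{\{\alpha>3\}}$ fitting piece, which I absorb below),
\begin{equation*}
\|f_{D^{tr},\lambda}-f^*\|_\phi
\le C\Big(M\lambda^{-\frac{d+2\gamma}{4\gamma}}|D|^{-1/2}+\lambda^{\frac{\alpha-1}{2}}\Big)\log\tfrac{3L}{\tilde\delta}=:E_\phi(\lambda).
\end{equation*}
Since $2\gamma>d$, the native space $\mathcal N_\phi$ is identified with $H^\gamma$, which embeds continuously into $C(\mathbb S^d)$ and is a Banach algebra; this yields $\|f_{D^{tr},\lambda}-f^*\|_\infty\lesssim E_\phi(\lambda)$, the algebra bound $\|(f_{D^{tr},\lambda}-f^*)^2\|_{H^\gamma}\lesssim E_\phi(\lambda)^2$, and trivially $\|f_{D^{tr},\lambda}-f^*\|_{L^2}\le E_\phi(\lambda)$.

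Next comes the deterministic-to-population conversion, which replaces the law of large numbers used in the random-sample theory. For the signal term I use the worst-case quadrature error estimate in $H^\gamma$ for positive rules exact on $\Pi_s^d$ (a standard consequence of \cref{Lemma:fixed quadrature} and Jackson-type approximation on the sphere), with $s\sim h_{\Lambda^{val}}^{-1}\sim|D|^{1/d}$, to get
\begin{equation*}
\big| S(f_{D^{tr},\lambda}) - \|f_{D^{tr},\lambda}-f^*\|_{L^2}^2 \big|
\lesssim s^{-\gamma}\,\|(f_{D^{tr},\lambda}-f^*)^2\|_{H^\gamma}
\lesssim |D|^{-\gamma/d}E_\phi(\lambda)^2,
\end{equation*}
whose squared $E_\phi$ produces precisely the family $|D|^{-\gamma/d}\big(\lambda^{-\frac{d+2\gamma}{2\gamma}}|D|^{-1}+\lambda^{\alpha-1}\big)$. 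For the noise term I apply Bernstein's inequality conditional on $D^{tr}$, using $|\varepsilon_i^{val}|\le M$, the quasi-uniform weight bound $w_{i,s}^{val}\lesssim|D|^{-1}$ from \cref{Lemma:fixed quadrature}, the variance proxy $V\lesssim M^2|D|^{-1}S(f)$, and the uniform bound $B\lesssim|D|^{-1}ME_\phi(\lambda)$, followed by a union bound over $\Xi_L$, to obtain
\begin{equation*}
|T(f_{D^{tr},\lambda})|
\lesssim M|D|^{-1/2}\|f_{D^{tr},\lambda}-f^*\|_{L^2}\sqrt{\log\tfrac{3L}{\tilde\delta}}
+ M|D|^{-1}E_\phi(\lambda)\log\tfrac{3L}{\tilde\delta}.
\end{equation*}
Finally I assemble the oracle inequality: letting $\lambda^*$ attain $\min_{\lambda\in\Xi_L}\|f_{D^{tr},\lambda}-f^*\|_{L^2}$, the defining property of $\hat\lambda$ gives $S(f_{\hat\lambda})-2T(f_{\hat\lambda})\le S(f_{\lambda^*})-2T(f_{\lambda^*})$; substituting the signal and noise estimates and using Young's inequality $2ab\le\tfrac12a^2+2b^2$ to absorb the $M|D|^{-1/2}\|f_{\hat\lambda}-f^*\|_{L^2}\sqrt{\log}$ term into $\tfrac12\|f_{\hat\lambda}-f^*\|_{L^2}^2$ on the left (and the analogous $\lambda^*$ term into the best-grid error), leaves the best-grid error plus the two displayed remainder families, with the logarithmic factors from the successive high-probability steps compounding to $\log^3\frac{3L}{\tilde\delta}$.

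I expect the main obstacle to be this deterministic signal conversion. Unlike the i.i.d. setting where $S(f)$ concentrates around $\|f-f^*\|_{L^2}^2$ by a concentration inequality, here $S$ equals the population squared norm only up to a quadrature remainder, and controlling that remainder forces me to square the residual inside $H^\gamma$ (hence the Banach-algebra estimate $\|(f-f^*)^2\|_{H^\gamma}\lesssim\|f-f^*\|_\phi^2$) and to propagate the resulting $E_\phi(\lambda)^2$ uniformly over the grid with constants independent of $|D|$; reconciling this deterministic error with the localized Bernstein bound for the noise, while keeping every estimate simultaneous over $\Xi_L$, is where the delicate bookkeeping lies.
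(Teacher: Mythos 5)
Your proposal is correct and follows essentially the same route as the paper's proof: expand the validation risk via $y_i^{val}=f^*(x_i^{val})+\varepsilon_i$, compare $\hat\lambda$ with the oracle $\lambda_*$ through the minimizing property \eqref{CV-for-parameter}, convert weighted empirical sums into population $L^2$ norms by the quadrature rule, control the noise cross-term by a Bernstein-type inequality with a union bound over $\Xi_L$, and invoke \cref{Theorem:rate-spectral} with $\beta=1$ to bound $\|f_{D^{tr},\lambda}-f^*\|_\phi$ uniformly over the grid. The differences are technical. For the empirical-to-population step the paper simply invokes \cref{Lemma:quadrature-for-convolution}, which is exactly your estimate with $f=g=f_{D^{tr},\lambda}-f^*$; your re-derivation via the Banach-algebra property of $H^\gamma$ plus a worst-case quadrature error bound is sound in substance, but note that ``exactness on $\Pi_s^d$ plus Jackson-type approximation'' alone only yields the rate $s^{-(\gamma-d/2)}$ --- the rate $s^{-\gamma}$ you need is a deeper property of positive rules in $H^\gamma$ (Hesse--Sloan-type worst-case error results), so this step must lean on that theory or on the paper's lemma. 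For the noise term, the paper's \cref{Lemma: cross-validation} with the free parameter set to $a=|D^{val}|^{-1}$ directly produces remainders of the form $|D|^{-1}\log\frac{2L}{\tilde\delta}\,(\max_\lambda\|f_{D^{tr},\lambda}-f^*\|_\phi^2+\max_\lambda\|f_{D^{tr},\lambda}-f^*\|_\phi)$ --- which is precisely where the squared family $\lambda^{-\frac{d+2\gamma}{2\gamma}}|D|^{-1}+\lambda^{\alpha-1}$ inside the $\log^3$ term of the statement originates --- whereas your localized Bernstein bound plus Young absorption into the left-hand side gives remainders that are dominated by (indeed slightly cleaner than) the theorem's right-hand side, so both assemblies yield the stated bound. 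The only cosmetic mismatch is that your argument leaves a constant multiple of $\min_\lambda\|f_{D^{tr},\lambda}-f^*\|_{L^2}^2$ as the oracle term, where the statement (via the paper's own, slightly loose, passage from the squared to the unsquared norm) carries $\min_\lambda\|f_{D^{tr},\lambda}-f^*\|_{L^2}$ with coefficient one; this does not affect the conclusions drawn from the theorem.
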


The above theorem demonstrates the effectiveness of the proposed weighted cross-validation approach.
We then derive the  corollary based on \cref{Theorem:parameter} to show the feasibility and optimality  of the proposed weighted cross-validation strategy. 



\begin{corollary}\label{corollary:parameter}
	Let $\tilde{\delta}\in(0,1)$. Under the assumptions in  \cref{Theorem:parameter}, if  $|D^{tr}| = |D^{val}| = |D|/2$ and $\lambda\in\Xi_L:=\{1,1/2,\dots,1/\sqrt{|D|}\}$ and $\gamma>3d/2$, then
	with confidence at least $1-\tilde{\delta}$,  there holds
	$$ 
	\|f_{D^{tr},\hat{\lambda}}-f^*\|_{L^2(\mathbb S^d)}^2\leq
	C_7|D|^{-\frac{2\gamma\alpha}{2\gamma\alpha+d}} \log\frac{3}{\tilde{\delta}},
	$$ 
where $C_7$ is a constant independent of $|D|$ or $\tilde{\delta}$.
\end{corollary}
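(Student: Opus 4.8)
The plan is to derive \cref{corollary:parameter} from the oracle inequality of \cref{Theorem:parameter} by controlling its three summands separately. Since $\Lambda$ is $\theta$-quasi-uniform and $|D^{tr}|=|D^{val}|=|D|/2$, the hypotheses of \cref{Theorem:parameter} are met, so with confidence $1-\tilde\delta$ one has
$$\|f_{D^{tr},\hat\lambda}-f^*\|_{L^2(\mathbb S^d)}^2\le \mathcal T_0+\mathcal E_1+\mathcal E_2,$$
where $\mathcal T_0$ is the leading $\min_{\lambda\in\Xi_L}$ term and $\mathcal E_1,\mathcal E_2$ are the two $\max_{\lambda\in\Xi_L}$ correction terms. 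I would then show that each of $\mathcal T_0,\mathcal E_1,\mathcal E_2$ is bounded by $C_7|D|^{-\frac{2\gamma\alpha}{2\gamma\alpha+d}}\log\frac3{\tilde\delta}$ up to constants and logarithmic factors.

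For the leading term $\mathcal T_0$, I would exhibit a single near-optimal candidate in $\Xi_L$. The analysis of \cref{Corollary:approximation-spectral} dictates the optimal scale $\lambda^\ast\sim|D|^{-\frac{2\gamma}{2\gamma\alpha+d}}$; I would select the grid point $\lambda_\ell\in\Xi_L$ nearest to $\lambda^\ast$, so that by the fineness and range of $\Xi_L$ one has $\lambda_\ell\sim\lambda^\ast$. Quasi-uniformity makes the geometric restriction \eqref{geo-distr-restriction} automatic, since $h_\Lambda^\gamma\tau_\Lambda^d\sim|D|^{-\gamma/d}\le c_2'|D|^{-\frac{\gamma}{2\gamma\alpha+d}}$. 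Applying \cref{Corollary:approximation-spectral} (for $1\le\alpha\le3$) or \cref{Corollary:approximation-spectral-large} (for $\alpha>3$, using the qualification $\nu_g\ge\alpha$ together with quasi-uniformity) to $f_{D^{tr},\lambda_\ell}$ on $D^{tr}$, whose cardinality is $|D|/2$, yields $\|f_{D^{tr},\lambda_\ell}-f^*\|_{L^2(\mathbb S^d)}\le C_3|D|^{-\frac{\gamma\alpha}{2\gamma\alpha+d}}\log\frac3{\tilde\delta}$; squaring and using $\mathcal T_0\le\|f_{D^{tr},\lambda_\ell}-f^*\|_{L^2(\mathbb S^d)}^2$ gives the required bound for $\mathcal T_0$.

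For the correction terms I would bound each $\max_{\lambda\in\Xi_L}$ by evaluating its monotone pieces at the endpoints of $\Xi_L$. The nonnegative powers $\lambda^{\alpha-1},\lambda^{(\alpha-1)/2}$ are maximized at $\lambda=1$ and contribute $|D|^{-\gamma/d}$ to $\mathcal E_1$ and $|D|^{-1}\log^3\frac{3L}{\tilde\delta}$ to $\mathcal E_2$; since $\gamma>3d/2>d$ and $\frac{2\gamma\alpha}{2\gamma\alpha+d}<1$, both are of smaller order than the target. The negative powers $\lambda^{-\frac{d+2\gamma}{2\gamma}}|D|^{-1}$ and $\lambda^{-\frac{d+2\gamma}{4\gamma}}|D|^{-1/2}$ are maximized at the smallest grid value; after multiplying by the prefactors $|D|^{-\gamma/d}$ and $|D|^{-1}$, dominating them by the target rate reduces to endpoint exponent inequalities of the type $\frac{\gamma}{d}-\frac{d}{4\gamma}\ge\frac12$, and it is precisely the condition $\gamma>3d/2$ that secures these with room to spare. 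The factors $\log^3\frac{3L}{\tilde\delta}$ with $L\sim\sqrt{|D|}$ are of logarithmic order and are absorbed into $C_7$.

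The main obstacle is the two-sided balance hidden in $\Xi_L$. On one hand the grid must descend low enough that a point $\lambda_\ell\sim\lambda^\ast$ is available, so that $\mathcal T_0$ attains the optimal rate; on the other hand its smallest value must not be so small that the negative-power pieces of $\mathcal E_1,\mathcal E_2$, which blow up as $\lambda\to0$, overwhelm the target. Reconciling these competing demands is exactly what consumes the strengthened smoothness $\gamma>3d/2$, and verifying the resulting endpoint exponent inequalities (rather than any fresh probabilistic estimate) is the genuinely delicate step; everything else is bookkeeping of constants and logarithmic factors already supplied by \cref{Theorem:parameter} and \cref{Corollary:approximation-spectral}.
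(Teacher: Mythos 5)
Your route---instantiate \cref{Theorem:parameter} with the given grid, exhibit a near-optimal grid point for the oracle term, and bound the two correction terms by endpoint evaluation---is the intended derivation (the paper supplies no separate proof of \cref{corollary:parameter}), and your treatment of $\mathcal E_1$ and $\mathcal E_2$ is correct. The genuine gap is in the leading term. Your claim that ``by the fineness and range of $\Xi_L$ one has $\lambda_\ell\sim\lambda^\ast$'' requires $\lambda^\ast\gtrsim|D|^{-1/2}$, i.e.\ $\frac{2\gamma}{2\gamma\alpha+d}\le\frac12$, i.e.\ $2\gamma(2-\alpha)\le d$. Under the corollary's hypotheses ($\alpha\ge1$, $\gamma>3d/2$) this fails for every $\alpha<2-\frac{d}{2\gamma}$, a nonempty range containing $\alpha=1$: there $\lambda^\ast=|D|^{-\frac{2\gamma}{2\gamma\alpha+d}}$ sits strictly below the grid floor $|D|^{-1/2}$, the nearest grid point is the floor itself, and at that point the fitting part of the bound in \cref{Theorem:rate-spectral} is already of order $\lambda^{\alpha/2}=|D|^{-\alpha/4}$, whose square $|D|^{-\alpha/2}$ exceeds the target $|D|^{-\frac{2\gamma\alpha}{2\gamma\alpha+d}}$ exactly when $2\gamma(2-\alpha)>d$. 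So your bound on $\mathcal T_0$ breaks on part of the admissible parameter range, and your closing diagnosis is half wrong: $\gamma>3d/2$ secures only the negative-power endpoint inequalities for $\mathcal E_1,\mathcal E_2$ (for which $\gamma\ge d$ already suffices); it does nothing to make the grid reach $\lambda^\ast$---increasing $\gamma$ pushes $\lambda^\ast$ further below $|D|^{-1/2}$. What rescues the grid-range side is a lower bound on $\alpha$ (your argument is complete for $\alpha\ge2-\frac{d}{2\gamma}$, in particular for all $\alpha\ge2$), not the strengthened smoothness; as written, the full range $\alpha\ge1$ asserted in the corollary is not covered.

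A second, smaller issue: \cref{Theorem:parameter} as printed bounds $\|f_{D^{tr},\hat\lambda}-f^*\|_{L^2(\mathbb S^d)}^2$ by the \emph{unsquared} oracle term $\min_{\lambda\in\Xi_L}\|f_{D^{tr},\lambda}-f^*\|_{L^2(\mathbb S^d)}$ plus corrections. Your step $\mathcal T_0\le\|f_{D^{tr},\lambda_\ell}-f^*\|_{L^2(\mathbb S^d)}^2$ is false under that literal reading (for a norm below one, the square is the \emph{smaller} quantity), and with the unsquared oracle term this approach can only yield $\|f_{D^{tr},\hat\lambda}-f^*\|^2_{L^2(\mathbb S^d)}\lesssim|D|^{-\frac{\gamma\alpha}{2\gamma\alpha+d}}$, the square root of the claimed rate. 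The derivation goes through only with the squared oracle term, which is what the proof of \cref{Theorem:parameter} actually produces (apply \cref{Lemma:quadrature-for-convolution} to the square of $f_{D^{tr},\lambda_*}-f^*$; the square is dropped in the theorem statement), so a correct write-up must invoke that squared form explicitly rather than silently interchanging the two. (The residual mismatch between the $\log^3\frac{3L}{\tilde\delta}$ factor and the claimed $\log\frac{3}{\tilde\delta}$ is inherited from the corollary itself, so I set it aside.)
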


\subsection{Related work}
KI is a classical and long-standing approach in tackling spherical data. In particular, for noise-free data, \cite{narcowich1998stability,levesley1999norm} derived  the condition number of the kernel matrix and showed their concerns on the stability issue of KI; \cite{mhaskar1999approximation,narcowich2002scattered,levesley2005approximation} deduced the approximation error of KI in terms of the mesh norm $h_\Lambda$;
\cite{narcowich2007direct,hangelbroek2010kernel,hangelbroek2011kernel}    made significant advancement in expanding the approximation power of  KI beyond the native space setting. All these interesting results showed that KI is feasible and efficient  to handle spherical data, provided the geometric distribution of  $\Lambda$ is not too strange and their corresponding outputs are sampled without noise. 

Noisy data fitting on spheres has recently attracted increasing attentions \cite{hesse2017radial,hesse2021local,Feng2021radial,lin2021subsampling,lin2023dis}, in which  \cite{hesse2017radial,Feng2021radial,lin2023dis} is the most  related with our study. Using a novel sampling inequality on the sphere, \cite{hesse2017radial} derived tight approximation error estimates for Tikhonov regularization. There are mainly three differences between our work and \cite{hesse2017radial}. At first, as shown in \eqref{Model1:fixed}, our studied model contains both large and small random noise while \cite{hesse2017radial} focused on extremely small noise like in  \cref{Corollary:approximation-noise-free-1}. As a result, we adopted the integral operator approach \cite{Feng2021radial} rather than the sampling inequality for error analysis.   Then, we are concerned with spectral filter  approaches that contain a large number of algorithms, while \cite{hesse2017radial} only focused on
Tikhonov regularization. At last, our result in  \cref{Theorem:rate-spectral} accommodates Sobolev error estimator and adapts to $f^*\in\mathcal N_\varphi{\subseteq}\mathcal N_\phi$, while \cite{hesse2017radial} only devotes to error analysis in $L^2(\mathbb S^d)$ and $f^*\in\mathcal N_\phi$. 

Comparing with \cite{Feng2021radial}, there are also two novelties in our results, though their models and proof skills are similar. On one hand, we focused on spectral filter algorithm while \cite{Feng2021radial} only considered Tikhonov regularization. It was shown in \cref{Corollary:approximation-spectral-large} that both Landweber iteration and spectral cut-off succeed in circumventing the saturation of Tikhonov regularization. On the other hand, \cite{Feng2021radial} derived optimal approximation error for Tikhonov regularization when $M$
in model \eqref{Model1:fixed} is a constant, but the approximation rate is suboptimal for small $M$. In particular, under the same conditions as  \cref{Corollary:approximation-noise-free-1}, the approximation rate derived in \cite{Feng2021radial} is $\mathcal O(|D|^{-\frac{\gamma\alpha}{2d}})$, which is far worse than the optimal one $\mathcal O(|D|^{-\frac{\gamma\alpha}{d}})$ and implies that adding regularization term may degrade the approximation of KI. Finally, we compare our results with those in \cite{lin2023dis}, where a divide-and-conquer technique is proposed to enhance the stability of KI. There are roughly three differences. Firstly, the algorithms are different. In particular, \cite{lin2023dis} enhanced the stability via a divide-and-conquer approach but we do the same thing by using spectral filters. Secondly, the approximation rates derived in \cite{lin2023dis} for the divide-and-conquer approach is suboptimal while our derived rates are optimal. Thirdly, we provide adaptive strategy to select the filter parameter while there lacks similar results for selecting the splitting parameter in \cite{lin2023dis}.  

Spectral filter approach is not new in spherical data analysis. In fact, the well known filtered  hyperinterpolation \cite{sloan2012filtered,an2012regularized,an2021lasso,lin2021subsampling} showed that adding low-pass filters to some spherical polynomials is capable of improving its   localization  in the spatial domain. It should be highlighted that the spectral filters for KI are actually high-pass that are essentially different from those in the filtered hyperinterpolation. Another line of similar work is the spectral algorithms in machine learning \cite{gerfo2008spectral,dicker2017kernel,guo2017learning,mucke2018parallelizing}, where similar spectral filters are used to tackle with i.i.d. samples. In particular, \cite{dicker2017kernel,guo2017learning} derived optimal learning rates for such spectral algorithms in the framework of statistical learning theory. We highlight  again that it is highly non-trivial to borrow the idea of spectral filters from  i.i.d. samples analysis to spherical data fitting, both in algorithmic designation and theoretical analysis. From the algorithm side, deterministic sampling requires to force the algorithm to distinguish different points on the sphere that is not considered for i.i.d. samples. From the theoretical analysis side,  deterministic sampling makes the existing statistical tools such as large number law and some important concentration inequalities unavailable, and requires exclusive tools like the spherical quadrature rule and spherical harmonic analysis.

\section{Numerical Experiments}\label{Sec.Numerical}
In this section, we conduct both toy simulations and two real-world data trials to show the power of WSF. 
All the experiments are run through Python 3.7 on a PC equipped with an Intel Core i5 2 GHz processor. 
The Python code, as well as the adopted data sets, can be downloaded from \url{https://github.com/Ariesoomoon/Weighted-Spectral-Filters-on-Spheres.git}.


\subsection{Toy simulations}

We conduct six toy  simulations to show the feasibility and verify the improved prediction performance of WSF. 
The first one aims to demonstrate the role of the high-pass filters in fitting noisy data. The second one illustrates the advantage of deterministic sampling over the random sampling. The 
third one is designed to show the effectiveness of the proposed weighted cross-validation scheme for parameter selection.
The fourth one indicates that WSF succeeds in solving  the instability issue of KI.  
The fifth one shows the advantages of WSF under the uniform measure. 
The last one visualizes the performance of WSF in fitting noisy data with different noise levels.

The samples are generated via \eqref{Model1:fixed} with
$\varepsilon_i$ following the truncated  Gaussian distribution $\mathcal{N}\left(0, \delta^2\right)$ for $\delta\in\{0.1, 0.3, 0.5\}$ and 

\begin{equation}\label{Model_toysimulation}
    f^*(x)=\sum_{i=1}^6 \tilde{\psi}\left(\left\|x-z_i\right\|_2\right),
\end{equation}
where $\tilde{\psi}(u)$ is the well-known Wendland function
\begin{equation}\label{Wendland_function}
    \tilde{\psi}(u)=(1-u)_{+}^8\left(32 u^3+25 u^2+8 u+1\right),
\end{equation}
$u_{+} = \max\{u, 0\}$, $z_i \in \mathbb{S}^2$  $(i=1, \dots,6)$ are the center points   $(1,0,0)$, $(-1, 0,0), (0,1,0), (0, -1, 0),(0,0,1), (0,0,-1)$,   and the truncated value of Gaussian distribution is 2.5, i.e., the noise is set to be $2.5\mbox{sgn}(\varepsilon_i)$ if $|\varepsilon_i|\geq 2.5$. 
To fully demonstrate the effectiveness of WSF and explore the effect of different sampling mechanisms, training samples $\{x_i\}_{i=1}^{|D|}\subset\mathbb S^2$ are generated in three ways:
(1) random sampling to generate the equidistributed points on $\mathbb S^2$ \cite{deserno2004generate};
(2) Womersley's symmetric spherical $t$-designs \cite{chen2018spherical} with $|D|= t^2/2 + t/2 + O(1)$ and $t$ varies in the range of $\{3, 7, 11, \dots, 63\}$; (3)    generating 120 points $\{x_i\}_{i=1}^{120}$ from spherical 15-designs and then rotating them according to rotation matrix
$$
A_k:=\left(\begin{array}{ccc}
\cos (k \pi / 20) & -\sin (k \pi / 20) & 0 \\
\sin (k \pi / 20) & \cos (k \pi / 20) & 0 \\
0 & 0 & 1
\end{array}\right)
$$
for $k=1, 2,\cdots, 19$, i.e., $x_{i,k}=A_k {x}_i$  in $k$th rotation.
That is, there are totally $120(k+1)$ input points after $k$ rotations. These three input generating ways are called the random way,  $t$-design way, and rotated way, respectively. 
The validation data set are generated in $t$-design way with $t=47$.
There are $N'=4000$ testing samples for Simulation 1-5 and $N'=6000$ for Simulation 6  generated according to $y'_j=f^*\left(x_j'\right)$, 
where $x_j'$ are drawn i.i.d. according to  the uniform distribution in the (hyper)cube $[-1,1]^{3}$ with each $x_j'$ multiplied by the scale factor $\frac{1}{||x_j'||_2}$.
The adopted  kernel function (positive definite)   is $\phi(x\cdot x')=h(\|x-x'\|_{2})$ with 
$$
   h(\|x\|_{2}):=
   \left\{\begin{array}{ll}
   (1-\| x \| _{2})^{4}(4\| x\| _{2} +1), &  0 < \| x \| _{2}\leq 1,x \in \mathbb{R}^{3}, \\
 0, & \| x \| _{2} >1.
 \end{array}
 \right.
$$
It can be found in \cite{wendland2004scattered} and \cite{narcowich2007approximation} that $\hat{\phi}_k\sim k^{-4}$ and $f^*\in \mathcal N_\varphi$ with $\varphi$ and $\phi$ satisfying \eqref{kernel-relation} with $\alpha=2$.  

We adopt the RMSE, defined by $RMSE(\hat{f}_D, y')$:=$\sqrt{\frac{1}{N'}\sum_{j=1}^{N'}(\hat{f}_D(x'_j)-y'_j)^2}$ 
on testing samples, to evaluate prediction performance, where $\hat{f}_D(x'_j)$ represents the predicted output of input point $x'_j$ based on the training data set $D$. Another error measure, infinity norm ($\|\cdot\|_{\infty}$), is also adopted in our Simulation 5, with
$\|\hat{f}_D-y'\|_{\infty}:=\max_{i=1,\dots,N'}|\hat{f}_D(x'_i)-y'_i|.$
We conduct each simulation five times for averaging. 

  \vspace{0.15in}
\textbf{Simulation 1: Role of  spectral filters.} In this simulation, we show the role of spectral filters in affecting the stability, fitting and approximation performance. Taking $\{x_i\}_{i=1}^{|D|}$ generated in  rotated way with the  $k=10$, we show that WSF significantly reduces the CNKM (condition number of kernel matrix) in  \cref{fig:CNKM_rotation}.

\begin{figure}[H]
  \vspace{-0.2in}
	\centering
	\subfigure[Tikhonov regularization]{\includegraphics[scale=0.31]{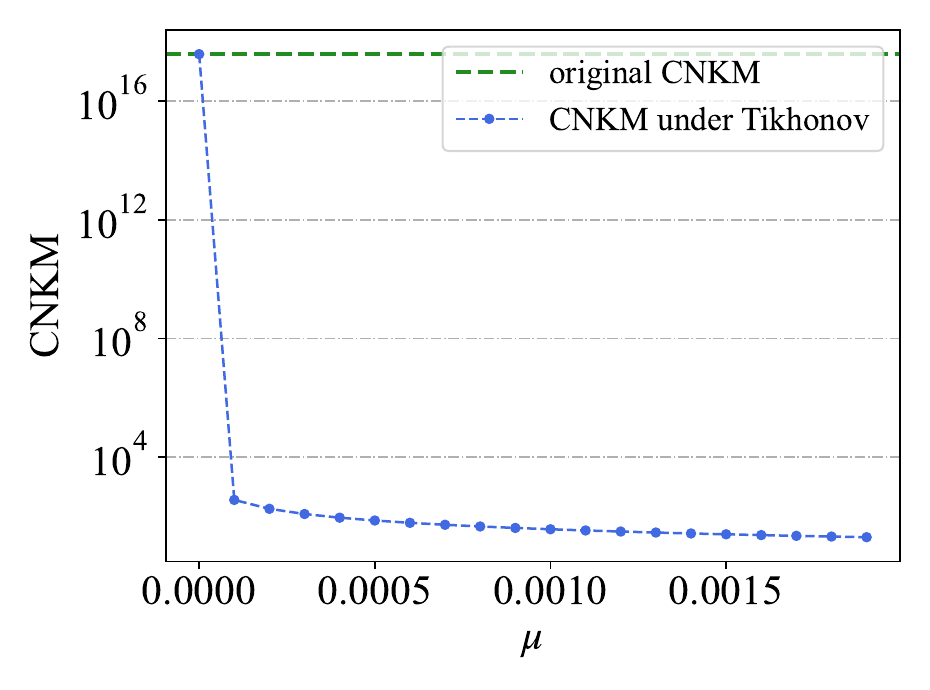}} 
	\subfigure[Landweber iteration]{\includegraphics[scale=0.31]{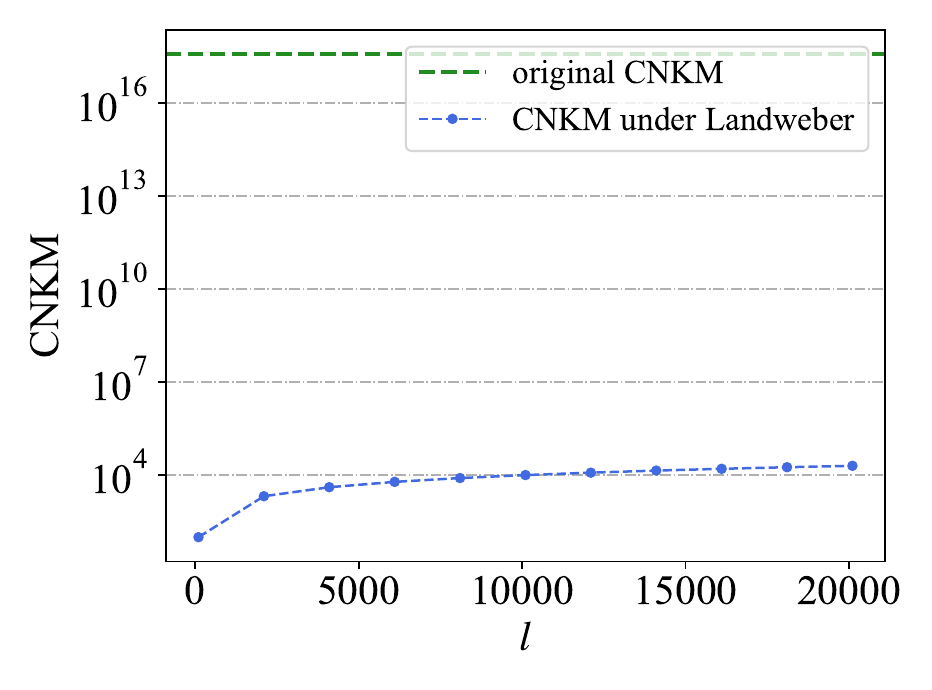}}
	\subfigure[Spectral cut-off]{\includegraphics[scale=0.31]{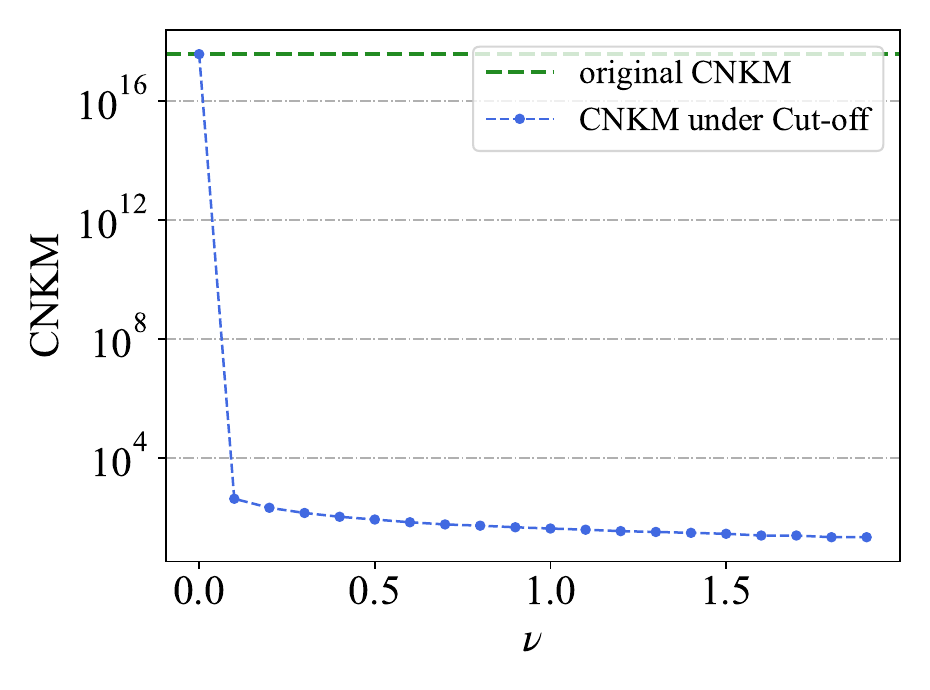}} 
   \vspace{-0.05in}
	\caption{WSF to reduce the CNKM for rotated data with $|D|=1200$ 
 and truncated Gaussian noise  with standard deviation $\delta=0.5$.}
	\label{fig:CNKM_rotation}
	  \vspace{-0.15in}
\end{figure}

 \cref{fig:CNKM_rotation} shows that all WSFs succeed  in stabilizing KI in terms of reducing the CNKM. Furthermore, CNKM decreases as the filter parameters, i.e., $\mu$ in Tikhonov regularization, $1/l$ in Landweber iteration and $\nu$ in spectral cut-off,  increase. However, reducing CNKM is not enough to show the excellent performance of WSF. In the following, we show the  role of filter parameter in stability, fitting and approximation error, respectively, under different sampling mechanisms. Specifically, $t$ is set to 47 in $t$-design way to generate 1130 training samples, the number of rotation $k$ is set to 10 to generate 1200 training samples, and the random way generates 1130 training samples. The numerical results are shown in \cref{fig: bias} and \cref{fig: stability_approximation}.

  \vspace{-0.15in}
\begin{figure}[H]
	\centering
	\subfigure[Random ($|D|=1130$)]{\includegraphics[scale=0.31]{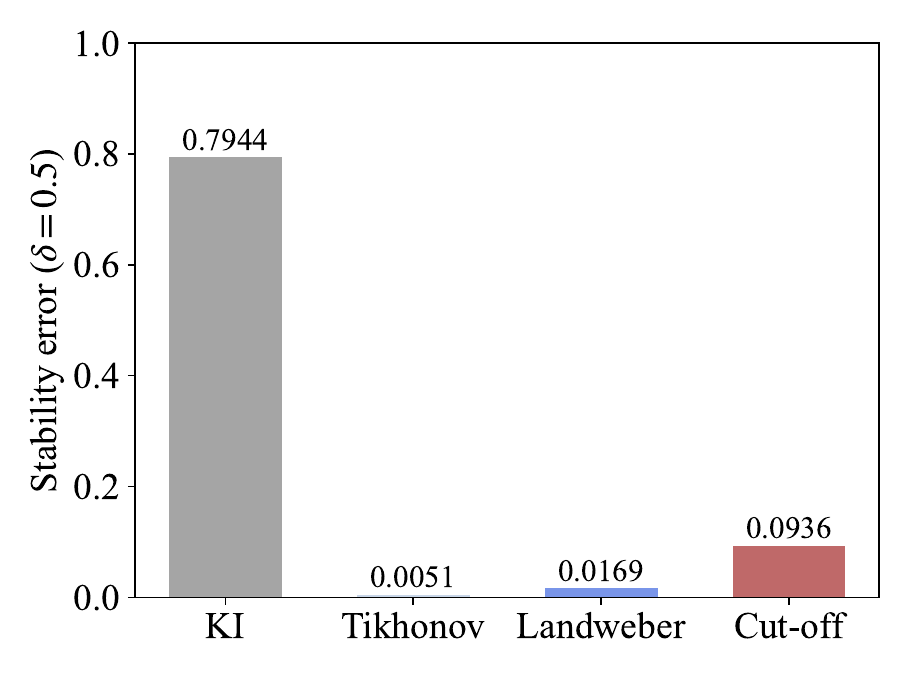}} 
	\subfigure[$t$-design ($|D|=1130$)]{\includegraphics[scale=0.31]{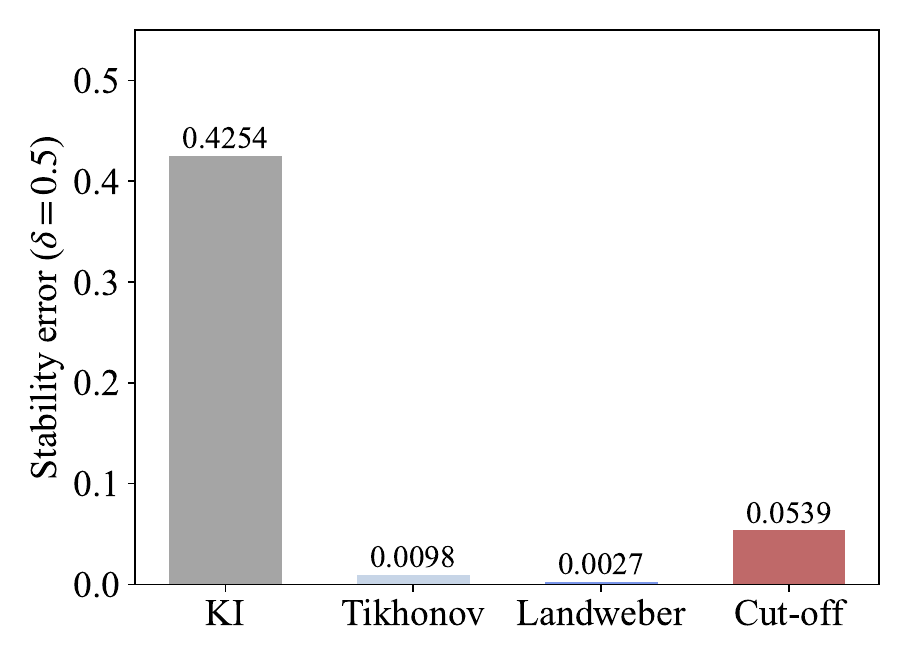}}
	\subfigure[Rotation ($|D|=1200$)]{\includegraphics[scale=0.31]{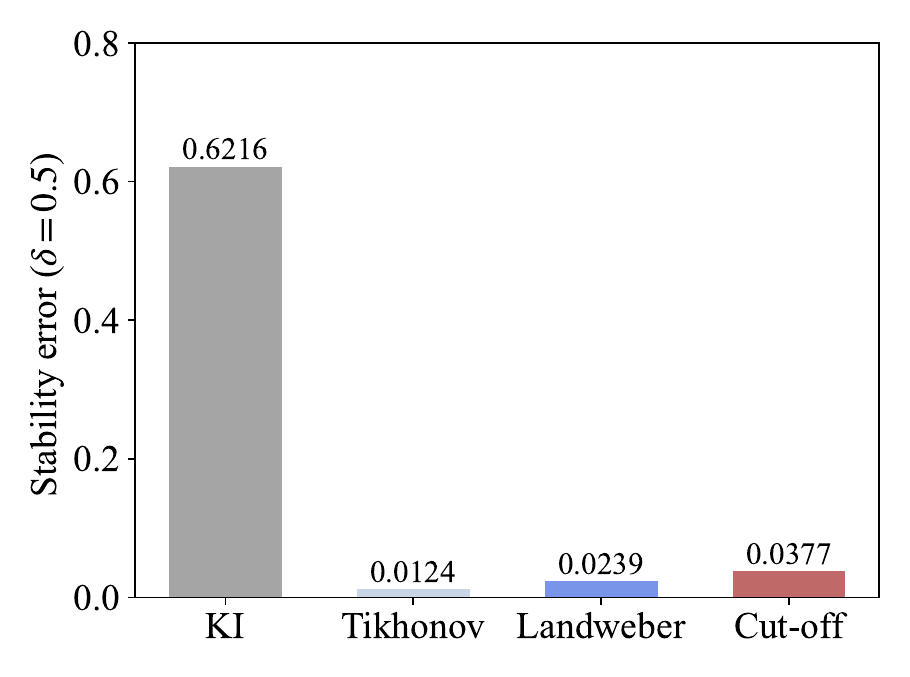}} 
   \vspace{-0.15in}
	\caption{WSF to reduce the stability error with truncated  Gaussian noise with  $\delta=0.5$.}
	\label{fig: bias}
\end{figure}
  \vspace{-0.2in}

  As shown in \cref{fig: bias}, WSF significantly reduces the stability error (stability part of approximation error), provided filter parameters 
  are appropriately tuned, which verifies our estimates of the stability error in  \cref{Theorem:rate-spectral}. However, \cref{fig: stability_approximation} shows that the fitting error increases with respect to the filter parameter, showing that reducing the CNKM in  \cref{fig:CNKM_rotation} cannot guarantee the good fitting capability. This verifies our fitting error estimate in  \cref{Theorem:rate-spectral}. 
 As shown in \cref{fig: stability_approximation}, the approximation rate decreases with the filter parameter at first and then increases, demonstrating that there is an optimal filter parameter larger than 0 to optimize the approximation performance of WSF. We also find that the approximation capability of WSF is stable with respect to the filter parameter in the sense that small change of the filter parameter does not lead to dramatic change of the approximation error. Furthermore, the best filter parameter can be achieved  near the one that the stability error equals to the fitting error.  

 \begin{figure}[H]
	\centering
	\includegraphics[scale=0.45]{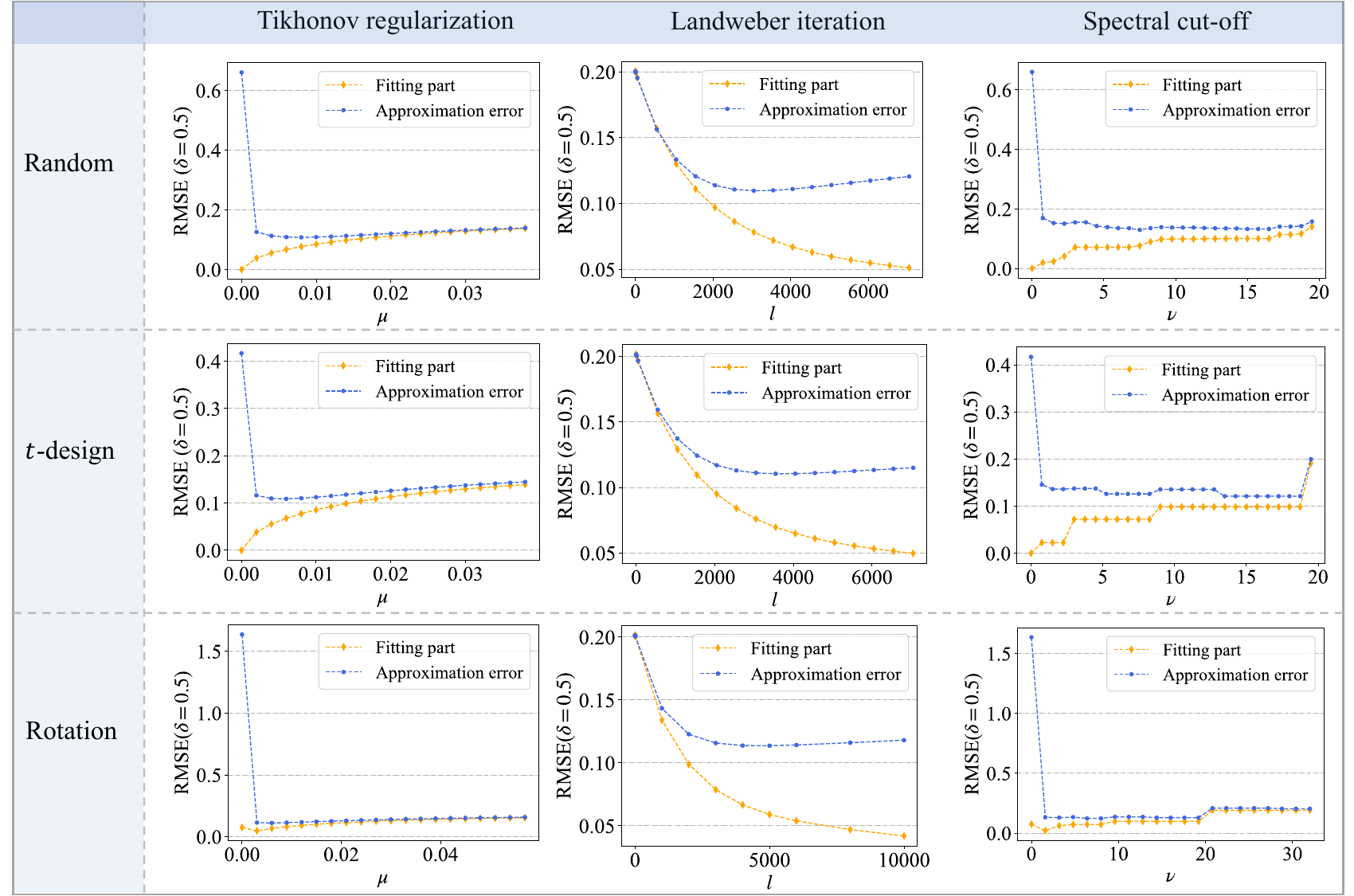}
   \vspace{-0.05in}
	\caption{Fitting error and approximation error of WSF change along with filter parameters with truncated Gaussian noise with standard deviation $\delta=0.5$.}
	\label{fig: stability_approximation} 
	\vspace{-0.1in}
\end{figure}
\vspace{-0.3cm}

  \vspace{-0.15in}
\begin{figure}[H]
	\centering
	\subfigure{\includegraphics[scale=0.3]{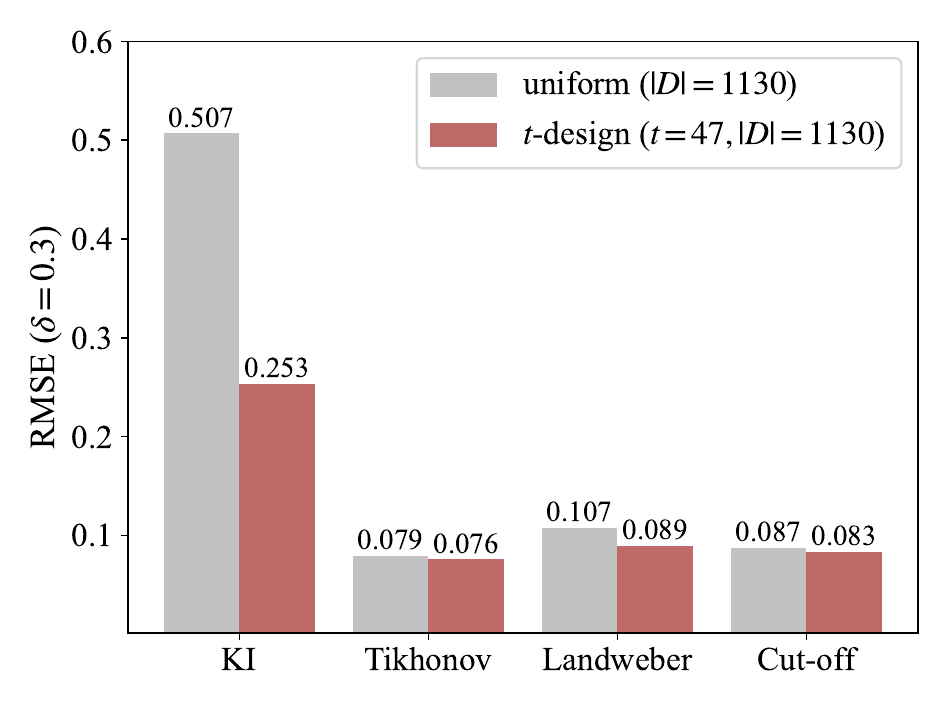}
\label{fig:sample_role_noise03} } 
 	\subfigure{\includegraphics[scale=0.3]{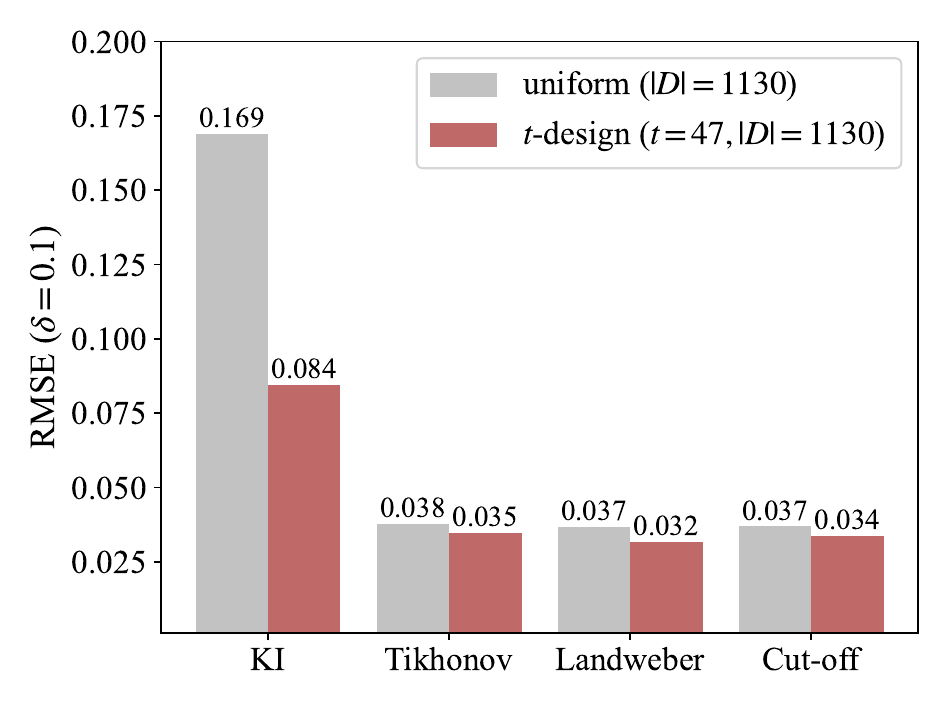}
  \label{fig:sample_role_noise01}}
  	\subfigure{\includegraphics[scale=0.3]{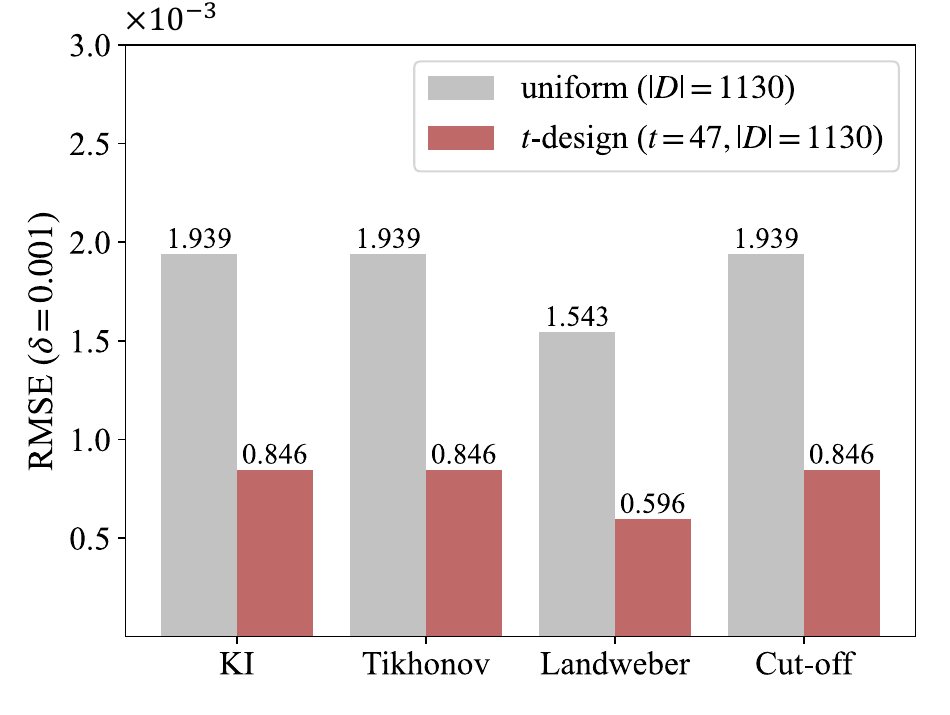}
   \label{fig:sample_role_noise001}} 
   \vspace{-0.15in}
	\caption{Approximation performance of WSF when training samples are drawn by different sampling mechanisms (random way and $t$-design way with $t=47$, $|D|=1130$). Note that the ordinate of the rightmost subplot is the RMSE multiplied by $10^{-3}$.}
	\label{fig:sample_role}
	\vspace{-0.1in}
\end{figure}

\textbf{Simulation 2: Role of  sampling mechanisms.} In this simulation, we study the impact of the sampling mechanism on approximation performance of KI and WSF under different noise levels.  \cref{fig:sample_role} shows the  results of different approaches in $t$-design way with $t=47$ 
and in random way with same data size as in $t$-design way, respectively. The validation data set is generated in $t$-design way with $t=45$.

There are four interesting observations  in \cref{fig:sample_role}. Firstly, the sampling mechanism plays a crucial role in KI. The main reason is that CNKM depends heavily on $q_\Lambda$ \cite{narcowich1998stability} that would be changed dramatically for different sampling mechanisms. Secondly, the sampling mechanism does not affect the approximation performance of WSF much. As shown in  \cref{Corollary:approximation-spectral}, the optimal approximation performance can be realized by numerous geometric distributions of $\Lambda$ and it is easy to verify that both the  $t$-design and random sampling way used in this study  satisfy the condition \eqref{geo-distr-restriction} with high probability. 
Thirdly, WSF outperforms KI under both sampling mechanisms in relatively high noise cases (\cref{fig:sample_role_noise03} and \cref{fig:sample_role_noise01}), which verifies the  \cref{Corollary:approximation-spectral} and shows the necessity of introducing the spectral filter for KI. Finally, WSF does not degrade the approximation performance of KI in the lower noise case (\cref{fig:sample_role_noise001}), which verifies the \cref{Corollary:approximation-noise-free-1}.



	\vspace{0.1in}
\textbf{Simulation 3: Effectiveness of weighted parameter selection.} 
This simulation studies the effectiveness of the proposed weighted parameter selection strategy. We take $t$-design and random way to generate 1130 training samples, respectively, with the noise level $\delta=0.5$. Let $D^{tr}=\{(x_i^{tr}, y_i^{tr})\}$,  $D^{val}=\{(x_i^{val}, y_i^{val})\}$, and $D^{test}=\{(x_i^{test}, y_i^{test})\}$ be the training, validation, and testing data set, respectively.  
Denote $RMSE_{\hat{\lambda}}$ the approximation error of WSF, which runs first on  $D^{tr}$ and then determines filter parameter $\hat{\lambda}\in\Xi_L$ based on $D^{val}$ via weighted parameter selection strategy as stated in \eqref{CV-for-parameter}. 
Denote $RMSE_{\lambda'}$ the approximation error of WSF with filter parameter $\lambda'\in\Xi_L$ selected directly based on $D^{test}$.
 It should be mentioned that $\lambda'$ provides a baseline to measure the optimality of $\hat{\lambda}$.

Table  \ref{tab: weight parameter selection} shows that for each WSF, $RMSE_{\hat{\lambda}}$ nearly equals $RMSE_{\lambda'}$, verifying the effectiveness of the proposed parameter selection approach. This versifies \cref{Theorem:parameter} and \cref{corollary:parameter} and demonstrates the optimality of the proposed weighted cross-validation scheme.

\vspace{-0.05in}
\begin{table}[H]
\small
  \renewcommand{\arraystretch}{1.2}
  \centering
 \begin{tabular}{c|ccc|ccc}
  \hline
  \multirow{2}{*}{} & \multicolumn{3}{c|}{\textbf{$t$-design}}&\multicolumn{3}{c}{\textbf{Random}} \\  
 
  & $RMSE_{\hat{\lambda}}$     & $RMSE_{\lambda'}$    & $\Delta RMSE$ & $RMSE_{\hat{\lambda}}$     & $RMSE_{\lambda'}$    & $\Delta RMSE$  \\ 
  \hline
  Tikhonov   & 0.1056  & 0.1096  & 4.0$\times 10^{-3}$  & 0.1067   & 0.1100  & 3.3$\times 10^{-3}$    \\ 
  Landweber & 0.1069 & 0.1078   & 8.5$\times 10^{-4}$ & 0.1081   & 0.1082  &  8.1$\times 10^{-5}$  \\
  Cut-off  & 0.1230  & 0.1269   & 3.9$\times 10^{-3}$ & 0.1403   & 0.1427  &  2.4$\times 10^{-3}$   \\   \hline
 \end{tabular}
 \vspace{-0.05in}
   \caption{Effectiveness of weighted parameter selection.
   }
   \label{tab: weight parameter selection}
\end{table}
\vspace{-0.1in}

\textbf{Simulation 4: Outperformance of WSF.} In this simulation, we show  that the above three WSFs are able to circumvent the instability issue of KI via showing the relation between the RMSE and data size with different noise levels. As shown in  \cref{fig: drawbackofKI_tdesign} and  \cref{fig: drawbackofKI_rotation} in Section \ref{Sec.Stability}, KI has   critical limitation in spherical data analysis in terms that its approximation rate does not decrease with the data size. We show in   \cref{fig: MSE_N} that adding spectral filters succeeds in breaking through this limitation. 
In particular, 
 \cref{fig: MSE_N} shows that RMSE of Tikhonov regularization, Landweber iteration and spectral cut-off decreases with the growing data size in random, $t$-design, and rotated way. 

 There are three interesting observations in \cref{fig: MSE_N}: (1) All the three WSFs enable the RMSE of KI to decrease with respect to $|D|$, which verifies our theoretical assertions in  \cref{Theorem:rate-spectral} and  \cref{corollary:parameter}; (2) WSF is able to circumvent the instability issue of KI in the sense that it is efficient for even large noise $\delta=0.5$; (3) WSF is effective for different sampling mechanisms, which is totally different from KI. In fact, as shown in  \cref{fig:sample_role}, the prediction performance of KI changes dramatically when the sampling mechanism changes. The stability with respect to the sampling mechanism also  demonstrates the power of spectral filters and verifies the condition \eqref{geo-distr-restriction} in  \cref{Corollary:approximation-spectral}. 

 


\vspace{-0.2in}
\begin{figure}[H]
	\centering
	\includegraphics[scale=0.5]{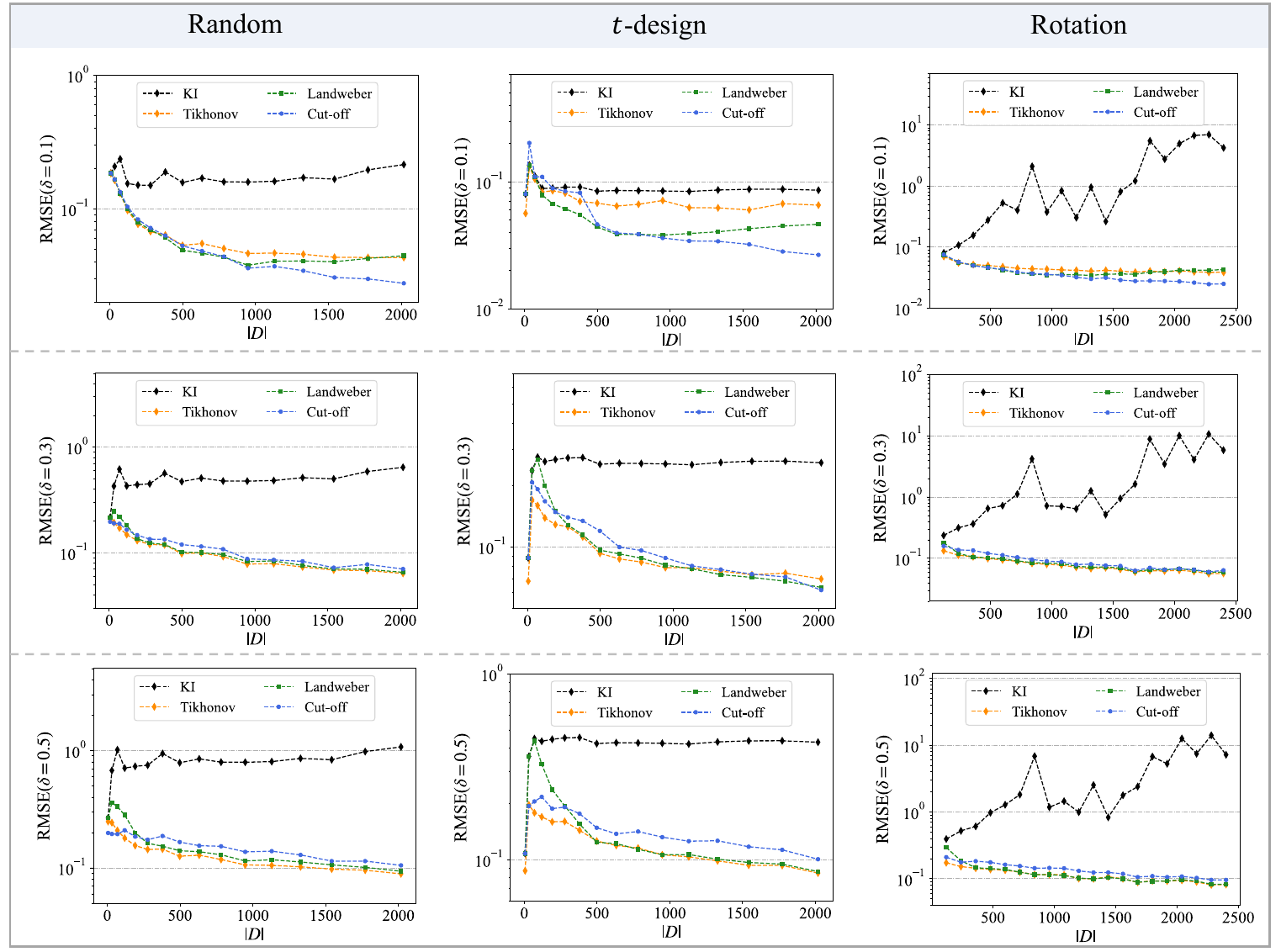}
  \vspace{-0.05in}
	\caption{Relations between the RMSE and training sample size of WSF and KI when training samples are drawn by different sampling mechanisms with noise level $\delta=0.1, 0.3,0.5$.}
	\label{fig: MSE_N} 
\end{figure}
\vspace{-0.3cm}

   \vspace{-0.15in}
 \begin{figure}[H]
	\centering
	\subfigure[]{\includegraphics[scale=0.3]{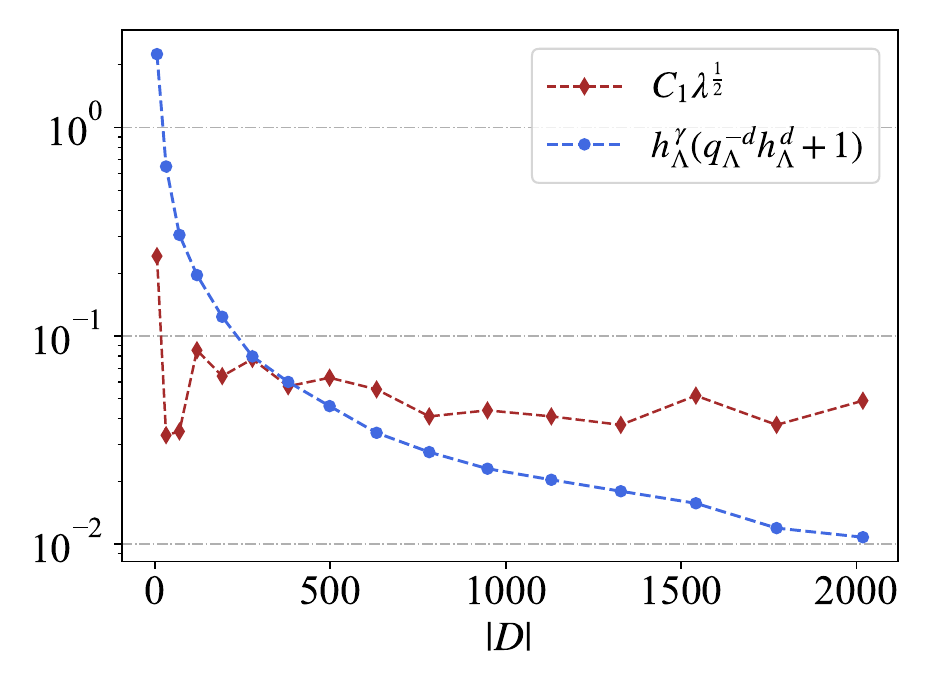}
\label{fig:distribution_bound} } 
 	\subfigure[]{\includegraphics[scale=0.3]{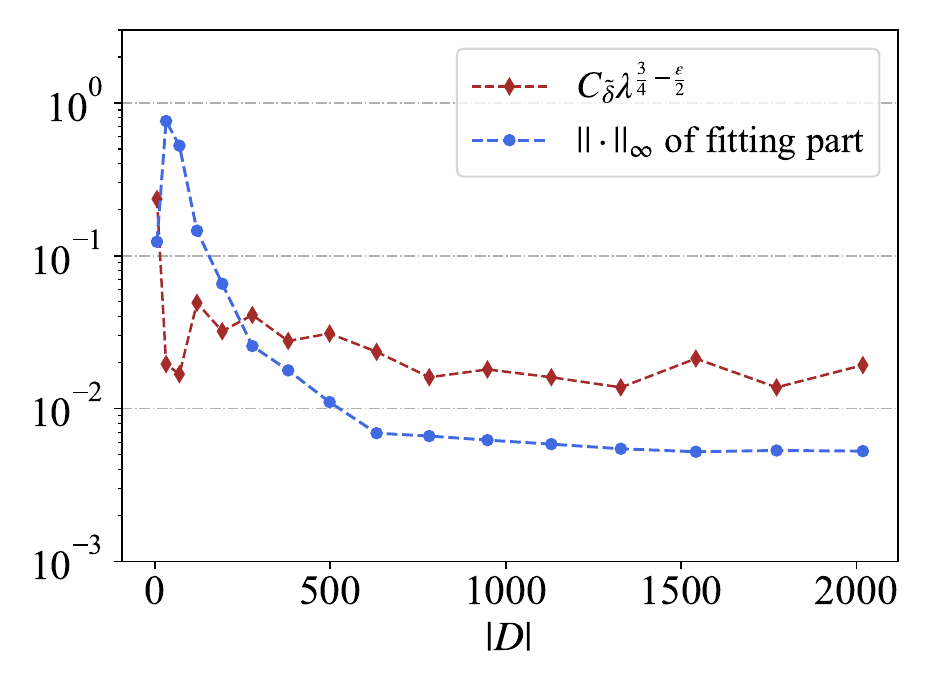}
  \label{fig:Fitting_bound}}
  	\subfigure[]{\includegraphics[scale=0.3]{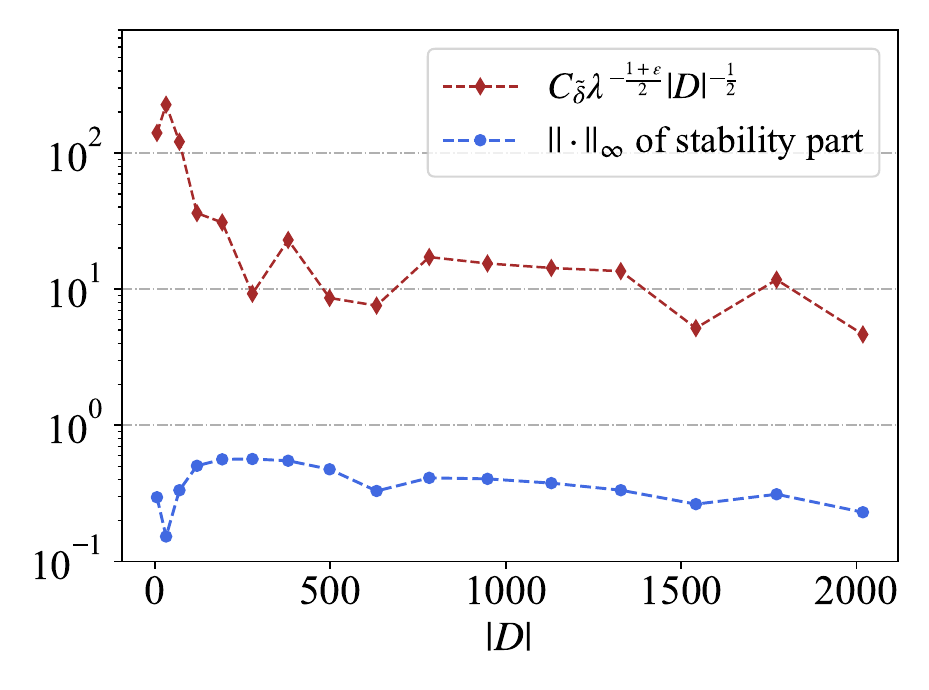}
   \label{fig:Stability_bound}} 
   \vspace{-0.15in}
	\caption{Result of Tikhonov regularization when training samples are drawn in $t$-design way with noise level of $\delta=0.3$, where $C_1=2, C_{\tilde{\delta}}=5, \epsilon=10^{-5}$. $\lambda$ here refers to the filter parameter $\mu$ of Tikhonov regularization.}
	\label{fig:infinity_condition}
	\vspace{-0.15in}
\end{figure}


\textbf{Simulation 5: Advantages of WSF under the uniform measure.} 
We also use another error measure, infinity norm $\|\cdot\|_{L^\infty(\mathbb S^d)}$ (abbreviated as $\|\cdot\|_{\infty}$ in the result figure) to evaluate the approximation performance of WSF and KI to closely connect the numerical results with theoretical analysis in \cref{Theorem:rate-spectral}. For simplicity, we only present the relevant result when training samples are drawn in $t$-design way. Taking Tikhonov regularization as an example,  \cref{fig:distribution_bound} shows that the condition \eqref{lambda-restriction} always holds for $t$-design sampling mechanism. Furthermore, for any $\epsilon>0$,   
it follows from \cref{Theorem:rate-spectral} with $d=2$, $\gamma=2$, $\alpha=2$, $M=2.5$, and $\beta=1/2+\epsilon$,
that 
\begin{eqnarray*}
    \|f_{D,\lambda}-f^*\|_{L^\infty(\mathbb S^d)}\leq   \|f_{D,\lambda}-f^*\|_{\psi}
    \leq
    C_{\tilde{\delta}}(\lambda^{-\frac{1+\epsilon}{2}}|D|^{-\frac12}+\lambda^{\frac34-\frac\epsilon2}),
\end{eqnarray*}
where 
$C_{\tilde{\delta}}$ is a constant depending only on $C_2$ and $\log\frac{3}{\tilde{\delta}}$.
 \cref{fig:Fitting_bound} and  \cref{fig:Stability_bound} show the fitting and stability part of Tikhonov regularization when noise level $\delta$ is 0.3, respectively, from which we see that \cref{Theorem:rate-spectral} does provide upper bounds on the numerical results for both parts. From  \cref{fig:infinity_MSE_N}, we find that three WSFs still outperform KI under different noise levels and $C_{\tilde{\delta}}(\lambda^{-\frac{1+\epsilon}{2}}|D|^{-\frac12}+\lambda^{\frac34-\frac\epsilon2})$ of Tikhonov does provide an upper bound of the error of Tikhonov, which further demonstrate the power of spectral filters and verify \cref{Theorem:rate-spectral}. Since the measurement conducted in the $\|\cdot\|_{L^\infty(\mathbb S^d)}$ norm is beneficial to imply  that the error is not only for a single function,  \cref{fig:infinity_MSE_N}  is crucial to show the generalizability of WSF.

\vspace{-0.15in}
 \begin{figure}[H]
	\centering
	\subfigure[$\delta=0.1 (C_{\tilde{\delta}} = 0.6)$ ]{\includegraphics[scale=0.3]{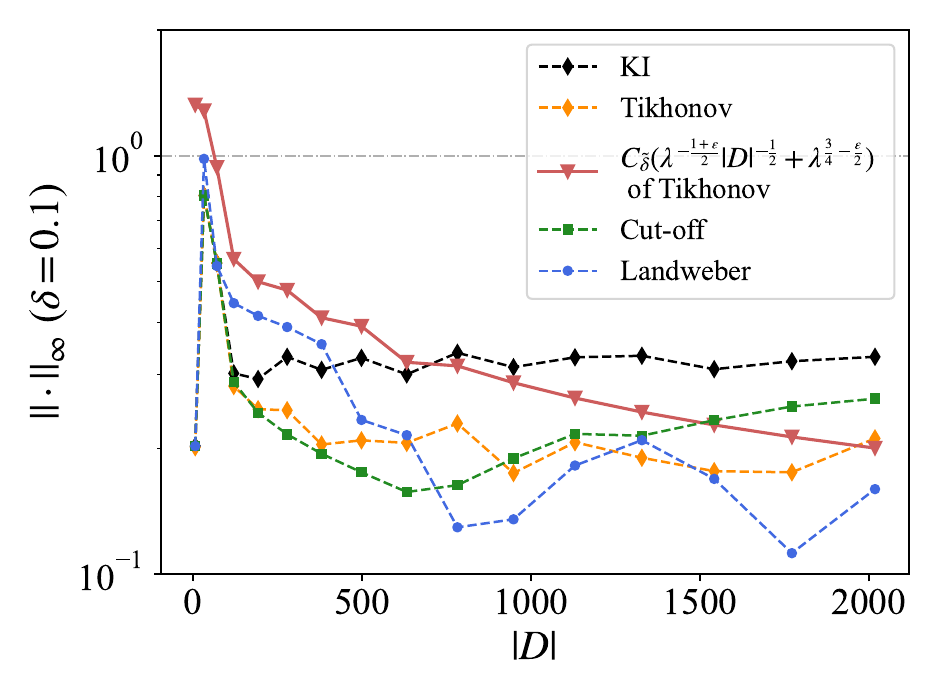}
\label{fig:infinity_noise01} } 
 	\subfigure[$\delta=0.3 (C_{\tilde{\delta}} = 1.4)$]{\includegraphics[scale=0.3]{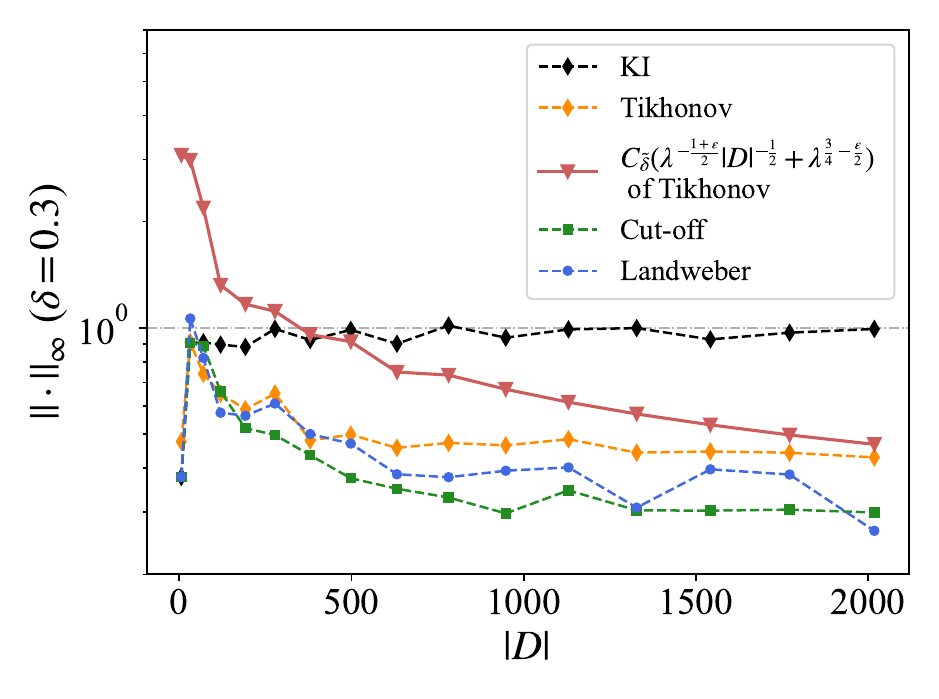}
  \label{fig:infinity_noise03}}
  	\subfigure[$\delta=0.5 (C_{\tilde{\delta}} = 2)$]{\includegraphics[scale=0.3]{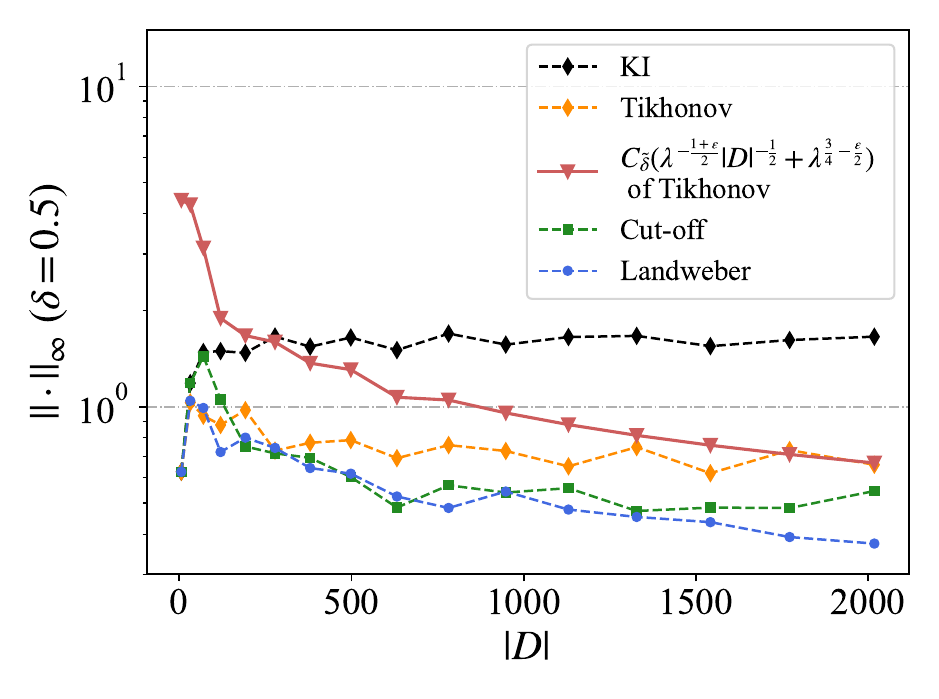}
   \label{fig:infinity_noise05}} 
   \vspace{-0.15in}
	\caption{Relations between the infinity norm  and training sample size of WSF and KI when training samples are drawn in $t$-design way with noise levles $\delta=0.1, 0.3,$ $0.5$.}
	\label{fig:infinity_MSE_N}
	\vspace{-0.1in}
\end{figure}

\vspace{-0.15in}
\begin{figure}[H]
	\centering
	\includegraphics[scale=0.4]{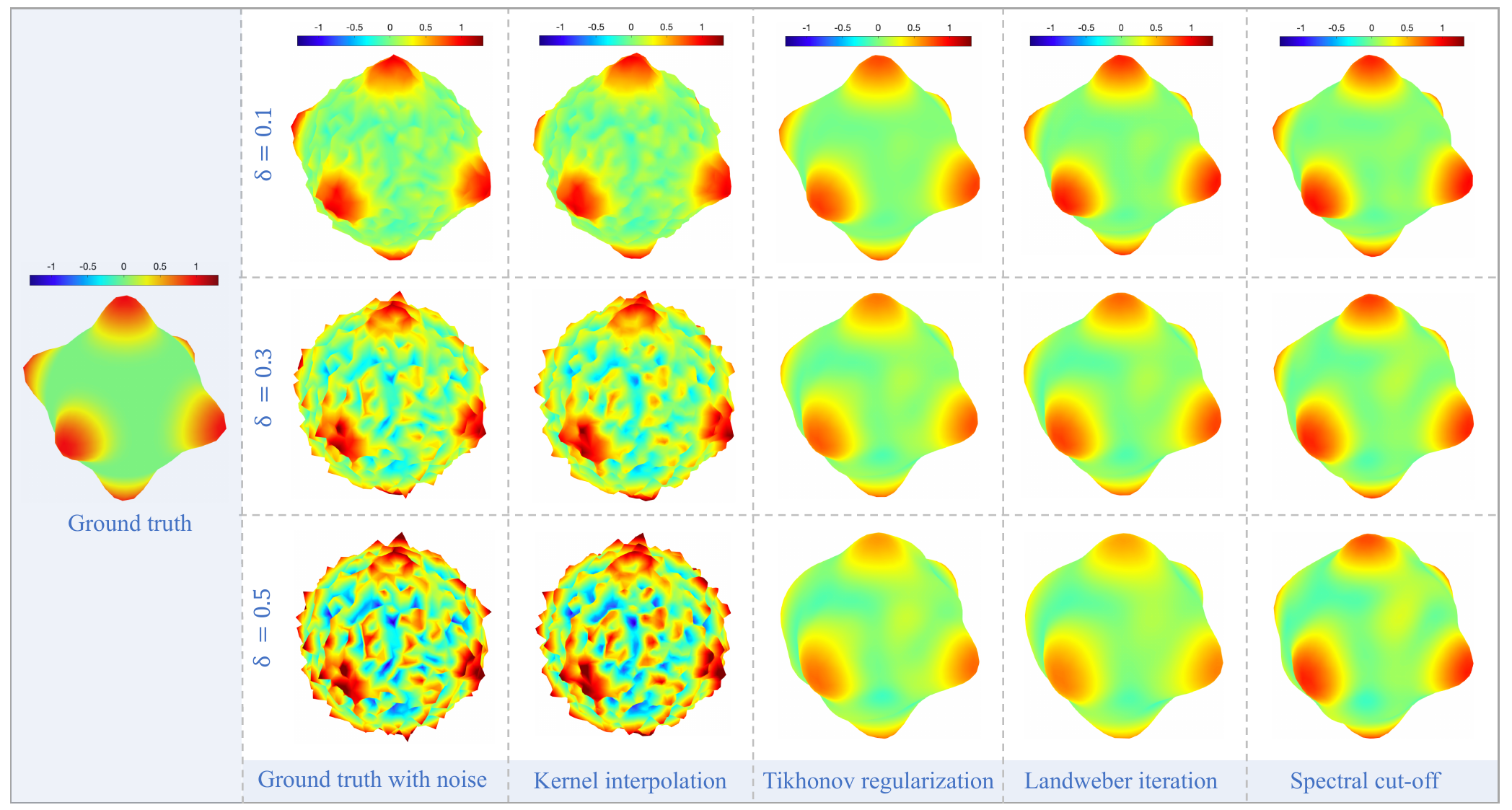}
	\caption{Visualization of the recovered results of different approaches on the 3-dimensional data under the noise levels of $\delta=0.1, 0.3$, and $0.5$, where the ground truth are training samples with $t=47$ and the other recovered results are generated based on the 6000 testing samples.}
	\label{fig: sphere_s4_tdesign_fit} 
	\vspace{-0.15in}
\end{figure}
\vspace{-0.3cm}

\textbf{Simulation 6: Visualization.} 
In this simulation, we show the intuitive comparison of the four mentioned approaches  with different noise levels. Training inputs and validation inputs  are generated in $t$-design way with $t=47$ and $t=45$, respectively, and $6000$ testing inputs are generated in the way described previously. The numerical results are shown in \cref{fig: sphere_s4_tdesign_fit}, where the figure is drawn for a single trial. 

\cref{fig: sphere_s4_tdesign_fit} shows that KI is feasible when the noise is small, but fails to predict when the noise increases. The main reason for this limitation is that KI involves all the noise in the process of interpolation, just as the first two columns show. WSF excludes the noise by  suitable spectral filters and thus is feasible for even very large noise. 
Under different noise levels, the recovered results of WSF are similar and significantly better than those of KI. These observations verify our assessments that spectral filters with appropriate filter parameters effectively circumvent the instability issue of KI without compromising its approximation performance.

\subsection{Real Data Analysis}\label{real_data}

KI and  WSF are applied to two real-world data sets.
The first, called the geomagnetic data set, contains the total intensity of magnetic field at different latitudes, longitudes, and altitudes. The second, called the wind speed data set, contains the wind speed at 2-meter at different latitudes and longitudes.
We adopt the same parameter selection strategy as in toy simulations for geomagnetic data and use five-folds cross-validation for wind speed data.  Each experiment is conducted five times for averaging.

\begin{figure}[t]
	\centering
	\includegraphics[scale=0.55]{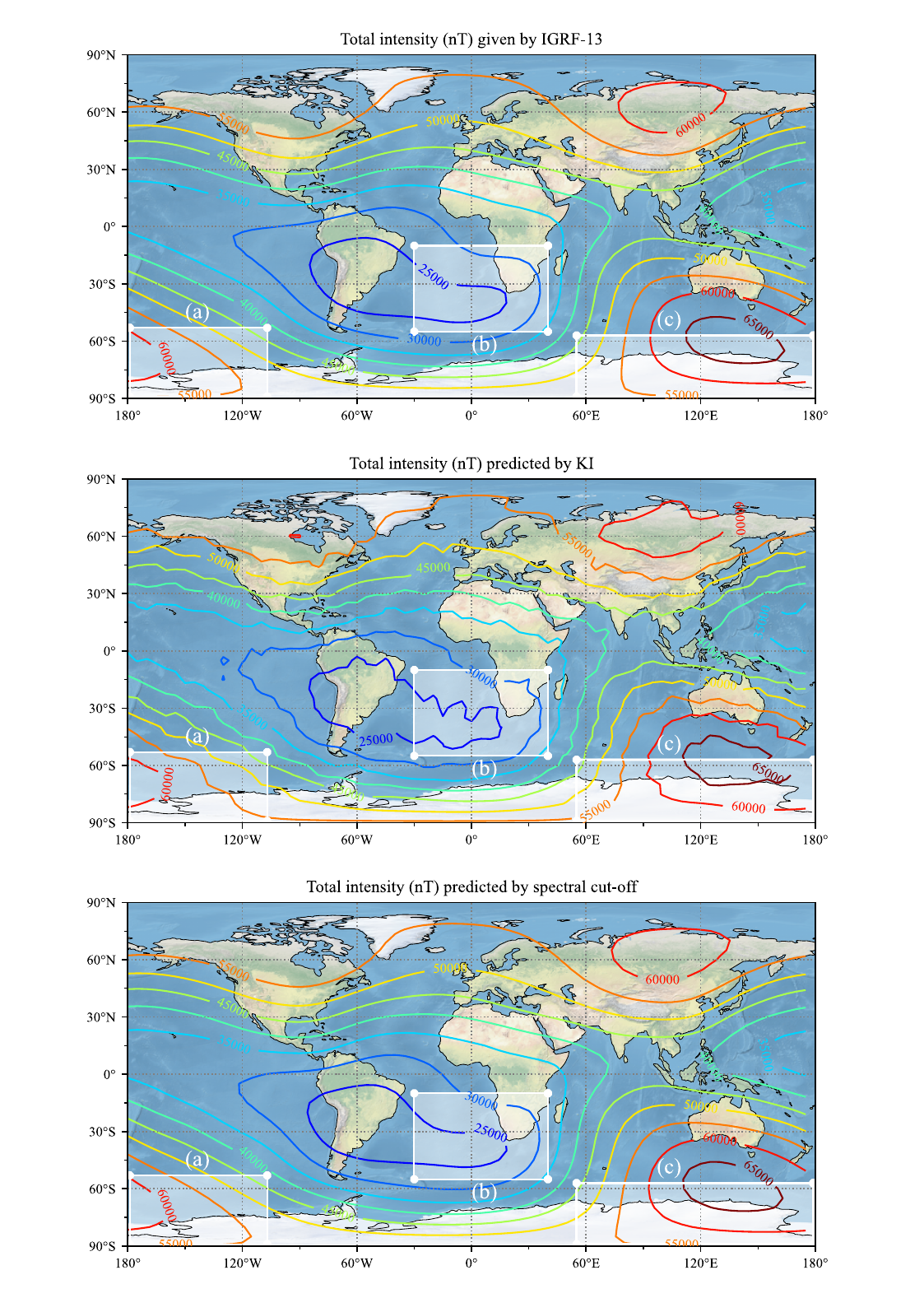}
 \vspace{-0.05in}
	\caption{Maps of total intensity given by IGRF-13 (top panel) and predicted by KI (middle panel) and spectral cut-off (bottom panel) at the WGS84 ellipsoid surface on June 6th, 2023. The projection used is the Miller cylindrical projection.}
	\label{fig: geo_fit} 
\end{figure}

{\bf Real data 1: Geomagnetic data.} 
This data set is collected from the website \url{https://geomag.bgs.ac.uk/data_service/models_compass/igrf_calc.html},
whose geomagnetic field information is provided by the IGRF-13 (the 13th generation of IGRF) \cite{alken2021international} that based on observations recorded by the real-world satellites and ground observatories. 
We only take the latitude $\phi$, longitude $\theta$, AMSL (above mean sea level, or altitude $h$), and corresponding total intensity ($nT$) on June 6th, 2023 in experiment. 



For generating training samples, we first take data on the unit sphere by Womersley's symmetric spherical $t$-design with   $t=63$. In this way,  we  take 2016
training samples by dropping the points $(0, 0, 1)$ and $(0, 0, -1)$ since the coordinate system conversion error is large at these two points. Then, we multiply each coordinate by the equatorial radius.
 Finally, we convert geocentric coordinates to geodetic coordinates (by the conversion method provided in \cite{hofmann2001gps}) and use the latter to collect total intensity. The validation samples are generated in the same way with $t$=45. To visualize geomagnetic data in Miller cylindrical projection, we collect  2664 testing samples from locations where $\phi$ and $\theta$ are sampled every five degrees and $h$ is fixed at $0km$. In order to observe the approximation performance of WSF and KI on noisy data, we synthetically add truncated Gaussian noise to the training samples with standard deviation $\delta=500$.

 \cref{fig: geo_fit} presents global maps of total intensity given by IGRF-13 \cite{alken2021international} and predicted by KI and spectral cut-off.
We put three light white rectangles $(a), (b), (c)$ in each panel to highlight their difference. Compared with the ground truth, KI distorts heavily the curves in $(a), (b), (c)$, while spectral cut-off provides predictions closer to that given by IGRF-13,   showing the necessity of introducing the spectral filters for KI.

{\bf Real data 2: Wind speed data.}
This data set is taken from the global meteorological reanalysis data set MERRA-2 developed by NASA and downloaded from the website \url{https://disc.gsfc.nasa.gov/datasets/M2T1NXSLV_5.12.4/summary}. We only take the latitude $\phi$, longitude $\theta$, and 2-meter wind speed information on May 1st, 2023 in experiment. 
We first use hourly eastward and northward wind components $u$ and $v$ to calculate the hourly wind speed $\sqrt{u^2+v^2}$, and then calculate the daily average 2-meter wind speed based on it.
This study uses  the same transformation as in \cite{li1999multiscale} used for surface air temperature, that is, denote by $[\cos\phi\cos\theta, \cos\phi\sin\theta, \sin\phi]^T$ the unit vector of points  located on earth (which is regarded as a sphere of unit radius). To visualize the wind speed data in Miller cylindrical projection, we take 2664 testing samples at locations where $\phi$ and $\theta$ are sampled every five degrees. Then,  we draw 2000 training samples from the remaining data set using simple random sampling.  In order to observe the approximation performance of WSF and KI on noisy data, we synthetically add truncated Gaussian noise to the training samples with standard deviation $\delta=0.5$. Note that the negative wind speed predictions were rectified through truncation at zero.


 \cref{fig: wind_fit} presents the maps of the daily average 2-meter wind speed given by MERRA-2 and predicted by KI and spectral cut-off.  We also put three light white rectangles $(a), (b), (c)$ in each panel to highlight their difference.  \cref{fig: wind_fit} shows that compared with KI, spectral cut-off provides predictions closer to that given by MERRA-2, which also demonstrates the excellent performance of WSF over KI.  

\vspace{-0.15in}
\begin{figure}[H]
	\centering
	\includegraphics[scale=0.5]{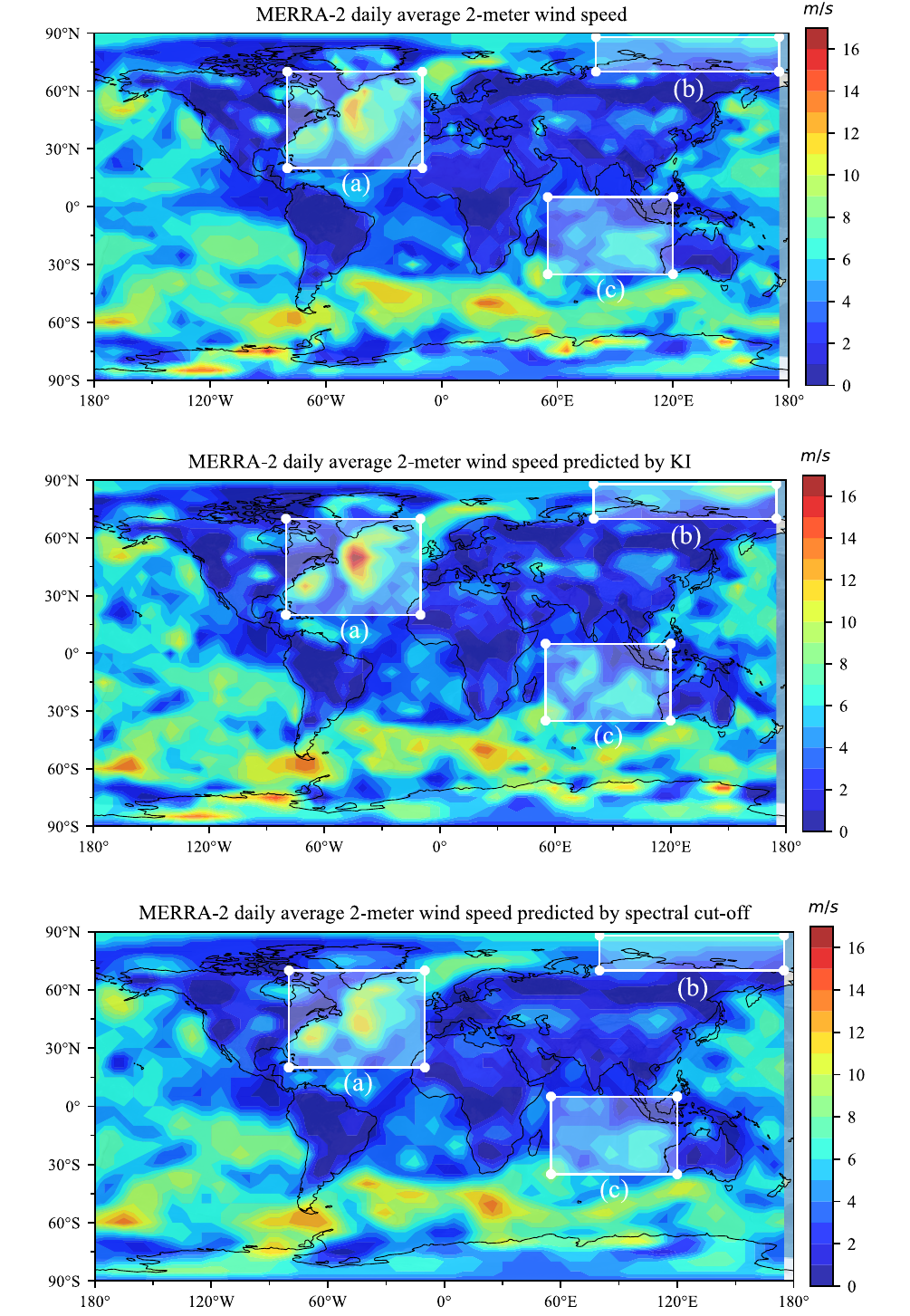}
	\caption{Maps of 2-meter wind speed drawn from  MERRA-2 data set and predicted by KI (middle panel) and spectral cut-off (bottom panel) on May 1st, 2023. The projection used is the Miller cylindrical projection.}
	\label{fig: wind_fit} 
	\vspace{-0.1in}
\end{figure}
\vspace{-0.3cm}

\section{Proofs}\label{Sec.Proofs}

In this section, we prove our theoretical results by using the recently developed integral operator approach in \cite{Feng2021radial,lin2023dis}. The integral operator approach for random samples were established in \cite{smale2004shannon,guo2017learning} by using concentration inequalities for Hilbert-valued random variables. It is obvious that such a well developed approach is not available for scattered data on spheres. Recently, \cite{Feng2021radial,lin2021subsampling,lin2023dis} established similar integral operator approach by using the positive quadrature rules on the sphere. The main novelty in our proof is to deeply study the integral operator approach so that it accommodates   WSF  in \eqref{spectral-algorithm}.  

\subsection{Operator representation for WSF}

Given a positive definite kernel $\phi$, define $\mathcal L_\phi: L^2(\mathbb S^d)\rightarrow L^2(\mathbb S^d)$ to be the integral operator  by
$$
\mathcal L_\phi f(x):=\int_{\mathbb{S}^{d}}\phi(x\cdot x')f(x')d\omega(x').
$$
It can be found in \cite{Feng2021radial} the following lemma  that  connects the regularity of $\mathcal L_\phi$ and the smoothness of the native space.

\begin{lemma}\label{Lemma:Relations-among-Native}
	Let  $\phi$,    $\psi$, $\varphi$ be three SBFs satisfying \eqref{kernel-relation} with $0\leq\beta\leq 1$ and $\alpha\geq 1$.  Then, for any $f\in\mathcal N_\psi$  there hold
	\begin{equation}\label{norm-relation}
	\|f\|_\psi =
	\|\mathcal L_\phi^{\frac{1-\beta}2}f\|_\phi,
	\end{equation}
	and for $f^*\in\mathcal N_\varphi$, there exists an $h^*\in\mathcal N_\psi$ such that
	\begin{equation}\label{source-condition}
	f^*=\mathcal L_{\phi}^{\frac{\alpha-\beta}2}h^*,\qquad  \mbox{and} \quad
	\|f^*\|_\varphi=\|h^*\|_\psi,
	\end{equation}
	where $g(\mathcal L_\phi) f$ is defined by spectral calculus:
	$
	g(\mathcal L_\phi) f= \sum_{k=0}^\infty g(\hat{\phi}_k)\sum_{\ell=1}^{Z(d, k)}\hat{f}_{k,\ell} Y_{k,\ell}(x).
	$
\end{lemma}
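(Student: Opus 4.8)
The plan is to reduce the entire lemma to the spectral decomposition of $\mathcal L_\phi$ and then simply match powers of the eigenvalues $\hat\phi_k$. First I would record that $\mathcal L_\phi$ is diagonalized by the spherical harmonics: since $\phi(x\cdot x')=\sum_{k}\hat\phi_k\frac{Z(d,k)}{\Omega_d}P_k^{d+1}(x\cdot x')$, the Funk--Hecke formula together with the addition theorem yields $\mathcal L_\phi Y_{k,\ell}=\hat\phi_k Y_{k,\ell}$ for every $k,\ell$. Consequently, for any exponent $s\ge 0$ the fractional operator $\mathcal L_\phi^s$ is well defined by spectral calculus through $\mathcal L_\phi^s f=\sum_{k=0}^\infty \hat\phi_k^{\,s}\sum_{\ell=1}^{Z(d,k)}\hat f_{k,\ell}Y_{k,\ell}$, which is legitimate because $\hat\phi_k>0$ for the strictly positive definite kernel $\phi$; this is precisely the spectral-calculus identity displayed in the statement.

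Next, for \eqref{norm-relation}, I would observe that the Fourier coefficients of $\mathcal L_\phi^{(1-\beta)/2}f$ are $\hat\phi_k^{(1-\beta)/2}\hat f_{k,\ell}$, and substitute them into the $\|\cdot\|_\phi$ norm:
\begin{equation*}
\|\mathcal L_\phi^{\frac{1-\beta}2}f\|_\phi^2=\sum_{k=0}^\infty\hat\phi_k^{-1}\sum_{\ell=1}^{Z(d,k)}\hat\phi_k^{1-\beta}\hat f_{k,\ell}^2=\sum_{k=0}^\infty\hat\phi_k^{-\beta}\sum_{\ell=1}^{Z(d,k)}\hat f_{k,\ell}^2.
\end{equation*}
Since $\hat\psi_k=\hat\phi_k^{\beta}$, the right-hand side is exactly $\|f\|_\psi^2$, which proves \eqref{norm-relation} and in particular shows both sides are simultaneously finite.

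For \eqref{source-condition}, I would define $h^*$ directly through its Fourier coefficients $\hat h^*_{k,\ell}:=\hat\phi_k^{-(\alpha-\beta)/2}\hat f^*_{k,\ell}$, where $\alpha-\beta\ge 0$ but the exponent on $\hat\phi_k$ is negative, so this step again uses $\hat\phi_k>0$. Applying $\mathcal L_\phi^{(\alpha-\beta)/2}$ restores the coefficients of $f^*$, giving $\mathcal L_\phi^{(\alpha-\beta)/2}h^*=f^*$. Computing the $\psi$-norm of $h^*$ and using $\hat\psi_k=\hat\phi_k^\beta$ and $\hat\varphi_k=\hat\phi_k^\alpha$ yields
\begin{equation*}
\|h^*\|_\psi^2=\sum_{k=0}^\infty\hat\phi_k^{-\beta}\sum_{\ell=1}^{Z(d,k)}\hat\phi_k^{-(\alpha-\beta)}(\hat f^*_{k,\ell})^2=\sum_{k=0}^\infty\hat\phi_k^{-\alpha}\sum_{\ell=1}^{Z(d,k)}(\hat f^*_{k,\ell})^2=\|f^*\|_\varphi^2,
\end{equation*}
so $\|h^*\|_\psi=\|f^*\|_\varphi<\infty$, which simultaneously certifies $h^*\in\mathcal N_\psi$.

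The computations are routine bookkeeping of exponents of $\hat\phi_k$; the only genuine point requiring care is the first step, namely justifying the eigenstructure $\mathcal L_\phi Y_{k,\ell}=\hat\phi_k Y_{k,\ell}$ with the correct normalization (the interplay of the constants $\Omega_d,\Omega_{d-1}$ in the definition of $\hat\phi_k$, the mass-one measure $d\omega$, and the addition-theorem/Funk--Hecke factor) and the well-definedness and convergence of the fractional powers $\mathcal L_\phi^{(1-\beta)/2}$ and $\mathcal L_\phi^{(\alpha-\beta)/2}$ on $\mathcal N_\psi$ and $\mathcal N_\varphi$ respectively. Once that eigendecomposition and the legitimacy of spectral calculus are in hand, \eqref{norm-relation} and \eqref{source-condition} follow immediately by matching the powers $\hat\phi_k^{-\beta}$ and $\hat\phi_k^{-\alpha}$ as above.
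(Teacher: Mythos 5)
Your proof is correct. Note, however, that this paper never proves the lemma itself: it is imported verbatim from \cite{Feng2021radial}, and your argument---diagonalizing $\mathcal L_\phi$ on the spherical harmonics and matching powers of $\hat\phi_k$ inside the native-space norms $\|f\|_\phi^2=\sum_k\hat\phi_k^{-1}\sum_\ell\hat f_{k,\ell}^2$, $\hat\psi_k=\hat\phi_k^\beta$, $\hat\varphi_k=\hat\phi_k^\alpha$---is precisely the standard spectral-calculus derivation underlying that citation. One small remark: the eigenstructure issue you single out as the ``only genuine point requiring care'' is essentially moot in this formulation, since the lemma's statement already \emph{defines} $g(\mathcal L_\phi)$ by $g(\mathcal L_\phi)f=\sum_{k}g(\hat\phi_k)\sum_{\ell}\hat f_{k,\ell}Y_{k,\ell}$; the Funk--Hecke/addition-theorem computation is only needed to confirm that this convention is consistent with the integral definition of $\mathcal L_\phi$ under the paper's normalization of $d\omega$, after which \eqref{norm-relation} and \eqref{source-condition} follow by the exponent bookkeeping you carried out, with $h^*\in\mathcal N_\psi$ (and $h^*\in L^2$) guaranteed because $\hat\phi_k^{\beta}$ is bounded and $\|h^*\|_\psi=\|f^*\|_\varphi<\infty$.
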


Let  $L_\phi:\mathcal N_\phi\rightarrow\mathcal N_\phi$  be the integral operator defined by
$
L_\phi f(x):=\int_{\mathbb{S}^{d}}\phi(x\cdot x')f(x')d\omega(x').
$ We then obtain the following lemma \cite{lin2023dis}.

\begin{lemma}\label{Lemma:integral-operator-relation}
	 For any $g:[0,\sqrt{\phi(1)}]\rightarrow\mathbb R_+$, there holds
	\begin{equation}\label{integral-operator-relation}
	g(\mathcal L_\phi)f=g(L_\phi)f,\qquad \forall\ f\in\mathcal N_\phi.
	\end{equation}
\end{lemma}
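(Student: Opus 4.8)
The plan is to diagonalize both $\mathcal L_\phi$ and $L_\phi$ against the common orthogonal system of spherical harmonics $\{Y_{k,\ell}\}$ and then to check that the two functional calculi produce the same coefficientwise formula, notwithstanding that the two operators act on different Hilbert spaces with different inner products.

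First I would record the spectral data. Expanding the kernel as $\phi(x\cdot x')=\sum_k\hat{\phi}_k\frac{Z(d,k)}{\Omega_d}P_k^{d+1}(x\cdot x')$ and invoking the addition theorem for spherical harmonics, a direct computation yields $\mathcal L_\phi Y_{k,\ell}=\hat{\phi}_k Y_{k,\ell}$, which is precisely the eigenstructure implicit in the spectral-calculus formula of \cref{Lemma:Relations-among-Native}. Since the defining integral for $L_\phi$ is identical, the same identity $L_\phi Y_{k,\ell}=\hat{\phi}_k Y_{k,\ell}$ holds as an equation between functions. Thus both operators share the eigenfunctions $Y_{k,\ell}$ and the eigenvalues $\hat{\phi}_k$, and each is self-adjoint and compact on its own space. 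Because $\hat{\phi}_k>0$, every $Y_{k,\ell}$ lies in $\mathcal N_\phi$, so the system is a genuine eigenbasis in both settings.

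The crux is that the two functional calculi are taken with respect to different inner products. In $L^2(\mathbb S^d)$ the family $\{Y_{k,\ell}\}$ is orthonormal, so
$$g(\mathcal L_\phi)f=\sum_{k,\ell}g(\hat{\phi}_k)\langle f,Y_{k,\ell}\rangle_{L^2}Y_{k,\ell}=\sum_{k,\ell}g(\hat{\phi}_k)\hat{f}_{k,\ell}Y_{k,\ell}.$$
In $\mathcal N_\phi$, however, one has $\|Y_{k,\ell}\|_\phi^2=\hat{\phi}_k^{-1}$, so the $\mathcal N_\phi$-orthonormal eigenbasis is $\{\hat{\phi}_k^{1/2}Y_{k,\ell}\}$ and
$$g(L_\phi)f=\sum_{k,\ell}g(\hat{\phi}_k)\big\langle f,\hat{\phi}_k^{1/2}Y_{k,\ell}\big\rangle_\phi\,\hat{\phi}_k^{1/2}Y_{k,\ell}.$$
Evaluating the native-space inner product gives $\big\langle f,\hat{\phi}_k^{1/2}Y_{k,\ell}\big\rangle_\phi=\hat{\phi}_k^{-1/2}\hat{f}_{k,\ell}$, so the normalization factors $\hat{\phi}_k^{\pm1/2}$ cancel and $g(L_\phi)f=\sum_{k,\ell}g(\hat{\phi}_k)\hat{f}_{k,\ell}Y_{k,\ell}$, which is identical to $g(\mathcal L_\phi)f$.

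The remaining work, and the step I expect to require the most care, is the well-definedness and convergence bookkeeping underlying this formal cancellation. I would verify that $g$ is defined on the whole spectrum $\{\hat{\phi}_k\}\subset[0,\sqrt{\phi(1)}]$, that for $f\in\mathcal N_\phi$ both eigenexpansions converge (in the $L^2$- and $\mathcal N_\phi$-norms respectively) so that the abstract functional calculus is legitimately represented by them, and that $g(L_\phi)f$ indeed returns to $\mathcal N_\phi$. The genuinely delicate point here is conceptual rather than computational: the identity equates operators constructed by two distinct functional calculi, so I must justify that diagonalizing each operator in its own orthonormal eigenbasis is permissible and that the restriction to $f\in\mathcal N_\phi\subset L^2(\mathbb S^d)$ makes the two outputs comparable as the very same function.
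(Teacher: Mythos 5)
Your computation is correct, but note that the paper never proves \cref{Lemma:integral-operator-relation} at all: it is imported verbatim from \cite{lin2023dis}, so your diagonalization argument is a self-contained substitute for a proof the paper delegates to a reference. On its merits, your route is sound and consistent with the paper's conventions: both operators have the common eigenfunctions $Y_{k,\ell}$ with eigenvalues $\hat{\phi}_k$; since $\|Y_{k,\ell}\|_\phi^2=\hat{\phi}_k^{-1}$, the $\mathcal N_\phi$-orthonormal eigensystem is $\{\hat{\phi}_k^{1/2}Y_{k,\ell}\}$, and the factors $\hat{\phi}_k^{\pm 1/2}$ cancel to give $g(L_\phi)f=\sum_{k,\ell}g(\hat{\phi}_k)\hat{f}_{k,\ell}Y_{k,\ell}$, which is exactly the formula the paper takes as the \emph{definition} of $g(\mathcal L_\phi)f$ in \cref{Lemma:Relations-among-Native}. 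What your argument buys over the citation is transparency about what the lemma really says: two functional calculi, built on different Hilbert spaces with different inner products, agree on the common subspace $\mathcal N_\phi$ --- precisely the fact used silently in \cref{Sec.Proofs} whenever $\|\cdot\|_\psi$ is converted to $\|\cdot\|_\phi$ via \eqref{norm-relation} and then $\mathcal L_\phi$ is swapped for $L_\phi$. One caveat you flag is worth stating sharply rather than leaving as bookkeeping: for a bare $g:[0,\sqrt{\phi(1)}]\rightarrow\mathbb R_+$ with no boundedness hypothesis (e.g. $g(t)=t^{-1}$), the two eigenexpansions need not both converge for every $f\in\mathcal N_\phi$, and the $\mathcal N_\phi$-convergence condition $\sum_{k}g(\hat{\phi}_k)^2\hat{\phi}_k^{-1}\sum_{\ell}\hat{f}_{k,\ell}^2<\infty$ is strictly stronger than the $L^2$ one; so the identity should be read as holding whenever both sides are defined. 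This looseness is in the lemma as stated (inherited from \cite{lin2023dis}), not a defect of your proof, and it is harmless in the paper because every filter to which the lemma is applied obeys the bounds \eqref{condition1} and \eqref{condition2}.
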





Define the weighted sampling operator $S_{D,W_{D,s}}:\mathcal N_\phi\rightarrow\mathbb R^{|D|}$ by $S_{D,W_{D,s}}f:=(\sqrt{w_{1,s}}f(x_1),\dots,\sqrt{w_{|D|,s}}f(x_{|D|}))^T$.
It is easy to check that the adjoint of $S_{D,W_{D,s}},$ denoted by $S^T_{D,W_{D,s}}$, satisfies 
$
S^T_{D,W_{D,s}}{\bf c}= \sum_{i=1}^{|D|}\sqrt{w_{i,s}}c_i\phi_{x_i}
$ for ${\bf c}:=(c_1,\dots,c_{|D|})^T.$
In this way, we get an empirical version of the positive operator $L_\phi$ as  
\begin{equation}\label{weight-empirical-operaotr}
L_{\phi,D,W_{D,s}}f=\sum_{i=1}^{|D|}w_{i,s}f(x_i)\phi_{x_i}
=S^T_{D,W_{D,s}}S_{D,W_{D,s}}f.  
\end{equation}

Since
$
     \Psi_{D,s}=W^{1/2}_{D,s}\Phi_DW^{1/2}_{D,s}=S_{D,W}S_{D,W}^T,
$
it follows from \eqref{spectral-algorithm}, \eqref{spectral-noise-free} and  the classical equality 
\cite[eqs(2.43)]{engl1996regularization}  
\begin{equation}\label{exchange-adjoint}
h(AA^T)A=Ah(A^TA) 
\end{equation}
for positive operator $A$ and any piecewise continuous function $h$ that 
\begin{equation}\label{opertor-specctral-alg}
f_{D,\lambda}=S^T_{D,W_{D,s}} g_\lambda(\Phi_{D,s})  {\bf y}_{D,W_{D,s}}= g_\lambda(L_{\phi,D,W_{D,s}}) S^T_{D,W_{D,s}} {\bf y}_{D,W_{D,s}}  
\end{equation} 
with  ${\bf y}_{D,W_{D,s}}:=(\sqrt{w_{1,s}}y_1,\dots,\sqrt{w_{|D|,s}}y_{|D|})^T$ and
\begin{equation}\label{population}
f^\diamond_{D,\lambda}=S_{D,W_{D,s}}^T g_\lambda(\Phi_{D,s})S_{D,W_{D,s}}f =g_\lambda(L_{\phi,D,W_{D,s}}) L_{\phi,D,W_{D,s}}f^*. 
\end{equation}

\subsection{Proofs of \texorpdfstring{\cref{Theorem:rate-spectral}}{Theorem:rate-spectral}}

The step-stone of the integral operator approach is to bound the approximation error by differences between the integral operator $L_\phi$ and its empirical version $L_{\phi,D,W_{D,s}}$. We introduce two important lemmas for  this purpose. 
 The first one derived in \cite{lin2023dis} quantifies the difference between $L_\phi$ and $L_{\phi,D,W_{D,s}}$.

\begin{lemma}\label{Lemma:operator-difference}
Let $\mathcal Q_{\Lambda,s}:=\{(w_{i,s},  x_i): w_{i,s}>0
\hbox{~and~}   x_i\in \Lambda\}$ be  a positive
 quadrature rule   on $\mathbb S^d$ with degree $s\in\mathbb N$.
If $\hat \phi_k\sim k^{-2\gamma}$  with $\gamma> d/2$,
then for any  $\lambda$  satisfies \eqref{lambda-restriction} and $0\leq u,v\leq 1/2$, there hold
\begin{eqnarray}\label{operaotr-difference-111}
   \|(  L_\phi+\lambda I )^{-u}(L_{\phi,D,W_{D,s}}-  L_\phi)(  L_\phi+\lambda I)^{-v}\|  
  \leq  \tilde{c}'\lambda^{-\max\{u,v\}}s^{-\gamma},
\end{eqnarray}
and
\begin{equation}\label{product-2}
	\|(L_{\phi,D,W_{D,s}}+\lambda I)^{-1/2}(L_\phi+\lambda I)^{1/2}\|\leq 2\sqrt{3}/3.
\end{equation}
\end{lemma}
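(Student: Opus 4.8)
The plan is to prove \eqref{operaotr-difference-111} first and then to obtain \eqref{product-2} from its $u=v=1/2$ instance by a short perturbation argument. Write $\Delta:=L_{\phi,D,W_{D,s}}-L_\phi$ and let $E_D(F):=\sum_{i=1}^{|D|}w_{i,s}F(x_i)-\int_{\mathbb S^d}F\,d\omega$ be the quadrature-error functional, which vanishes on $\Pi_s^d$ by \eqref{eq:quadrature}. From $L_\phi f=\int_{\mathbb S^d}f(x')\phi_{x'}\,d\omega(x')$, $L_{\phi,D,W_{D,s}}f=\sum_i w_{i,s}f(x_i)\phi_{x_i}$ and the reproducing identity $\langle\phi_{x'},h\rangle_\phi=h(x')$, one has $\langle L_\phi f,h\rangle_\phi=\int_{\mathbb S^d}fh\,d\omega$ and $\langle L_{\phi,D,W_{D,s}}f,h\rangle_\phi=\sum_i w_{i,s}f(x_i)h(x_i)$, hence $\langle\Delta f,h\rangle_\phi=E_D(fh)$. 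With $(L_\phi+\lambda I)^{-u}$ self-adjoint on $\mathcal N_\phi$, setting $\tilde g=(L_\phi+\lambda I)^{-v}g$ and $\tilde h=(L_\phi+\lambda I)^{-u}h$ shows that the left-hand side of \eqref{operaotr-difference-111} equals $\sup\{|E_D(\tilde g\tilde h)|:\|g\|_\phi=\|h\|_\phi=1\}$. Since $\hat\phi_k\sim k^{-2\gamma}$ makes $\|\cdot\|_\phi$ equivalent to $\|\cdot\|_{H^\gamma}$ and turns $(L_\phi+\lambda I)^{-u}$ into the Fourier multiplier $(\hat\phi_k+\lambda)^{-u}$, I may assume $u\ge v$; the eigenvalues $\hat\phi_k+\lambda$ are bounded by the fixed constant $\hat\phi_0+\kappa$, so $\|(L_\phi+\lambda I)^{u-v}\|\le C$, and factoring $(L_\phi+\lambda I)^{-v}=(L_\phi+\lambda I)^{-u}(L_\phi+\lambda I)^{u-v}$ reduces \eqref{operaotr-difference-111} to the diagonal estimate $\|(L_\phi+\lambda I)^{-u}\Delta(L_\phi+\lambda I)^{-u}\|\le c\lambda^{-u}s^{-\gamma}$, which carries the factor $\lambda^{-\max\{u,v\}}$.

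The heart of the argument, and the step I expect to be the main obstacle, is this diagonal estimate, i.e. $|E_D(\tilde g\tilde h)|\le c\lambda^{-u}s^{-\gamma}$ with $\tilde g=(L_\phi+\lambda I)^{-u}g$ and $\tilde h=(L_\phi+\lambda I)^{-u}h$. The naive bound $|E_D(\tilde g\tilde h)|\le cs^{-\gamma}\|\tilde g\tilde h\|_{H^\gamma}\le cs^{-\gamma}\|\tilde g\|_{H^\gamma}\|\tilde h\|_{H^\gamma}$, from the Banach-algebra property of $H^\gamma$ ($\gamma>d/2$) together with the sharp worst-case quadrature error of a positive rule, only yields $\lambda^{-2u}s^{-\gamma}$, since each smoothing factor inflates the $H^\gamma$ norm by $\lambda^{-u}$. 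To recover the single power $\lambda^{-u}$ I would instead establish the sharper product bound $|E_D(FG)|\le cs^{-\gamma}\big(\|F\|_{H^\gamma}\|G\|_{L^2}+\|F\|_{L^2}\|G\|_{H^\gamma}\big)$ and combine it with $\|\tilde g\|_{H^\gamma},\|\tilde h\|_{H^\gamma}\le c\lambda^{-u}$ and $\|\tilde g\|_{L^2},\|\tilde h\|_{L^2}\le c$; the $L^2$ norms stay bounded because the multiplier $(\hat\phi_k+\lambda)^{-u}k^{-\gamma}$ sending an $H^\gamma$-normalised input to its $L^2$ output is $O(1)$, its potential $\lambda^{-u}$ growth occurring only at the high frequencies that carry negligible $L^2$ mass. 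Proving the displayed product bound is where the work lies: I would split $F,G$ into dyadic frequency blocks, discard the part of total degree $\le s$ on which $E_D$ vanishes, and control the remaining cross terms via the Marcinkiewicz--Zygmund and Nikolskii inequalities for $\mathcal B_{\Lambda,s}$. Handling the discrete sums of frequency blocks lying beyond the exactness degree $s$ — which are not directly amenable to a Marcinkiewicz--Zygmund estimate — is the delicate point, and it is here that $s\le c_1'h_\Lambda^{-1}$ and $w_{i,s}\le c_2'|\Lambda|^{-1}$ from \cref{Lemma:fixed quadrature}, together with the full strength of \eqref{lambda-restriction}, enter.

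Finally I would deduce \eqref{product-2}. Taking $u=v=1/2$ in \eqref{operaotr-difference-111} gives $\|A\|\le\tilde c'\lambda^{-1/2}s^{-\gamma}$ for $A:=(L_\phi+\lambda I)^{-1/2}\Delta(L_\phi+\lambda I)^{-1/2}$, and \eqref{lambda-restriction} with $s\sim h_\Lambda^{-1}$ makes $\|A\|\le 1/4$ once $C_1$ is small enough. Since $L_{\phi,D,W_{D,s}}+\lambda I=(L_\phi+\lambda I)+\Delta$, we have $(L_\phi+\lambda I)^{-1/2}(L_{\phi,D,W_{D,s}}+\lambda I)(L_\phi+\lambda I)^{-1/2}=I+A\succeq\tfrac34 I$, so inverting and using $\|B\|^2=\|B^*B\|$ with $B=(L_{\phi,D,W_{D,s}}+\lambda I)^{-1/2}(L_\phi+\lambda I)^{1/2}$ yields
\[
\|(L_{\phi,D,W_{D,s}}+\lambda I)^{-1/2}(L_\phi+\lambda I)^{1/2}\|^2=\|(I+A)^{-1}\|\le\tfrac43,
\]
whence the norm is at most $2/\sqrt3=2\sqrt3/3$, which is exactly \eqref{product-2}.
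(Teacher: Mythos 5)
Your scaffolding is sound and matches the standard route: the identity $\langle(L_{\phi,D,W_{D,s}}-L_\phi)f,h\rangle_\phi=E_D(fh)$, the reduction to the diagonal case $u=v$ via $\|(L_\phi+\lambda I)^{u-v}\|\le C$ for $u\ge v$, and the closing perturbation step for \eqref{product-2} — $\|A\|\le 1/4$ under \eqref{lambda-restriction} with $C_1$ small, hence $\|(L_{\phi,D,W_{D,s}}+\lambda I)^{-1/2}(L_\phi+\lambda I)^{1/2}\|^2=\|(I+A)^{-1}\|\le 4/3$ — are all correct. (Note the paper itself never proves this lemma; it imports it from \cite{lin2023dis}, so the comparison below is with that source's argument.)

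The genuine gap is the inequality you place at the heart of the proof: the product bound $|E_D(FG)|\le cs^{-\gamma}\bigl(\|F\|_{H^\gamma}\|G\|_{L^2}+\|F\|_{L^2}\|G\|_{H^\gamma}\bigr)$ is \emph{false} whenever $d/2<\gamma<d$, a nonempty part of the lemma's range for every $d$. Take any positive rule possessing a node $x_1$ of weight $w_{1,s}\asymp s^{-d}$ (e.g.\ an equal-weight spherical $s$-design of optimal cardinality $|\Lambda|\asymp s^d$), and let $F=G$ be a smooth bump with $F(x_1)=1$, supported in a cap about $x_1$ of radius $\epsilon$ so small that no other node lies in the support. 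Then $\|F\|_{L^2}\asymp\epsilon^{d/2}$ and $\|F\|_{H^\gamma}\asymp\epsilon^{d/2-\gamma}$, so your right-hand side is $\asymp s^{-\gamma}\epsilon^{d-\gamma}\to0$ as $\epsilon\to0$, while $E_D(F^2)\ge w_{1,s}-O(\epsilon^d)$ stays bounded below; for $d\ge3$ and $\gamma<d-1$, $L^2$-normalized zonal harmonics of degree $n\gg s$ centered at $x_1$ give a second counterexample. This is not repairable inside the proof of that displayed bound: the bound contains no $\lambda$, so \eqref{lambda-restriction} cannot ``enter'' it as you propose. The deeper point is that the diagonal estimate is genuinely not a two-norm statement: rescaling the same bump to width $\epsilon=\lambda^{1/(2\gamma)}$ and height $\asymp\epsilon^{-d/2}$ produces a function satisfying exactly your constraints $\|\cdot\|_{L^2}\lesssim 1$, $\|\cdot\|_{H^\gamma}\lesssim\lambda^{-1/2}$, yet with $E_D(F^2)\asymp(s\epsilon)^{-d}$, which exceeds the target $\lambda^{-1/2}s^{-\gamma}=(s\epsilon)^{-\gamma}$ by the diverging factor $(s\lambda^{1/(2\gamma)})^{\gamma-d}$ whenever $s\lambda^{1/(2\gamma)}\ll1$. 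So the estimate survives only because \eqref{lambda-restriction}, in the regime $s\asymp h_\Lambda^{-1}$ where the lemma is used, keeps $\lambda\gtrsim s^{-2\gamma}$; any proof must couple $\lambda$ and $s$ from the start. Concretely, the information your norm pair discards, and which the argument in \cite{lin2023dis} keeps, is the spectral profile of $\tilde h=(L_\phi+\lambda I)^{-u}h$: its part above frequency $s/2$ satisfies $\|\tilde h_2\|_{L^2}\le c\lambda^{-u}s^{-\gamma}$, because $(\hat\phi_k+\lambda)^{-2u}\hat\phi_k\le\lambda^{-2u}\hat\phi_k\lesssim\lambda^{-2u}s^{-2\gamma}$ for $k>s/2$; products of the low-frequency parts are then integrated exactly, and the remaining discrete sums are controlled by sampling/Marcinkiewicz--Zygmund estimates whose mesh-ratio losses $(h_\Lambda/q_\Lambda)^d$ are precisely what the factor $q_\Lambda^{-d}h_\Lambda^d$ in \eqref{lambda-restriction} is designed to absorb. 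Your dyadic-splitting plan points in this direction, but as long as its target is the $\lambda$-free product bound above, it is aiming at a statement that is simply not true.
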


The second one derived in \cite{Feng2021radial} presents an upper bound of the difference between $L_{\phi,D,W_{D,s}}f^*$ and its empirical version $S_{D,W_{D,s},s}^T{\bf y}_{D,W_{D,s}}$.

\begin{lemma}\label{lemma:value-difference-random}
	Let $\tilde{\delta}\in(0,1)$. If \eqref{Model1:fixed} holds, $\hat{\phi}_k\sim k^{-2\gamma}$ with $\gamma>d/2$ and $\{\varepsilon_i\}_{i=1}^{|D|}$ is a set of i.i.d. random variables satisfying $E[\varepsilon_i]=0$ and $|\varepsilon_i|\leq M$ for all $i=1,2,\dots,|D|$, then   with confidence $1-\tilde{\delta}$, there holds 
	\begin{eqnarray}\label{norm-difference-random}
	\|(  L_\phi+\lambda I)^{-1/2} (L_{\phi,D,W_{D,s}}f^*-S_{D,W_{D,s}}^T{\bf y}_{D,W_{D,s}})\|_\phi
	\leq 
	C_1'M\lambda^{-\frac{d}{4\gamma}} |D|^{-1/2}\log\frac{3}{\tilde{\delta}},
	\end{eqnarray}
	where $ C_1'$ is a constant  depending only on $d$, $\gamma$ and  $\|f^*\|_{\varphi}$. 
\end{lemma}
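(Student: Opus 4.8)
The plan is to recognize the vector in \eqref{norm-difference-random} as a normalized sum of independent, mean‑zero, $\mathcal N_\phi$‑valued random vectors and to control it with a Bernstein‑type concentration inequality in the Hilbert space $\mathcal N_\phi$. First I would compute the difference explicitly. Since $L_{\phi,D,W_{D,s}}f^*=\sum_{i=1}^{|D|}w_{i,s}f^*(x_i)\phi_{x_i}$ by \eqref{weight-empirical-operaotr} and $S^T_{D,W_{D,s}}{\bf y}_{D,W_{D,s}}=\sum_{i=1}^{|D|}w_{i,s}y_i\phi_{x_i}$ by the adjoint formula for $S^T_{D,W_{D,s}}$, the model \eqref{Model1:fixed} yields
\begin{equation}
(L_\phi+\lambda I)^{-1/2}\bigl(L_{\phi,D,W_{D,s}}f^*-S^T_{D,W_{D,s}}{\bf y}_{D,W_{D,s}}\bigr)=-\sum_{i=1}^{|D|}\eta_i,\qquad \eta_i:=w_{i,s}\varepsilon_i(L_\phi+\lambda I)^{-1/2}\phi_{x_i}.
\end{equation}
Because the $x_i$ are deterministic while the $\varepsilon_i$ are independent with $E[\varepsilon_i]=0$, the $\eta_i\in\mathcal N_\phi$ are independent and mean‑zero. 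Note the left‑hand side does not actually depend on $f^*$, so the stated dependence of $C_1'$ on $\|f^*\|_\varphi$ is only nominal.

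The key quantity to estimate is the effective dimension measured in the uniform metric, $\sup_{x\in\mathbb S^d}\|(L_\phi+\lambda I)^{-1/2}\phi_x\|_\phi^2$. Using the reproducing identity $\widehat{(\phi_x)}_{k,\ell}=\hat\phi_k Y_{k,\ell}(x)$, that $Y_{k,\ell}$ are eigenfunctions of $L_\phi$ with eigenvalues $\hat\phi_k$, and the definition of $\|\cdot\|_\phi$, I obtain
\begin{equation}
\|(L_\phi+\lambda I)^{-1/2}\phi_x\|_\phi^2=\sum_{k=0}^\infty\frac{\hat\phi_k}{\hat\phi_k+\lambda}\sum_{\ell=1}^{Z(d,k)}Y_{k,\ell}(x)^2=\sum_{k=0}^\infty\frac{\hat\phi_k}{\hat\phi_k+\lambda}\frac{Z(d,k)}{\Omega_d},
\end{equation}
where the last equality is the addition theorem $\sum_{\ell}Y_{k,\ell}(x)^2=Z(d,k)/\Omega_d$. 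With $\hat\phi_k\sim k^{-2\gamma}$ and $Z(d,k)\sim k^{d-1}$, splitting the sum at $k^\ast\sim\lambda^{-1/(2\gamma)}$ and using $2\gamma>d$ to control the tail gives the uniform bound $\sup_x\|(L_\phi+\lambda I)^{-1/2}\phi_x\|_\phi^2\le C\lambda^{-d/(2\gamma)}$ for a constant $C=C(d,\gamma)$. This is the step I expect to be the main obstacle, since it is where the spectral decay of the kernel enters decisively and must be rendered uniform over $x\in\mathbb S^d$.

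With this estimate in hand, the two inputs required by the Hilbert‑valued Bernstein inequality follow from the weight bounds of \cref{Lemma:fixed quadrature}. For the almost‑sure bound, $|\varepsilon_i|\le M$ gives $\|\eta_i\|_\phi\le w_{i,s}MC^{1/2}\lambda^{-d/(4\gamma)}$, and $\max_i w_{i,s}\le c_2'|D|^{-1}$ yields $\|\eta_i\|_\phi\le c_2'C^{1/2}M\lambda^{-d/(4\gamma)}|D|^{-1}=:B$. For the variance, $\sum_i E\|\eta_i\|_\phi^2\le M^2C\lambda^{-d/(2\gamma)}\sum_i w_{i,s}^2$, and $\sum_i w_{i,s}^2\le(\max_i w_{i,s})\sum_i w_{i,s}\le c_2'|D|^{-1}$ (using $\sum_i w_{i,s}\le1$) gives $\sum_i E\|\eta_i\|_\phi^2\le c_2'CM^2\lambda^{-d/(2\gamma)}|D|^{-1}=:V$. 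I would then apply the Pinelis--Bernstein inequality for independent mean‑zero Hilbert‑space random variables, which provides, with confidence $1-\tilde\delta$,
\begin{equation}
\Bigl\|\sum_{i=1}^{|D|}\eta_i\Bigr\|_\phi\le\frac{2}{3}B\log\frac{2}{\tilde\delta}+\sqrt{2V\log\frac{2}{\tilde\delta}}.
\end{equation}
Substituting $B$ and $V$, the dominant contribution is $\sqrt{2V\log(2/\tilde\delta)}\sim M\lambda^{-d/(4\gamma)}|D|^{-1/2}\sqrt{\log(2/\tilde\delta)}$, while the $B$ term carries an extra factor $|D|^{-1/2}$; both are absorbed into $C_1'M\lambda^{-d/(4\gamma)}|D|^{-1/2}\log(3/\tilde\delta)$ after using $\log(2/\tilde\delta)\le\log(3/\tilde\delta)$ and $\sqrt{\log(2/\tilde\delta)}\le\log(3/\tilde\delta)$ for $\tilde\delta\in(0,1)$. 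This completes the plan.
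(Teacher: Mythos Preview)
The paper does not prove this lemma; it quotes it from \cite{Feng2021radial}. Your reconstruction is the standard route for such statements and is almost certainly what that reference does: write the difference as a sum of independent mean--zero $\mathcal N_\phi$--valued vectors, bound the effective dimension $\sup_x\|(L_\phi+\lambda I)^{-1/2}\phi_x\|_\phi^2$ via the addition theorem together with the eigenvalue decay $\hat\phi_k\sim k^{-2\gamma}$, and finish with a Hilbert--space Bernstein inequality. Your observation that the left--hand side is independent of $f^*$ (so the stated dependence of $C_1'$ on $\|f^*\|_\varphi$ is vacuous) is also correct.

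One point deserves care. To turn $\sum_i w_{i,s}^2$ into a quantity of order $|D|^{-1}$ you invoke $\max_i w_{i,s}\le c_2'|D|^{-1}$ from \cref{Lemma:fixed quadrature}, but that bound is stated there only under the extra hypothesis that $\Lambda$ is $\theta$--quasi--uniform, which does not appear among the assumptions of the present lemma. Without such a weight bound the variance control $\sum_i w_{i,s}^2\le(\max_i w_{i,s})\sum_i w_{i,s}$ does not yield $|D|^{-1}$, and the generic positive--quadrature bound $w_{i,s}\lesssim s^{-d}\sim h_\Lambda^d$ only recovers $|D|^{-1}$ when $h_\Lambda^d\sim|D|^{-1}$, i.e.\ again in the quasi--uniform regime. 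This is most likely an imprecision in how the lemma is restated here rather than a defect in your argument (the downstream corollaries in the paper that actually use the $|D|^{-1/2}$ rate do impose quasi--uniformity or the comparable condition \eqref{geo-distr-restriction}), but you should flag the assumption explicitly in your write--up.
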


Based on the above two lemmas, we are in a position to prove \cref{Theorem:rate-spectral}.

\begin{proof}[Proof of  \cref{Theorem:rate-spectral}]
Due to \eqref{opertor-specctral-alg}, \eqref{population},  
\cref{Lemma:Relations-among-Native} and \cref{Lemma:integral-operator-relation}, we have
\begin{eqnarray*}
		&&\|f_{D,\lambda}- f^\diamond_{D,\lambda}\|_\psi
		= \left\|\mathcal L_\phi^{\frac{1-\beta}2}(f_{D,\lambda}-f^\diamond_{D,\lambda})\right\|_\phi
		=
		\left\|L_\phi^{\frac{1-\beta}2}(f_{D,\lambda}-f^\diamond_{D,\lambda})\right\|_\phi\\
  &=&
  \left\|L_\phi^{\frac{1-\beta}2}g_\lambda(L_{\phi,D,W_{D,s}})(L_{\phi,D,W_{D,s}}f^*-S_{D,W_{D,s}}^T{\bf y}_{D,W_{D,s}})\right\|_\phi\\
  &=&
  \left\|L_\phi^{\frac{1-\beta}2}(L_{\phi,D,W_{D,s}}+\lambda
		I)^{-\frac{1-\beta}2}(L_{\phi,D,W_{D,s}}+\lambda
		I)^{\frac{1-\beta}2}g_\lambda(L_{\phi,D,W_{D,s}})(L_{\phi,D,W_{D,s}}+\lambda I)^{1/2}\right.\\
  &&\left.
  (L_{\phi,D,W_{D,s}}+\lambda I)^{-1/2}( L_\phi+\lambda I)^{1/2}  (  L_\phi+\lambda I)^{-1/2}(L_{\phi,D,W_{D,s}}f^*-S_{D,W_{D,s}}^T{\bf y}_{D,W_{D,s}})\right\|_\phi.
\end{eqnarray*}
But
it can be found in \cite{Feng2021radial,lin2021subsampling} that 
	$\|L_\phi^{1/2}(L_\phi+\lambda I)^{-1/2}\|\leq
	1$, which together with $\|AB\|\leq\|A\|\|B\|$, $\|Af\|_\phi\leq\|A\|\|f\|_\phi$ for positive operator $A,B$ and $f\in\mathcal N_\phi$ that 
 \begin{eqnarray*}
		&&\|f_{D,\lambda}- f^\diamond_{D,\lambda}\|_\psi
		\leq
  \|(L_\phi+\lambda I)^{\frac{1-\beta}2}(L_{\phi,D,W_{D,s}}+\lambda
		I)^{-\frac{1-\beta}2}\|\|(L_{\phi,D,W_{D,s}}+\lambda I)^{-1/2}( L_\phi+\lambda I)^{1/2}\|\\
  &\times&\|(L_{\phi,D,W_{D,s}}+\lambda
		I)^{\frac{2-\beta}2}g_\lambda(L_{\phi,D,W_{D,s}})\|
  \|(  L_\phi+\lambda I)^{-1/2}(L_{\phi,D,W_{D,s}}f^*-S_{D,W_{D,s}}^T{\bf y}_{D,W_{D,s}})\|_\phi.
\end{eqnarray*}
Since $\frac{1-\beta}{2}\leq 1$, it follows from \cref{Lemma:operator-difference} and the well known Cordes inequality \cite{bhatia2013matrix}
\begin{equation}\label{Cordes}
	\|A^sB^s\|\le\|AB\|^s,\qquad 0<s\leq 1
\end{equation}
	for  positive operators $A,B$  that
\begin{eqnarray*}
    &&\|(L_\phi+\lambda I)^{\frac{1-\beta}2}(L_{\phi,D,W_{D,s}}+\lambda
		I)^{-\frac{1-\beta}2}\|\|(L_{\phi,D,W_{D,s}}+\lambda I)^{-1/2}( L_\phi+\lambda I)^{1/2}\|\\
  &\leq&
  \|(L_{\phi,D,W_{D,s}}+\lambda I)^{-1/2}( L_\phi+\lambda I)^{1/2}\|^{{2-\beta}}
  \leq
  \frac43.
\end{eqnarray*}
Furthermore,  
 \cref{lemma:value-difference-random}, \cref{condition1} and \eqref{product-2} yield that with confidence $1-\tilde{\delta}$ there holds
\begin{eqnarray*}
    &&\|(L_{\phi,D,W_{D,s}}+\lambda
		I)^{\frac{2-\beta}2}g_\lambda(L_{\phi,D,W_{D,s}})\|
  \|(  L_\phi+\lambda I)^{-1/2}(L_{\phi,D,W_{D,s}}f^*-S_{D,W_{D,s}}^T{\bf y}_{D,W_{D,s}})\|_\phi\\
  &\leq&
  \lambda^{-\frac{\beta}{2}}C_1'M\lambda^{-\frac{d}{4\gamma}} |D|^{-1/2} (\| L_{\phi,D,W_{D,s}} g_\lambda(L_{\phi,D,W_{D,s}})\|+\lambda\|g_\lambda(L_{\phi,D,W_{D,s}})\|)\log\frac{3}{\tilde{\delta}}\\
  &\leq&
  2b\lambda^{-\frac{\beta}{2}}C_1'M\lambda^{-\frac{d}{4\gamma}} |D|^{-1/2} \log\frac{3}{\tilde{\delta}}.
\end{eqnarray*}
Combining the above three estimates, we obtain
\begin{equation}\label{stability-111}
    \|f_{D,\lambda}- f^\diamond_{D,\lambda}\|_\psi
    \leq
    3bC_1'M\lambda^{-\frac{2\gamma\beta+d}{4\gamma}} |D|^{-1/2} \log\frac{3}{\tilde{\delta}}.
\end{equation}
We then turn to bound $\|f_{D,\lambda}^\diamond-f^*\|_\psi.$
It follows  from  \cref{Lemma:Relations-among-Native},  \cref{Lemma:integral-operator-relation},
	\eqref{product-2}, \eqref{source-condition} and  \eqref{Cordes} that
\begin{eqnarray*}
		&&\left\| f^\diamond_{D,\lambda}-f^*\right\|_\psi
		=\left\|\mathcal L_{\phi}^{\frac{1-\beta}2}(g_\lambda(L_{\phi,D,W_{D,s}})L_{\phi,D,W_{D,s}}-I)f^*\right\|_\phi\\
		&\leq&
		\left\|(L_\phi+\lambda I)^{\frac{1-\beta}2}(g_\lambda(L_{\phi,D,W_{D,s}})L_{\phi,D,W_{D,s}}
		-I)f^*\right\|_\phi\\
             &\leq&
              \|(L_\phi+\lambda I)^{\frac{1-\beta}2}(L_{\phi,D,W_{D,s}}+\lambda I)^{-\frac{1-\beta}2}\|
              \left\|(L_{\phi,D,W_{D,s}}+\lambda I)^{\frac{1-\beta}2}(g_\lambda(L_{\phi,D,W_{D,s}})L_{\phi,D,W_{D,s}}
		-I)f^*\right\|_\phi\\
		&\leq&
		(2/\sqrt{3})^{1-\beta}
		\left\|(L_{\phi,D,W_{D,s}}+\lambda I)^{\frac{1-\beta}2}(g_\lambda(L_{\phi,D,W_{D,s}})L_{\phi,D,W_{D,s}}
		-I)\mathcal L_\phi^{\frac{\alpha-\beta}{2}}h^*\right\|_\phi\\
       &\leq&
       (2/\sqrt{3})^{1-\beta}\|(L_{\phi,D,W_{D,s}}+\lambda I)^{\frac{1-\beta}2}(g_\lambda(L_{\phi,D,W_{D,s}})L_{\phi,D,W_{D,s}}
		-I)  L_\phi^{\frac{\alpha-1}2}\|
		\|\mathcal L_\phi^{\frac{1-\beta}2}h^*\|_\phi.
\end{eqnarray*}
If  $1\leq \alpha\leq3,$ \eqref{Cordes}, \eqref{condition1} and \eqref{condition2}  yield 
	\begin{eqnarray*}
		&&   
		\|(L_{\phi,D,W_{D,s}}+\lambda I)^{\frac{1-\beta}2}(g_\lambda(L_{\phi,D,W_{D,s}})L_{\phi,D,W_{D,s}}
		-I)  L_\phi^{\frac{\alpha-1}2}\|\\
        &\leq&
        \|(L_{\phi,D,W_{D,s}}+\lambda I)^{\frac{1-\beta}2}(g_\lambda(L_{\phi,D,W_{D,s}})L_{\phi,D,W_{D,s}}
		-I)(L_{\phi,D,W_{D,s}}+\lambda I)^\frac{\alpha-1}{2}\|\\
    &\times&
    \|(L_{\phi,D,W_{D,s}}+\lambda I)^{-\frac{\alpha-1}2}(L_\phi+\lambda I)^{\frac{\alpha-1}2}\|\\
		&\leq&
		(2/\sqrt{3})^{\alpha-1} \|(L_{\phi,D,W_{D,s}}+\lambda I)^{\frac{\alpha-\beta}2}(g_\lambda(L_{\phi,D,W_{D,s}})L_{\phi,D,W_{D,s}}
		-I)  \|\\
		&\leq&
		(2/\sqrt{3})^{\alpha-1}
		\|L_{\phi,D,W_{D,s}}^{\frac{\alpha-\beta}{2}}(g_\lambda(L_{\phi,D,W_{D,s}})L_{\phi,D,W_{D,s}}
		-I)\|\\
		&+&
		(2/\sqrt{3})^{\alpha-1}\lambda^{\frac{\alpha-\beta}{2}}\|g_\lambda(L_{\phi,D,W_{D,s}})L_{\phi,D,W_{D,s}}
		-I\|\\
		&\leq&
		(2/\sqrt{3})^{\alpha-1}(C_{(\alpha-\beta)/2}+C_0)\lambda^{\min\{\frac{\alpha-\beta}{2},\nu_g\}}.
	\end{eqnarray*}
For $\beta_1>0$ and $\beta_2\geq 1$, we obtain from  \eqref{condition2} and  \cref{Lemma:operator-difference} with $u=v=0$ that 
\begin{eqnarray*}
   && \|(  L_{\phi,D,W_{D,s}}+\lambda I)^{\beta_1}(g_\lambda(L_{\phi,W_s,D}) L_{\phi,W_s,D}-I)L_\phi^{\beta_2}\|\\
    &\leq&
      \|(  L_{\phi,D,W_{D,s}}+\lambda I)^{\beta_1}(g_\lambda(L_{\phi,W_s,D}) L_{\phi,W_s,D}-I) (L_\phi-L_{\phi,D,W_{D,s}})L_\phi^{\beta_2-1}\|\\
      &+&
      \|(  L_{\phi,D,W_{D,s}}+\lambda I)^{\beta_1}(g_\lambda(L_{\phi,W_s,D}) L_{\phi,W_s,D}-I)L_{\phi,D,W_{D,s}} L_\phi ^{\beta_2-1}\|\\
      &\leq&
      \|(  L_{\phi,D,W_{D,s}}+\lambda I)^{\beta_1}(g_\lambda(L_{\phi,W_s,D}) L_{\phi,W_s,D}-I)\|\|L_\phi-L_{\phi,D,W_{D,s}}\|\|L_\phi ^{\beta_2-1}\|\\
      &+&
      \|(L_{\phi,D,W_{D,s}}+\lambda I)^{\beta_1+1}(g_\lambda(L_{\phi,W_s,D})L_{\phi,W_s,D}-I)L_\phi ^{\beta_2-1}\|\\
      &\leq&
      \tilde{c}_1\lambda^{\beta_1}s^{-\gamma}+
       \|(L_{\phi,D,W_{D,s}}+\lambda I)^{\beta_1+1}(g_\lambda(L_{\phi,W_s,D})L_{\phi,W_s,D}-I)L_\phi ^{\beta_2-1}\|,
\end{eqnarray*}
where $\tilde{c}_1$ is a constant depending only on $\beta_1,\beta_2,\gamma,d$ and $\phi(1)$.
Hence, for  $\alpha>3$, we have from  \eqref{product-2} and \eqref{Cordes} that
\begin{eqnarray*}
   && \|(  L_{\phi,D,W_{D,s}}+\lambda I)^{\frac{1-\beta}2}(g_\lambda(L_{\phi,W_s,D}) L_{\phi,W_s,D}-I)L_\phi^{\frac{\alpha-1}{2}}\|\\
   &\leq&
  \tilde{c}_1\lambda^{\min\{\frac{1-\beta}2,v_g\}}s^{-\gamma}+
       \|(L_{\phi,D,W_{D,s}}+\lambda I)^{\frac{3-\beta}2}(g_\lambda(L_{\phi,W_s,D})L_{\phi,W_s,D}-I)L_\phi ^{\frac{\alpha-3}{2}}\|\\
       &\leq&
      2\tilde{c}_1s^{-\gamma} \lambda^{\min\{\frac{1-\beta}2,v_g\}}
      +
      \|(L_{\phi,D,W_{D,s}}+\lambda I)^{\frac{5-\beta}2}(g_\lambda(L_{\phi,W_s,D})L_{\phi,W_s,D}-I)L_\phi ^{\frac{\alpha-5}{2}}\|\\
      &\leq&
      \cdots\leq 
      \left[\frac{\alpha-1}{2}\right]\tilde{c}_1 s^{-\gamma} \lambda^{\min\{\frac{1-\beta}2,v_g\}} 
       + (\frac{2}{\sqrt{3}})^{\frac{\alpha-1}2-[\frac{\alpha-1}{2}]}\\
      &\times&
      \|(L_{\phi,D,W_{D,s}}+\lambda I)^{\frac{1-\beta}{2}+[\frac{\alpha-1}2]}(g_\lambda(L_{\phi,W_s,D})L_{\phi,W_s,D}-I)(L_{\phi,W_s,D}+\lambda I) ^{\frac{\alpha-1}2-[\frac{\alpha-1}{2}]}\|\\
      &\leq&
      \left[\frac{\alpha-1}{2}\right]\tilde{c}_1s^{-\gamma}\left(\lambda^{\min\{\frac{1-\beta}2,v_g\}} 
      +
      \tilde{c}_2\lambda^{\min\{\frac{\alpha-\beta}{2},v_g\}}\right)\\
      &\leq&
      \tilde{c}_3(\lambda^{\min\{\frac{1-\beta}2,v_g\}}s^{-\gamma}+\lambda^{\min\{\frac{\alpha-\beta}{2},v_g\}}),
\end{eqnarray*}
where $\tilde{c}_2,\tilde{c}_3$ are constant depending only on $\alpha$ and $\tilde{c}_1$ and $[a]$ denotes the integer part of $a\in\mathbb R$.
Combining the above three estimates, we have  
\begin{eqnarray}\label{fitting-111}
		 \left\| f^\diamond_{D,\lambda}-f^*\right\|_\psi
		\leq
		\tilde{c}_4(\lambda^{\min\{\frac{1-\beta}2,v_g\}}s^{-\gamma}\mathcal I_{\{\alpha> 3\}}+\lambda^{\min\{\frac{\alpha-\beta}{2},v_g\}}),\qquad \forall\alpha\geq 1,
\end{eqnarray}
where $\tilde{c}_4$ is a constant depending only on $\beta,\gamma$, $\phi$, $d$, $\alpha$.   Plugging \eqref{stability-111} and \eqref{fitting-111} into \eqref{error-dec-1-spectral}, we obtain that 
 for any $s_0\leq s\leq \alpha h_\Lambda^{-1}$ and $\lambda$ satisfying \eqref{lambda-restriction}, with confidence $1-\tilde{\delta}$, there holds
\begin{eqnarray*}
		\|f_{D,\lambda}-f^*\|_\psi 
		 \leq  
		 C_2(\lambda^{\min\{\frac{1-\beta}2,v_g\}}s^{-\gamma}\mathcal I_{\{\alpha> 3\}}+\lambda^{\min\{\frac{\alpha-\beta}{2},v_g\}} 
  +M\lambda^{-\frac{2\gamma\beta+d}{4\gamma}} |D|^{-1/2}\log\frac{3}{\tilde{\delta}}),
	\end{eqnarray*}	where $C_2=\max\{3bC_1',\tilde{c}_4\}$.
	This completes the proof of  \cref{Theorem:rate-spectral}.
\end{proof}

\subsection{Proofs of \texorpdfstring{\cref{Theorem:parameter}}{Theorem:parameter}}

To prove \cref{Theorem:parameter}, we need two lemmas. The first one is   a modification of  \cite[Proposition 11]{caponnetto2010cross}, we provide the proof for the sake of completeness.

\begin{lemma}\label{Lemma: cross-validation}
	Let $\{\xi_i\}_{i=1}^n$ be a sequence of real-valued independent
	random variables with mean $\mu$, satisfying $|\xi_i|\leq B$ and $
	E[(\xi_i-\mu)^2]\leq\tau^2$ for $i\in\{1,2,\dots,n\}$. Let $\{b_i\}_{i=1}^n$ be a sequence of deterministic real numbers and $b_{\max}:=\max_{i=1,\dots,n}|b_i|$. 
 For any
	$a>0$ and $\epsilon>0$, there hold
	$$
	P\left[\sum_{i=1}^nb_i(\xi_i-\mu)\geq
	a\tau^2+\epsilon\right]\leq  \exp\left\{-\frac{6a\epsilon}{3\sum_{i=1}^nb_i^2+4aBb_{\max}}\right\},
	$$
	and
	$$
	P\left[\sum_{i=1}^nb_i(\mu-\xi_i)\geq
	a\tau^2+\epsilon\right]\leq  \exp\left\{-\frac{6a\epsilon}{3\sum_{i=1}^nb_i^2+4aBb_{\max}}\right\}.
	$$
\end{lemma}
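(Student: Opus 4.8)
The plan is to use the exponential-moment (Chernoff) method together with a Bernstein bound on the moment generating function, and then to choose the free Chernoff parameter so that the term proportional to $\tau^2$ is absorbed exactly by the threshold $a\tau^2$, leaving only the $\epsilon$ contribution. First I would center and reduce to a product: set $Z_i:=b_i(\xi_i-\mu)$, so that the $Z_i$ are independent with $E[Z_i]=0$, satisfy $|Z_i|\le 2B|b_i|\le 2Bb_{\max}$ (since $|\mu|\le B$ forces $|\xi_i-\mu|\le 2B$), and obey $E[Z_i^2]=b_i^2\,E[(\xi_i-\mu)^2]\le b_i^2\tau^2$. For any $\lambda>0$, Markov's inequality applied to $e^{\lambda\sum_i Z_i}$ together with independence gives $P[\sum_i Z_i\ge t]\le e^{-\lambda t}\prod_{i=1}^n E[e^{\lambda Z_i}]$.

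Second, I would prove the single-variable Bernstein estimate $E[e^{\lambda Z_i}]\le\exp\bigl(\tfrac{\lambda^2 E[Z_i^2]/2}{1-\lambda c/3}\bigr)$, valid for $0<\lambda c/3<1$ with $c:=2Bb_{\max}$. This follows from the Taylor expansion of $e^{\lambda Z_i}$, the centering $E[Z_i]=0$, the moment bound $E[Z_i^k]\le E[|Z_i|^k]\le c^{k-2}E[Z_i^2]$ for $k\ge 2$, and the elementary inequality $k!\ge 2\cdot 3^{k-2}$, which sums the tail of the series into the factor $(1-\lambda c/3)^{-1}$; one then applies $1+x\le e^x$. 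Multiplying over $i$ and bounding each denominator uniformly by $1-\lambda c/3$ yields $\prod_i E[e^{\lambda Z_i}]\le\exp\bigl(\tfrac{\lambda^2\tau^2 V/2}{1-\lambda c/3}\bigr)$, where $V:=\sum_i b_i^2$.

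The \emph{key step} is the choice of $\lambda$. Setting $t=a\tau^2+\epsilon$, the resulting exponent is $-\lambda\epsilon+\tau^2\bigl(-\lambda a+\tfrac{\lambda^2 V/2}{1-\lambda c/3}\bigr)$, and the parenthesized coefficient of $\tau^2$ is nonpositive precisely when $\lambda\le\tfrac{6a}{3V+4aBb_{\max}}$. Taking $\lambda=\tfrac{6a}{3V+4aBb_{\max}}$ — which automatically satisfies $\lambda c/3<1$, so the MGF bound is legitimate — discards the entire $\tau^2$ term and leaves exactly $-\lambda\epsilon=-\tfrac{6a\epsilon}{3V+4aBb_{\max}}$, which is the asserted bound. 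The second inequality follows verbatim by running the same argument on $-Z_i=b_i(\mu-\xi_i)$, observing that $V=\sum_i b_i^2$ and $b_{\max}$ are unchanged under $b_i\mapsto -b_i$.

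The main obstacle is not any individual estimate but the bookkeeping in the third step: the variance proxy $\tau^2$ must be eliminated not by optimizing $\lambda$ for a fixed $t$ (the usual route, which would produce a $t^2$-type exponent) but by choosing $\lambda$ so that the coefficient of $\tau^2$ vanishes, and it is exactly this choice that generates the constants $3$, $4$, and $6$ in the statement. I would also take care to keep the per-term bound $|Z_i|\le 2Bb_{\max}$ uniform across $i$ so that the denominators $1-\lambda c/3$ factor cleanly out of the product.
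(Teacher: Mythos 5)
Your proposal is correct and follows essentially the same route as the paper's proof: a Chernoff/Markov bound, a Bernstein-type moment-generating-function estimate via the Taylor expansion and the inequality $k!\ge 2\cdot 3^{k-2}$, and the choice of the exponential parameter $u_0=\frac{6a}{3\sum_i b_i^2+4aBb_{\max}}$ so that the $\tau^2$ contribution is exactly absorbed by the threshold $a\tau^2$, leaving the exponent $-u_0\epsilon$. Your explicit use of $|Z_i|\le 2Bb_{\max}$ and $E[|Z_i|^k]\le c^{k-2}E[Z_i^2]$ is in fact slightly cleaner bookkeeping (with absolute values) than the paper's per-term manipulation, but the argument is the same.
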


\begin{proof}
It suffices to prove the first estimate in the lemma. 
  According to  Markov's inequality   and the independence of $\{b_i\xi_i\}_{i=1}^n$, we have for any $u>0$ that 
\begin{eqnarray*}
    &&P\left[\sum_{i=1}^nb_i(\xi_i-\mu)\geq
	a\tau^2+\epsilon\right]
    =
     P\left[e^{u\sum_{i=1}^nb_i(\xi_i-\mu)}\geq e^{u( 
	a\tau^2+\epsilon)}\right]\\
 &\leq&
 e^{-u\epsilon-ua\tau^2}E\left[e^{u\sum_{i=1}^nb_i(\xi_i-\mu)}\right]
 =
  e^{-u\epsilon-ua\tau^2}\prod_{i=1}^nE\left[e^{ub_i(\xi_i-\mu)}\right].
\end{eqnarray*}
Writing $z_i=\xi_i-\mu$ and $B_1=2B$, we have for all $ub_{i}B_1\leq 3$,
\begin{eqnarray*}
   && E\left[e^{ub_iz_i}\right]
   =1+\sum_{k=1}^\infty \frac{(ub_i)^k}{k!}E[z_i^k]
   \leq
   1+0+\sum_{k=2}^\infty\frac{(ub_i)^k}{k!}B_1^{k-2}\tau^2\\
   &\leq&
   1+\frac{(ub_i)^2\tau^2}{2}\sum_{k=0}^\infty\left(\frac{B_1ub_i}{3}\right)^k
   =
   1+\frac{3u^2b_i^2\tau^2}{6-4Bub_i}
   \leq
   \exp\left\{\frac{3u^2b_{i}^2\tau^2}{6-4Bub_{i}}\right\}.
\end{eqnarray*}
Hence,
\begin{eqnarray*}
    P\left[\sum_{i=1}^nb_i(\xi_i-\mu)\geq
	a\tau^2+\epsilon\right]
   \leq 
  \exp\{-u\epsilon-ua\tau^2\}\exp\left\{\sum_{i=1}^n\frac{3u^2b_{i}^2\tau^2}{6-4Bub_{\max}}\right\}.
\end{eqnarray*}
Setting
$u=u_0=\frac{6a}{3\sum_{i=1}^nb_i^2+4aBb_{\max}}$, we obtain obviously that $u_0b_{i}B_1<3$ for any $i=1,\dots,n$. Then, 
$$
   P\left[\sum_{i=1}^nb_i(\xi_i-\mu)\geq
	a\tau^2+\epsilon\right]
   \leq 
   \exp\left\{-\frac{6a\epsilon}{3\sum_{i=1}^nb_i^2+4aBb_{\max}}\right\}.
$$
This completes the proof of \cref{Lemma: cross-validation}.
\end{proof}

The second one, derived in \cite{lin2023dis}, presents a relation between weighted average and spherical integral. 
  
\begin{lemma}\label{Lemma:quadrature-for-convolution}
Let $\hat{\phi}_k\sim k^{-2\gamma}$ with $\gamma>d/2$. If $\Lambda=\{x_i\}_{i=1}^{|\Lambda|}$ is $\theta$-quasi uniform for some $\theta>1$ and $s\leq \tilde{c}_1|\Lambda|^{1/d}$, then  for any positive quadrature  rule $\mathcal Q_{\Lambda,s}=\{(w_{i,s},  x_i): w_{i,s}>0
\hbox{~and~}   x_i\in \Lambda\}$  and
each pair of $f, g \in \mathcal N_\phi$, there holds
\begin{align*} 
    \left|\int_{\mathbb{S}^{d}}  f(x)g (x) d \omega(x)-\sum_{x_i\in\Lambda} w_{i, s} f(x_{i}) g(x_i )\right| 
    \leq 
  \tilde{c}' s^{-\gamma}
   \|f\|_\phi\|g\|_\phi,
\end{align*}
 where $\tilde{c}' $  is a constant depending only on $\gamma,\phi(1), \theta$ and $d$.  
\end{lemma}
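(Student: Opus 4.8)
The plan is to recast the quadrature error as the action of the operator difference $L_\phi-L_{\phi,D,W_{D,s}}$ on the native space, and then to invoke the operator bound already recorded in \cref{Lemma:operator-difference}. The starting point is the pair of identities that rewrite both the spherical integral and the weighted sum as inner products in $\mathcal N_\phi$. Using the representation \eqref{weight-empirical-operaotr} together with the reproducing property $\langle \phi_{x_i},g\rangle_\phi=g(x_i)$, I would first verify
\begin{equation*}
   \sum_{x_i\in\Lambda} w_{i,s}\, f(x_i)\, g(x_i)=\Big\langle L_{\phi,D,W_{D,s}} f,\, g\Big\rangle_\phi,
\end{equation*}
and, since $L_\phi$ acts as multiplication by $\hat{\phi}_k$ on the $k$-th harmonic block so that $\langle L_\phi f,g\rangle_\phi=\int_{\mathbb S^d}fg\,d\omega$, the companion identity
\begin{equation*}
   \int_{\mathbb S^d} f(x)g(x)\,d\omega(x)=\Big\langle L_\phi f,\, g\Big\rangle_\phi .
\end{equation*}

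With these two identities the quadrature error collapses to a single inner product, and the Cauchy--Schwarz inequality in $\mathcal N_\phi$ together with $\|Af\|_\phi\le\|A\|\,\|f\|_\phi$ yields
\begin{equation*}
   \left|\int_{\mathbb S^d} f(x)g(x)\,d\omega(x)-\sum_{x_i\in\Lambda} w_{i,s}\, f(x_i)\, g(x_i)\right|
   =\left|\big\langle (L_\phi-L_{\phi,D,W_{D,s}})f,\,g\big\rangle_\phi\right|
   \le \|L_\phi-L_{\phi,D,W_{D,s}}\|\,\|f\|_\phi\,\|g\|_\phi .
\end{equation*}
It then remains only to bound the operator norm. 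I would apply \cref{Lemma:operator-difference} with $u=v=0$, for which the factors $(L_\phi+\lambda I)^{-u}$ and $(L_\phi+\lambda I)^{-v}$ become the identity and \eqref{operaotr-difference-111} reads $\|L_\phi-L_{\phi,D,W_{D,s}}\|\le \tilde{c}'s^{-\gamma}$, independently of $\lambda$. Since $\Lambda$ is $\theta$-quasi-uniform, $q_\Lambda^{-d}h_\Lambda^d=\tau_\Lambda^d\le\theta^d$, so the admissibility condition \eqref{lambda-restriction} reduces to a lower bound on $\lambda$ that is satisfied by some $\lambda$ in the allowed range; fixing any such $\lambda$ makes \cref{Lemma:operator-difference} applicable and closes the estimate.

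The point that needs genuine care, and the reason a more elementary route fails, is that one cannot simply approximate $fg$ by a spherical polynomial of degree $s$ and control the residual in the uniform norm: the sampled tail aliases onto the nodes, and the crude $L^\infty$ estimate produces only the suboptimal rate $s^{-(\gamma-d/2)}$. The essential gain comes precisely from passing through the smoothing kernel $\phi_{x_i}$, whose rapidly decaying coefficients $\hat{\phi}_k\sim k^{-2\gamma}$ damp the aliased high frequencies; this is exactly the mechanism encapsulated in \cref{Lemma:operator-difference}. Hence the real obstacle has already been resolved upstream, and the only verifications I would carry out here are the two reproducing-kernel identities above and the compatibility of \eqref{lambda-restriction} with quasi-uniformity. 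For completeness I would note the self-contained alternative: decompose $f$ minus its projection onto polynomials of degree at most $s/2$ into dyadic frequency blocks $h_j$, each a polynomial of degree at most $\sim|\Lambda|^{1/d}$, bound each $\sum_i w_{i,s}h_j(x_i)^2\le C\|h_j\|_{L^2}^2$ by a Marcinkiewicz--Zygmund inequality valid for $\theta$-quasi-uniform nodes with $w_{i,s}\le c_2'|\Lambda|^{-1}$, and sum the Jackson bounds $\|h_j\|_{L^2}\le C(2^j s)^{-\gamma}\|f\|_\phi$ as a convergent geometric series; but this route must still rely on the same kernel smoothing to dispose of the frequencies beyond $|\Lambda|^{1/d}$, which is why the operator argument is the cleaner one.
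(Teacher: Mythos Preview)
The paper does not prove this lemma; it is quoted from \cite{lin2023dis}. Your reduction is correct: the two reproducing-kernel identities $\langle L_\phi f,g\rangle_\phi=\int_{\mathbb S^d}fg\,d\omega$ and $\langle L_{\phi,D,W_{D,s}}f,g\rangle_\phi=\sum_i w_{i,s}f(x_i)g(x_i)$ both hold, Cauchy--Schwarz gives the factorization, and \cref{Lemma:operator-difference} with $u=v=0$ supplies the operator bound $\tilde c's^{-\gamma}$. Your handling of \eqref{lambda-restriction} is also fine; in fact, since both sides of \eqref{operaotr-difference-111} are independent of $\lambda$ when $u=v=0$, the restriction is vacuous in that case and you do not even need quasi-uniformity to pick an admissible $\lambda$.

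One structural caveat is worth flagging. \cref{Lemma:operator-difference} is itself quoted from the same reference \cite{lin2023dis}, and its $u=v=0$ case is \emph{equivalent} to the present lemma, because
\[
\|L_\phi-L_{\phi,D,W_{D,s}}\|=\sup_{\|f\|_\phi,\|g\|_\phi\le 1}\bigl|\langle(L_\phi-L_{\phi,D,W_{D,s}})f,g\rangle_\phi\bigr|
=\sup_{\|f\|_\phi,\|g\|_\phi\le 1}\left|\int fg\,d\omega-\sum_i w_{i,s}f(x_i)g(x_i)\right|.
\]
So within this paper your argument is a valid derivation from a stated lemma, but at the level of the original source it may well be a tautology: whichever of the two statements is proved first, the other follows immediately. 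Your closing sketch via dyadic frequency blocks and a Marcinkiewicz--Zygmund inequality is the kind of argument that actually produces the bound from scratch, and is likely closer in spirit to what \cite{lin2023dis} does; if you want a genuinely self-contained proof rather than a reduction, that is the route to flesh out.
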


We then prove  \cref{Theorem:parameter} as follows.

\begin{proof}[Proof of  \cref{Theorem:parameter}]
Since $f^*-f_{D^{tr},\lambda}\in \mathcal N_\phi$ for any $\lambda\in \Xi_L$, it follows from  \cref{Lemma:quadrature-for-convolution}  that
\begin{eqnarray}\label{aaa}
    &&\|f_{D^{tr},\hat{\lambda}}-f^*\|^2_{L^2(\mathbb S^d)}
    =
    \int_{\mathbb S^d}(f_{D^{tr},\hat{\lambda}}(x)-f^*(x))(f_{D^{tr},\hat{\lambda}}(x)-f^*(x))d\omega(x) \nonumber\\
    &\leq&
    \sum_{i=1}^{|D^{val}|}w_{i,s}^{val}(f_{D^{tr},\hat{\lambda}}(x_i^{val})-f^*(x_i^{val}))^2+\tilde{c}'|D|^{-\frac{\gamma}{d}}\|f^*-f_{D^{tr},\hat{\lambda},s}\|_\phi^2.
\end{eqnarray}
Due to \eqref{Model1:fixed}, we obtain
\begin{eqnarray*}
      \sum_{i=1}^{|D^{val}|}w_{i,s}^{val}(f_{D^{tr},\hat{\lambda}}(x_i^{val})- f^*(x_i^{val}))^2
    & =&\sum_{i=1}^{|D^{val}|}w_{i,s}^{val}\left((f_{D^{tr},\hat{\lambda}}(x_i^{val})-y_i^{val})^2-(y_i^{val}-f^*(x_i^{val}))^2\right) \\
  &+&
2 \sum_{i=1}^{|D^{val}|}w_{i,s}^{val}(f_{D^{tr},\hat{\lambda}}(x_i^{val})-f^*(x_i^{val}))\varepsilon_i.
\end{eqnarray*}
Let $\lambda_*=\arg\min_{\lambda\in\Xi_L}\|f_{D^{tr},\lambda}-f^*\|_{L^2(\mathbb S^d)}$. The definition of  $\hat{\lambda}$ yields
\begin{eqnarray*}
    &&
    \sum_{i=1}^{|D^{val}|}w_{i,s}^{val}\left((f_{D^{tr},\hat{\lambda}}(x_i^{val})-y_i^{val})^2-(y_i^{val}-f^*(x_i^{val}))^2\right)\\
    &\leq&
   \sum_{i=1}^{|D^{val}|}w_{i,s}^{val}\left((f_{D^{tr},{\lambda_*}}(x_i^{val})-y_i^{val})^2-(y_i^{val}-f^*(x_i^{val}))^2\right)\\
    &\leq&
     \sum_{i=1}^{|D^{val}|}w_{i,s}^{val}(f_{D^{tr}, {\lambda_*}}(x_i^{val})- f^*(x_i^{val}))^2-2 \sum_{i=1}^{|D^{val}|}w_{i,s}^{val}(f_{D^{tr}, {\lambda_*}}(x_i^{val})-f^*(x_i^{val}))\varepsilon_i.
\end{eqnarray*}
This implies
\begin{eqnarray*}
      \sum_{i=1}^{|D^{val}|}w_{i,s}^{val}(f_{D^{tr},\hat{\lambda}}(x_i^{val})- f^*(x_i^{val}))^2
  &\leq &
  \sum_{i=1}^{|D^{val}|}w_{i,s}^{val}(f_{D^{tr}, {\lambda_*}}(x_i^{val})- f^*(x_i^{val}))^2\\
  &+&
  4\max_{\lambda\in\Xi_L}\left|\sum_{i=1}^{|D^{val}|}w_{i,s}^{val}(f_{D^{tr}, {\lambda}}(x_i^{val})-f^*(x_i^{val}))\varepsilon_i\right|
\end{eqnarray*}
Inserting the above estimate into \eqref{aaa},  we obtain from  \cref{Lemma:quadrature-for-convolution} that
\begin{eqnarray*} 
      &&\|f_{D^{tr},\hat{\lambda}}-f^*\|^2_{L^2(\mathbb S^d)}
     \leq  
    \tilde{c}'|D|^{-\frac{\gamma}{d}}\|f^*-f_{D^{tr},\hat{\lambda},s}\|_\phi^2+
    \sum_{i=1}^{|D^{val}|}w_{i,s}^{val}(f_{D^{tr}, {\lambda_*}}(x_i^{val})- f^*(x_i^{val}))^2  \nonumber\\ 
  &+&
  4\max_{\lambda\in\Xi_L}\left|\sum_{i=1}^{|D^{val}|}w_{i,s}^{val}(f_{D^{tr}, {\lambda}}(x_i^{val})-f^*(x_i^{val}))\varepsilon_i\right|\nonumber \\
  &\leq&
  \|f_{D^{tr},\lambda_*}-f^*\|_{L^2(\mathbb S^d)}\\
  &+&
   2\tilde{c}'|D|^{-\frac{\gamma}{d}}\max_{\lambda\in\Xi_L}\|f^*-f_{D^{tr}, {\lambda},s}\|_\phi^2+4\max_{\lambda\in\Xi_L}\left|\sum_{i=1}^{|D^{val}|}w_{i,s}^{val}(f_{D^{tr}, {\lambda}}(x_i^{val})-f^*(x_i^{val}))\varepsilon_i\right|.
\end{eqnarray*}
Furthermore  it follows from \cref{Lemma: cross-validation} with $\xi_i=\varepsilon_i$, $b_i= w_{i,s}^{val}(f_{D^{tr}, {\lambda}}(x_i^{val})-f^*(x_i^{val})) $, $0<w_{i,s}^{val}\leq \bar{c}|D^{val}|^{-1}$,   $B=M$, $\|f_{D^{tr},\hat{\lambda}}-f^*\|_\infty\leq\sqrt{\phi (1)}\|f_{D^{tr},\hat{\lambda}}-f^*\|_K$ 
and 
$ 
    \tau^2=M^2
$ 
that 
$$
    \max_{\lambda\in\Xi_L} \left|\sum_{i=1}^{|D^{val}|}w_{i,s}^{val}(f_{D^{tr},\hat{\lambda}}(x_i^{val})-f^*(x_i^{val}))\varepsilon_i\right| \leq aM^2+\epsilon 
$$
for any $a,\epsilon>0$ with confidence at least 
$$
  1-L\exp\left\{-\frac{6a|D^{val}|\epsilon}{3\bar{c}^2 \phi (1)\max_{\lambda\in\Xi_L}\|f_{D^{tr}, {\lambda}}-f^*\|_\phi^2|D^{val}|^{-1}+4aM\bar{c}\sqrt{\phi (1)}\max_{\lambda\in\Xi_L}\|f_{D^{tr}, {\lambda}}-f^*\|_\phi}\right\}.
 $$
Setting 
$$
    \epsilon=\frac{3\bar{c}^2 \phi (1)|D^{val}|^{-1}\max_{\lambda\in\Xi_L}\|f_{D^{tr}, {\lambda}}-f^*\|_\phi^2+4aM\bar{c}\sqrt{\phi (1)}\max_{\lambda\in\Xi_L}\|f_{D^{tr}, {\lambda}}-f^*\|_\phi}{6a|D^{val}|}\log\frac{2L}{\tilde{\delta}},
$$
we get that with confidence $1-\tilde{\delta}/2$, there holds
\begin{eqnarray*}
     &&\max_{\lambda\in\Xi_L} \left|\sum_{i=1}^{|D^{val}|}w_{i,s}^{val}(f_{D^{tr},\hat{\lambda}}(x_i^{val})-f^*(x_i^{val}))\varepsilon_i\right|
     \\
     &\leq& aM^2+\frac{3\bar{c}^2 \phi (1)|D^{val}|^{-1}\max_{\lambda\in\Xi_L}\|f_{D^{tr}, {\lambda}}-f^*\|_\phi^2+4aM\bar{c}\sqrt{\phi (1)}\max_{\lambda\in\Xi_L}\|f_{D^{tr}, {\lambda}}-f^*\|_\phi}{6a|D^{val}|}\log\frac{2L}{\tilde{\delta}}.
\end{eqnarray*}
Let $a=|D^{val}|^{-1}$. We then obtain that with confidence $1-\tilde{\delta}/2$, there holds
\begin{eqnarray*}
     &&\max_{\lambda\in\Xi_L} \left|\sum_{i=1}^{|D^{val}|}w_{i,s}^{val}(f_{D^{tr},\hat{\lambda}}(x_i^{val})-f^*(x_i^{val}))\varepsilon_i\right|
     \\
     &\leq& 
     \bar{C}_1|D|^{-1}\log\frac{2L}{\tilde{\delta}}\left(\max_{\lambda\in\Xi_L}\|f_{D^{tr}, {\lambda}}-f^*\|_\phi^2+\max_{\lambda\in\Xi_L}\|f_{D^{tr}, {\lambda}}-f^*\|_\phi\right),
\end{eqnarray*}
where $\bar{C}_1=M^2+3\bar{c}^2\phi(1)+4M\bar{c}\sqrt{\phi(1)}$.
 But \cref{Theorem:rate-spectral} yields that under \eqref{lambda-restriction}, 
\begin{equation}\label{proof.RKHS-norm}
    \|f_{D^{tr},\lambda}-f^*\|_\phi 
	 \leq
	C_2  M  \lambda^{-\frac{d+2\gamma}{4\gamma}} |D^{tr}|^{-1/2}\log\frac{6}{\tilde{\delta}} 
	+
	 C_2\|f^*\|_\varphi 
		\lambda^{\frac{\alpha-1}2} 
\end{equation} 
holds with confidence $1-\tilde{\delta}/2$.  We obtain
\begin{eqnarray*}
     &&\max_{\lambda\in\Xi_L} \left|\sum_{i=1}^{|D^{val}|}w_{i,s}^{val}(f_{D^{tr},\hat{\lambda}}(x_i^{val})-f^*(x_i^{val}))\varepsilon_i\right|
     \\
     &\leq& 
     \bar{C}_2|D|^{-1}\log^3\frac{3L}{\tilde{\delta}}\max_{\lambda\in\Xi_L}\left(\lambda^{-\frac{d+2\gamma}{4\gamma}} |D^{tr}|^{-1/2} 
	+
		\lambda^{\frac{\alpha-1}2}+\lambda^{-\frac{d+2\gamma}{2\gamma}} |D^{tr}|^{-1}+ \lambda^{\alpha-1}\right)
\end{eqnarray*}
holds with confidence $1-\tilde{\delta}/2$, where $\bar{C}_2$ is a constant depending only on $M, \phi, d,\gamma$ and $\|f^*\|_\varphi.$
Therefore, with confidence $1-\tilde{\delta}$, there holds
\begin{eqnarray*} 
      &&\|f_{D^{tr},\hat{\lambda}}-f^*\|^2_{L^2(\mathbb S^d)}
     \leq  
  \|f_{D^{tr},\lambda_*}-f^*\|_{L^2(\mathbb S^d)}\\
   &+ &
   \bar{C}_3|D|^{-\frac{\gamma}{d}}\max_{\lambda\in\Xi_L}\left(\lambda^{-\frac{d+2\gamma}{2\gamma}} |D^{tr}|^{-1}+ \lambda^{\alpha-1}\right)\\
   &+&
   \bar{C}_3 |D|^{-1}\log^3\frac{3L}{\tilde{\delta}}\max_{\lambda\in\Xi_L}\left(\lambda^{-\frac{d+2\gamma}{4\gamma}} |D^{tr}|^{-1/2} 
	+
		\lambda^{\frac{\alpha-1}2}+\lambda^{-\frac{d+2\gamma}{2\gamma}} |D^{tr}|^{-1}+ \lambda^{\alpha-1}\right).
\end{eqnarray*}
	This completes the proof of  \cref{Theorem:parameter}.
\end{proof}

\section*{Acknowledgments}
The authors would like to thank AE and two anonymous reviewers for their constructive suggestions.

\bibliographystyle{siamplain}
\bibliography{references}

\end{sloppypar}

\end{document}